\documentclass{article}
\usepackage{iclr2027_conference,times}

\usepackage{amsmath,amsfonts,bm}

\def\eqref#1{equation~\ref{#1}}

\def\1{\bm{1}}

\DeclareMathAlphabet{\mathsfit}{\encodingdefault}{\sfdefault}{m}{sl}
\SetMathAlphabet{\mathsfit}{bold}{\encodingdefault}{\sfdefault}{bx}{n}

\usepackage{hyperref}
\usepackage{url}
\usepackage{xspace}
\usepackage{nicefrac}
\usepackage{booktabs}
\usepackage{multirow}
\usepackage{xcolor}
\usepackage{colortbl}
\usepackage{adjustbox}
\usepackage{wrapfig}

\newcommand{\mcerr}[1]{{%
  \fontsize{6}{7}\selectfont
  \color{black!70}%
  \ensuremath{\,\pm}#1%
}}

\newcommand{\mcnum}[2]{%
  \makebox[2.3em][r]{#1}\mcerr{#2}%
}

\newcommand{\mcbest}[2]{%
  \mcnum{\textbf{#1}}{#2}%
}

\newcommand{\mcsecond}[2]{%
  \mcnum{\underline{#1}}{#2}%
}

\RequirePackage{titletoc}
\RequirePackage{placeins}
\definecolor{MemCoTOCBlue}{RGB}{0,66,130}
\titlecontents{memcosection}[1.7em]
  {\addvspace{4.5pt}\bfseries}
  {\contentslabel{1.7em}}{}
  {\hfill\contentspage}
\titlecontents{memcosubsection}[3.8em]
  {\addvspace{1pt}}
  {\contentslabel{2.5em}}{}
  {\titlerule*[0.6pc]{.}\contentspage}
\titlecontents{memcosubsubsection}[5.7em]
  {\addvspace{0.3pt}}
  {\contentslabel{3.5em}}{}
  {\titlerule*[0.6pc]{.}\contentspage}
\newcommand{\MemCoAppendixContents}{%
  \clearpage
  \startcontents[memcoappendix]%
  \begingroup
  \hypersetup{linkcolor=MemCoTOCBlue,linktoc=all}%
  \pdfbookmark[0]{Appendix Contents}{memco.appendix.contents}%
  \section*{Appendix Contents}
  \small
  \printcontents[memcoappendix]{memco}{1}{\setcounter{tocdepth}{2}}%
  \endgroup
  \clearpage
}

\usepackage{amsmath}
\usepackage{amssymb}
\usepackage{amsthm}
\theoremstyle{plain}
\newtheorem{theorem}{Theorem}[section]
\newtheorem{proposition}[theorem]{Proposition}

\newtheorem{corollary}[theorem]{Corollary}
\theoremstyle{definition}

\theoremstyle{remark}

\usepackage{array}
\usepackage{tabularx}

\providecommand{\modelname}{\texttt{MemCo}\xspace}

\hypersetup{
  pdftitle={MemCo: Memory-Centric Collaboration for Generalizing LLM Agents to Unseen Environments},
  pdfauthor={Xinting Liao, Siyan Liu, Rabab K. Ward, Holger R. Roth, Xiaoxiao Li}
}

\makeatletter
\newcommand{\MemCoArxivTitle}{%
  \begingroup
  \setlength{\parindent}{0pt}%
  \setlength{\parskip}{0pt}%
  \vspace*{-0.43in}%
  \noindent\makebox[\textwidth][c]{%
    \begin{minipage}{6.5in}
      \hspace*{0.13in}%
      \includegraphics[trim=30 20 30 15,clip,height=0.66in]{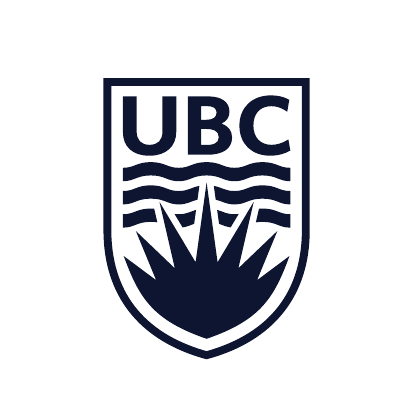}%
      \hspace{0.58in}%
      \includegraphics[height=0.7in]{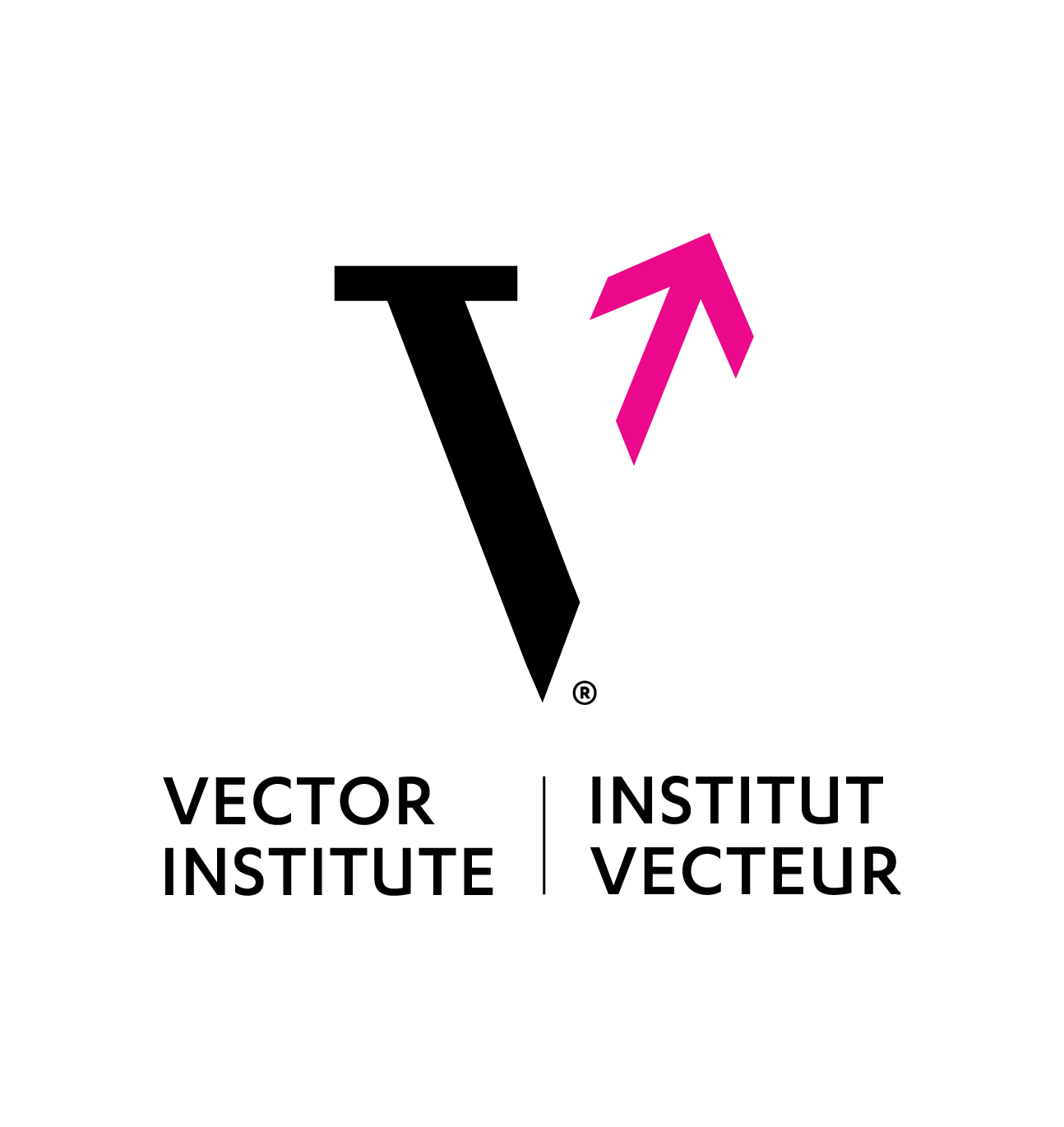}%
      \hspace{0.38in}%
      \includegraphics[height=0.67in]{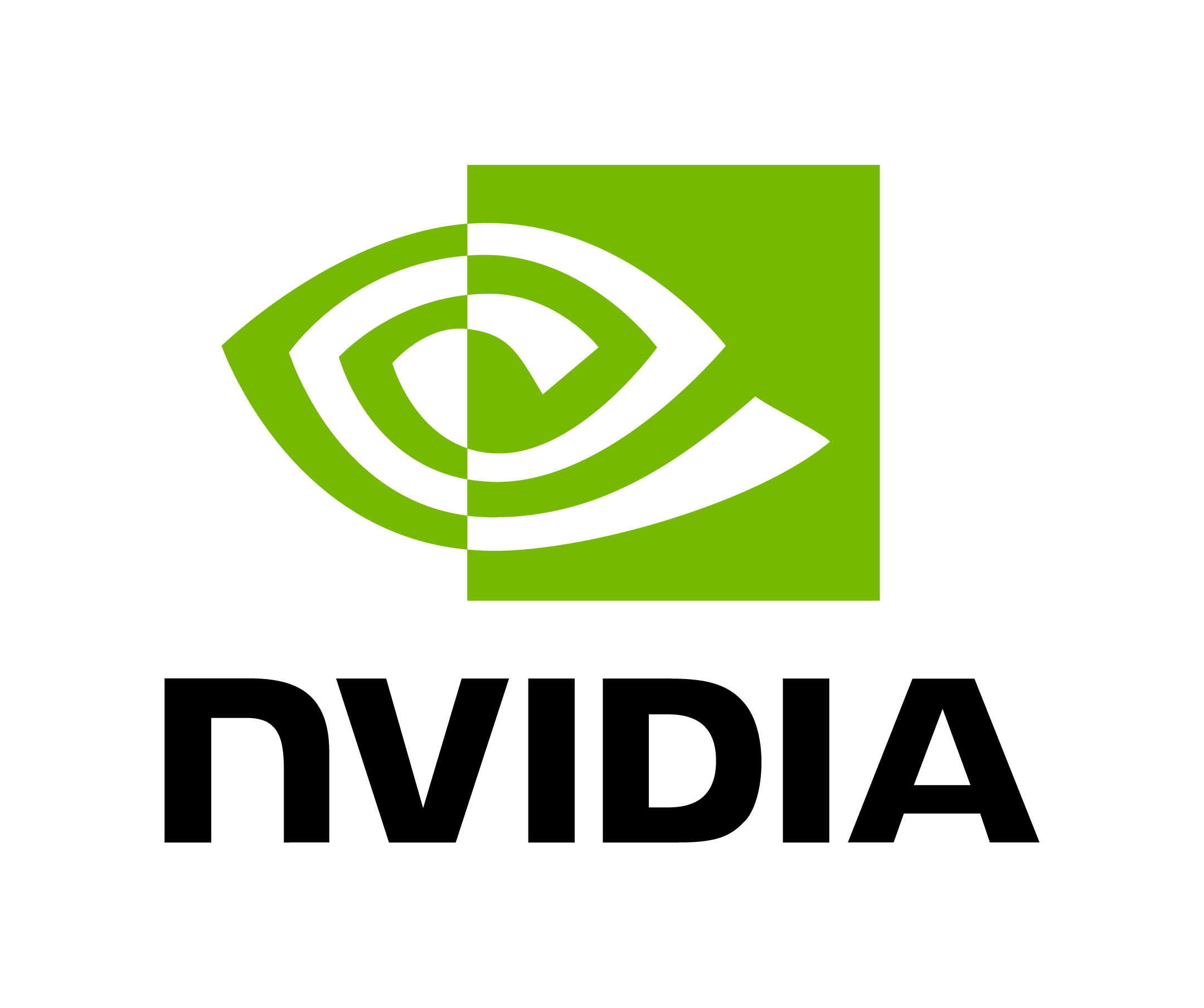}%
      \par
      \hrule height 0.35pt
    \end{minipage}%
  }\par
  \vspace{0.75in}%
  {\centering
    {\fontsize{25}{30}\selectfont
      \makebox[\textwidth][c]{%
        \textbf{\texttt{MemCo}:}\hspace{0.5em}%
        \scalebox{1.227}[1]{Memory-Centric Collaboration}%
      }\\
      \makebox[\textwidth][c]{\scalebox{1.227}[1]{for Generalizing LLM Agents to}}\\
      \makebox[\textwidth][c]{\scalebox{1.227}[1]{Unseen Environments}}\par
    }
    \vspace{0.23in}%
    \makebox[\textwidth][c]{%
      \resizebox{0.98\textwidth}{!}{\bfseries
        Xinting Liao\textsuperscript{1,2,$\dagger$},
        Siyan Liu\textsuperscript{1,$\dagger$},
        Rabab K. Ward\textsuperscript{1},
        Holger R. Roth\textsuperscript{3,*},
        Xiaoxiao Li\textsuperscript{1,2,*}%
      }%
    }\par
    \vspace{0.19in}%
    {\large
      \textsuperscript{1}The University of British Columbia, Vancouver, BC, Canada\par
      \textsuperscript{2}Vector Institute, Toronto, ON, Canada\par
      \textsuperscript{3}NVIDIA, Santa Clara, CA, USA\par
    }
    \vspace{0.23in}%
    {\normalsize \textsuperscript{$\dagger$}Equal contribution.\par}
    {\large
      \textsuperscript{*}Corresponding authors:
      \href{mailto:holger@nvidia.com}{holger@nvidia.com},
      \href{mailto:xiaoxiao.li@ece.ubc.ca}{xiaoxiao.li@ece.ubc.ca}\par
    }
  }%
  \endgroup
}
\makeatother

\renewenvironment{abstract}{%
  \vspace{0.264in}%
  \begin{center}\bfseries Abstract\end{center}%
  \vspace{-0.87em}%
  \begin{list}{}{%
    \setlength{\leftmargin}{0.375in}%
    \setlength{\rightmargin}{0.375in}%
  }%
  \item\hspace*{12.5pt}\relax
}{%
  \end{list}%
  \vspace{1ex}%
}

\iclrfinalcopy

\begin{document}
\pagestyle{plain}
\MemCoArxivTitle

\begin{abstract}
Large language model (LLM) agents increasingly operate in interactive environments, where they need to make sequential decisions through observation, action, and feedback.
Although memory can help agents reuse experience, existing
work designs memory in isolation, where collecting enough trajectories to populate it is expensive.
Existing shared-memory approaches mitigate isolated experience by pooling episodic memories across tasks and environments.
However, retrieving shared memory is challenged by the granularity, where retrieved memories can be either too specific to preserve current grounding or too coarse to support the next action.
In this work, we propose \modelname, a memory-centric collaboration framework for generalizing LLM agents to unseen interactive environments.
It maintains complementary local and global memory spaces, preserving environment-specific details locally while promoting transferable workflows induced from local trajectories to global memory.
During online interaction, \modelname routes relevant local and global memories in terms of the agent's current state and decision phase, enabling agents to reuse the experience of other agents without blindly transferring environment-specific details.
Experiments on interactive decision-making benchmarks show that \modelname improves task success and reduces redundant exploration compared with isolate-memory and shared-memory baselines.
Our codes are available at \url{https://github.com/SYannL/nvdamas}.
\end{abstract}
\clearpage

\section{Introduction}

Large language model (LLM) agents are increasingly applied to interact with
environments, where they make decisions based on the current state and execute
actions toward the task goal.
Retrieving episodic memories allows agents to reuse interaction experience at
inference time~\citep{cao2025remember,fang2025memp,liang2026learning}.
Methods such as Reflexion~\citep{shinn2023reflexion},
MemRL~\citep{zhang2026memrl}, and A-Mem~\citep{xu2025amem}
improve how individual agents organize and reuse prior experience.
However, each agent still needs to collect useful trajectories through its own
interactions, while limited experience may provide incomplete or unreliable
guidance~\citep{kwon2025embodied}.

When multiple agents operate independently in different environments,
experience acquired by one agent provides a natural reference for others.
As illustrated in Fig.~\ref{fig:teaser}(a), agents can preserve experience
from their own environments while sharing reusable experience that may support
other agents, including those executing tasks in unseen environments, i.e.,
environments they have not previously interacted with.
This motivates us to ask:
\textit{how can one agent's experience help other agents and vice versa?}
Existing methods such as G-Memory~\citep{zhang2025g},
PlugMem~\citep{yang2026plugmem}, and MCMA~\citep{liang2026learning}
organize or transfer interaction experience across tasks or agents,
demonstrating the potential of sharing experience.
As shown in Fig.~\ref{fig:teaser}(b,c), G-Memory improves over
in-context learning (ICL) and agents with isolated memory, yet still leaves
room for improving task success and interaction efficiency, particularly on
held-out layouts.
Yet shared experience is not inherently useful across environments or
decision contexts.
This raises a crucial challenge:
\textit{how can shared experience be organized into useful memory and
effectively used to support another agent's current decision?}

To address this challenge, we propose \modelname{}, a
\texttt{Mem}ory-centric \texttt{Co}llaboration framework with complementary
local and global memory spaces.
As shown in Fig.~\ref{fig:main}, \modelname{} contains three main modules.
In each agent, \textit{local memory construction} converts interaction
trajectories into a local memory graph, preserving environment-specific
context while extracting experience patterns, actionable knowledge, and
failure reflections.
To form useful shared memory, \textit{progressive global memory collaboration}
abstracts selected local records while retaining their applicability conditions,
aggregates evidence across agents, and promotes sufficiently supported records
to global memory.
To translate memory into effective decision guidance,
\textit{query-guided memory support} retrieves local and global memories
according to the current decision context, filters incompatible guidance, and
grounds shared experience in the current environment before composing memory
support for the next decision.
Thus, \modelname{} enables agents to benefit from experience accumulated by
others without blindly transferring source-environment-specific information,
while preserving the local grounding required for task execution in unseen
environments.

\begin{figure*}[t]
    \centering
    \includegraphics[width=0.9\textwidth]
    {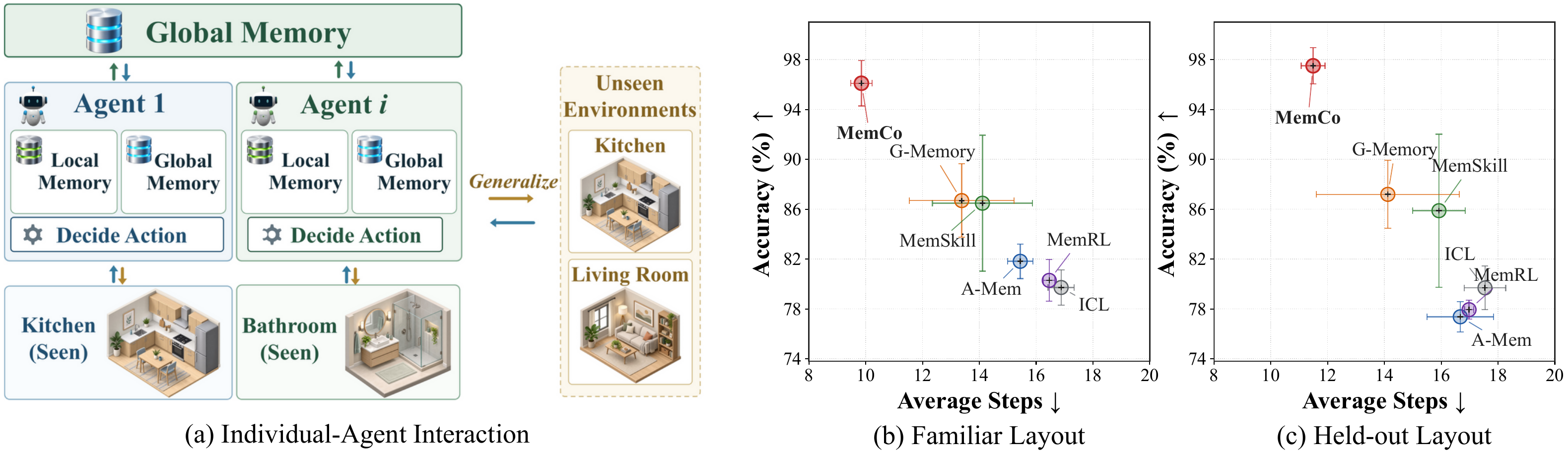}
    \vspace{-8pt}
    \caption{
    (a) Memory-centric collaboration among independently acting agents:
    each agent preserves environment-specific local experience while
    contributing reusable experience to global memory for use across environments.
    (b) and (c) ALFWorld performance on familiar and held-out layouts,
    respectively, with Qwen3-32B.
    Markers and error bars denote the mean and standard deviation.
    }
    \label{fig:teaser}
    \vspace{-8pt}
\end{figure*}

Our main contributions are:
(1) We introduce memory-centric collaboration for agents interacting with environments, where experience accumulated across different environments can be
shared to reduce repeated exploration and support task execution in unseen
environments. 
(2) We develop \modelname{} with complementary local and global memory,
turning shared experience into useful memory through evidence-aware global
promotion and into decision guidance related to execution state.
(3) Experiments across four benchmarks
(ALFWorld, PDDL, FEVER, and ScienceWorld) and multiple LLM backbones show
improved task performance with generally fewer interaction steps.

\section{Related Work}

\noindent\textbf{Individual Experience and Agent Memory.}
Early agent paradigms such as ReAct~\citep{yao2022react} rely on the current
interaction context without persistent experience across trials.
Long-term memory systems subsequently support persistent storage and retrieval
across interactions
~\citep{park2023generative,packer2023memgpt,zhong2024memorybank,
chhikara2025mem0,wang2025mirix,kwon2025embodied}.
For interactive agents, prior work further turns trajectories into reflections,
skills, procedural knowledge, or adaptive memory
~\citep{shinn2023reflexion,wang2023voyager,zhu2023ghost,ding2026skillgen,
fang2025memp,cao2025remember,wu2025meta,xu2025amem,nan2025nemori,
zhang2026memrl,zhang2026memskill}.
These methods mainly improve how experience accumulated within an individual
agent or memory system is represented and reused.

\noindent\textbf{Agent Collaboration through Shared Experience.}
Beyond communication, agent selection, or execution-time orchestration
~\citep{wu2024autogen,jia2025agentstore}, agents can collaborate by reusing
experience acquired in prior interactions.
G-Memory~\citep{zhang2025g} organizes cross-trial collaboration memory within
multi-agent systems, while MATM~\citep{kim2026multiagenttransactivememory}
supports population-level sharing of producer-agent trajectories.
Agent KB~\citep{tang2025agentkbleveragingcrossdomain} enables structured
experience reuse across heterogeneous agent frameworks, whereas
MemCollab~\citep{chang2026memcollabcrossmodelmemorycollaboration} and
FedAgentKE~\citep{li2026fedagentkefederatedsemanticknowledge} construct
shared knowledge across different models or agent frameworks.
Collectively, these works demonstrate the value of sharing agent experience.
For agents interacting with different environments, however, shared experience
is not uniformly useful: its value can depend on the receiving environment and
current decision.
\modelname{} addresses this setting by preserving local grounding while sharing
reusable experience across agents.

\noindent\textbf{Constructing and Using Shared Memory.}
Prior work studies how reusable experience can be represented as transferable
skills, rules, workflows, or structured memories
~\citep{ding2026skillgen,fang2025memp,cao2025remember,wu2025meta,
xiangflowsearcher,yang2026plugmem,liang2026learning,xia2026memora},
as well as how stored memory can be selectively used at inference time
~\citep{liu2025rcr,zhao2026shardmemo}.
\modelname{} jointly addresses both stages through evidence-aware global
promotion and query-guided local and global memory use, while retaining the local
grounding needed for execution.
A more comprehensive discussion is provided in
Appendix~\ref{appendix:related_work}.

\section{Methodology}
\label{sec:method}

\subsection{Problem Formulation}

\begin{figure*}[t]
\centering
\includegraphics[width=\linewidth]{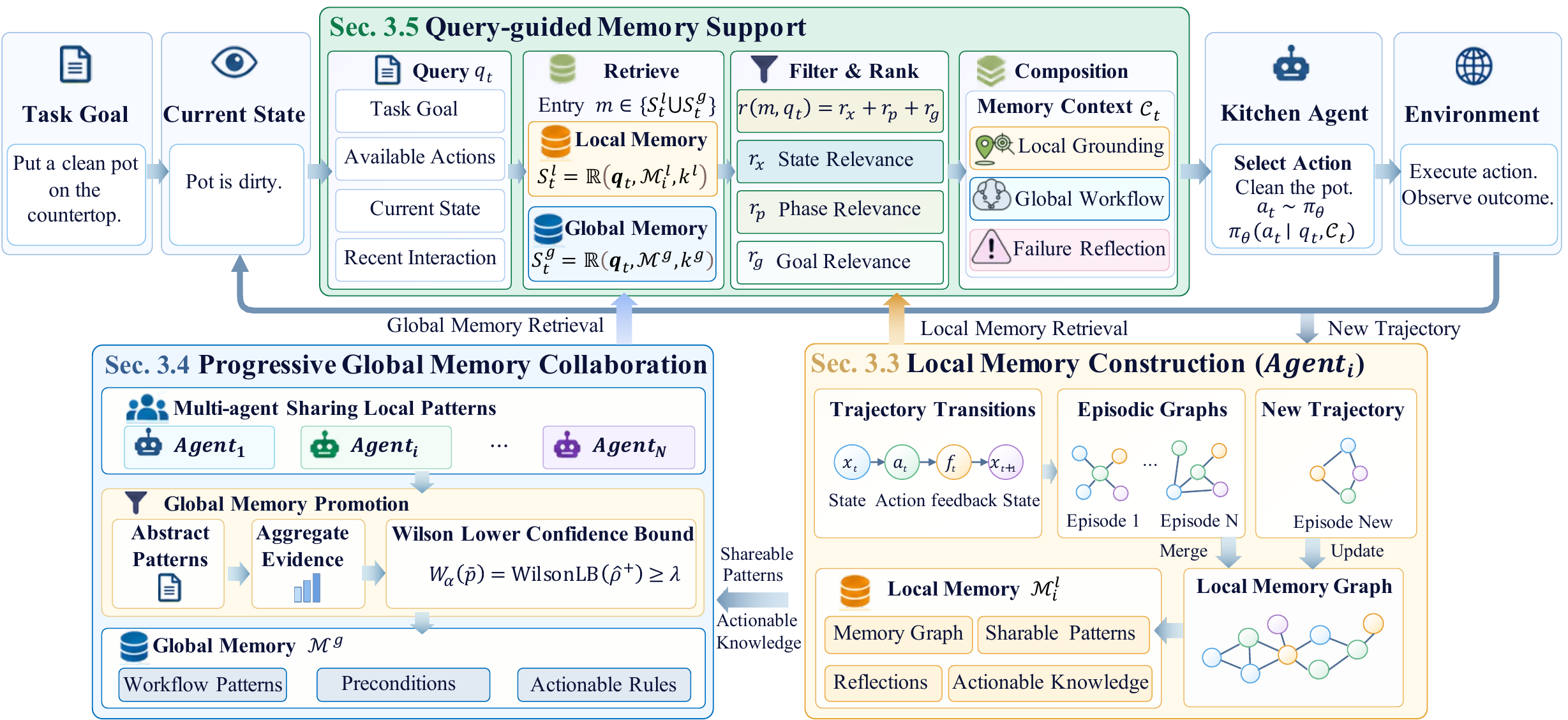}
\caption{Overview of \modelname.
\textit{Local memory construction} organizes environment-specific interaction
experience and extracts reusable records.
\textit{Progressive global memory collaboration} aggregates evidence across
agents and promotes supported reusable records to global memory.
\textit{Query-guided memory support} grounds and composes local and global
memory for the current decision.}
\vspace{-0.3cm}
\label{fig:main}
\end{figure*}

Among $N$ agents, each agent $i$ independently interacts with an environment
using an LLM-based policy model $\pi_{\theta}$.
At step $t$, the agent receives a task goal
$\boldsymbol{g}$, a state observation $\boldsymbol{x}_t$, and an available
action set $\mathcal{A}_t$.
It selects an action $\boldsymbol{a}_t \in \mathcal{A}_t$, and the environment
returns feedback $\boldsymbol{f}_t$ indicating state progress, pause, failure,
or success.
An episode trajectory is
$\tau=
\{(\boldsymbol{x}_t, \boldsymbol{a}_t, \boldsymbol{f}_t,
\boldsymbol{x}_{t+1})\}_{t=1}^{T}$.
Agent $i$ accumulates experience from environments it has interacted with, but
is expected to execute tasks in \emph{unseen environments}, i.e., environments
it has not interacted with before evaluation.
The next action is selected using context $\mathcal C_t$ composed from memory
available to the agent:
\begin{equation}
\boldsymbol a_t\sim\pi_\theta
(\boldsymbol a_t\mid\boldsymbol g,\boldsymbol x_t,\mathcal A_t,\mathcal C_t).
\label{eq:memco-action}
\end{equation}

We aim to organize experience accumulated across agents into reusable memory
and selectively use it to support independent task execution in seen and unseen
environments.

\subsection{Framework Overview}

As in Fig.~\ref{fig:main}, \modelname{} maintains local memory
$\mathcal{M}_i^l$ for environment-specific experience and global memory
$\mathcal{M}^g$ for reusable experience shared across agents.
The framework addresses two stages: constructing memory from accumulated
experience and using it for the current decision.
For memory construction, \textit{local memory construction} organizes each
agent's episodes into a graph and extracts patterns, actionable knowledge, and
reflections, while \textit{progressive global memory collaboration} abstracts
selected shareable records, aggregates evidence across agents, and promotes
supported records to global memory.
For decision-time use, \textit{query-guided memory support} selectively
retrieves and composes relevant local and global memories into context
$\mathcal C_t$ for the current decision.

At step $t$, agent $i$ selects an action according to
Eq.~\ref{eq:memco-action} and uses environment feedback to form the next query,
repeating this process until the episode terminates.
During memory acquisition, completed episodes update local memory and provide
selected records and evidence for global promotion.
Subsequent queries can access the updated memories, enabling agents to benefit
from one another's experience while executing their own tasks independently.
The algorithm details are in
Appendix~\ref{sec:implementation-algorithms}.

\subsection{Local Memory Construction}
\label{sec:3.3}

Local memory construction organizes an agent's interaction episodes
into a graph that connects related state--action transitions and their
feedback across episodes.
The graph preserves the context in which individual experiences occur
and exposes recurring structures across episodes, from which
\modelname{} induces shareable records together with their
supporting evidence.
This provides environment-specific memory for the agent itself and
candidate records for cross-agent sharing.

For each episode $\tau_n$ of agent $i$, \modelname{} constructs an episodic
graph $G_{\tau_n}=(V_{\tau_n},E_{\tau_n})$ whose nodes
$v_t^x$, $v_t^a$, and $v_t^f$ store normalized observations, parsed
operations, and environment feedback, respectively.
Directed edges connect them as temporal transitions:
$v_t^x \rightarrow v_t^a \rightarrow v_t^f \rightarrow v_{t+1}^x$.
Across $N_i$ episodes of the same agent, nodes are matched when their
types and normalized contents agree, and edges are matched when their
relations and endpoints agree.
Merging these elements gives local memory graph
$G_i^l \gets \operatorname{LocalMerge}(\{G_{\tau_n}\}_{n=1}^{N_i})$
as detailed in Algo.~\ref{alg:memco-local-graph-merge-details}.
The graph and the record sets
$\mathcal P_i^l,\mathcal E_i^l,$ and $\mathcal R_i^l$
derived from the interaction episodes together form the local memory:
\begin{equation}
\mathcal M_i^l=
\left[
G_i^l,\mathcal P_i^l,\mathcal E_i^l,\mathcal R_i^l
\right],
\label{eq:local-memory}
\end{equation}
where $\mathcal P_i^l$ contains pattern records of workflows, action
preconditions, repairs, and failures,
$\mathcal E_i^l$ contains actionable-knowledge records, and
$\mathcal R_i^l$ contains concrete cases and reflection records.
We consolidate records of the same type and normalized content,
retaining occurrence-level statistics locally and per-episode annotations for
promotion.

A local record is shareable when
$p \in \mathcal{P}_i^l \cup \mathcal{E}_i^l$.
For each such record $p$, \modelname{} records supporting,
contradicting, and no-progress evidence from its observed occurrences.
Before global promotion, annotations associated with the same record,
source scope, and episode are deterministically reduced to at most one
signed vote.
The resulting $n_i^+(p)$ and $n_i^-(p)$ are the episode-reduced
supporting and contradicting counts used for promotion, while
$n_i^0(p)$ records episode-level stall markers and is kept separate
from the signed Wilson sample.
Detailed evidence assignment and reduction rules are provided in
Appendix~\ref{app:memco-local-details}.

\modelname{} updates the graph after each completed episode
$\tau_{\mathrm{new}}$ as
$G_i^l \gets \operatorname{LocalMerge}(\{G_i^l,G_{\tau_{\mathrm{new}}}\})$
and merges the newly extracted records and evidence into the existing
$\mathcal{P}_i^l$, $\mathcal{E}_i^l$, and $\mathcal{R}_i^l$.
Both $G_i^l$ and $\mathcal R_i^l$ remain local because they retain
environment-specific context.
Selected shareable records from $\mathcal P_i^l$ and $\mathcal E_i^l$,
together with their episode-reduced promotion counts, are shared as they
become available.

\subsection{Progressive Global Memory Collaboration}
\label{sec:3.4}

Progressive global memory collaboration turns selected local experience into
reusable memory shared across agents.
Each agent abstracts and shares selected local records with their
episode-reduced promotion counts, while keeping fine-grained trajectories,
local memory graphs, and instance-specific reflections local.
\modelname{} aggregates evidence across agents and promotes supported abstract
records into global memory for subsequent queries.

For a shareable local record
$p\in\mathcal P_i^l\cup\mathcal E_i^l$,
\modelname{} constructs its abstract form
$\bar p=\operatorname{Abstract}(p)$ by replacing concrete entities where
supported by the benchmark-specific task-role mapping and removing episode-
and scene-specific anchors
(Algo.~\ref{alg:memco-abstract-details}).
This preserves the record type, applicability conditions,
action sequence or relation, and outcome.
Records without removable environment-specific anchors can retain their
structured form, while records without a transferable global form remain local.
For example, a kettle's current location remains local, while a
fill-then-heat procedure can be shared using object and tool roles.

Abstract records across agents share the same abstract key when their
record types, applicability conditions, action sequences or relations,
and outcomes match exactly
(Appendix~\ref{app:memco-merge-example}).
This key determines which local records contribute to the same abstract record.
The episode-reduced counts of local records mapping to the same
abstract record are aggregated as
{
\small
\begin{equation}
n^{\omega}(\bar p)
=
\sum_{\substack{
i\in [N],p':
\operatorname{Abstract}(p')=\bar p}}
n_i^{\omega}(p'),
\qquad
\omega\in\{+,-\}.
\label{eq:promotion-count-aggregation}
\end{equation}
}

Before promotion, fixed type-specific filters exclude incomplete or
uninformative abstract records.
Promotion further depends on how consistently the aggregated evidence
supports each retained record.
Let $s(\bar p)=n^+(\bar p)+n^-(\bar p)$ denote the signed episode-level
evidence size and 
$\rho^+(\bar p)=\nicefrac{n^+(\bar p)}{\max\{s(\bar p),1\}}$
its empirical supporting rate for promotion.

However, finite trajectories may provide sparse evidence, so a high
empirical supporting rate can arise from only a few episode-level votes.
We therefore use the one-sided Wilson lower confidence bound of $\rho^+(\bar p)$, i.e., $W(\bar p)$, as a
finite-evidence promotion score that accounts for both the empirical
supporting rate and evidence size~\citep{wilson1927,brown2001}.
Its statistical interpretation and assumptions are discussed in
Appendix~\ref{app:wilson-analysis}.
Let $z=\Phi^{-1}(1-\alpha)$, where $\alpha\in(0,1/2)$ is the one-sided
tail probability and $\Phi$ is the standard normal cumulative
distribution function.
We set $W(\bar p)=0$ if $s(\bar p)=0$; otherwise,
\begin{equation}
W(\bar p)=
\frac{
s(\bar p)\rho^+(\bar p)+\frac{z^2}{2}
-z\sqrt{
s(\bar p)\rho^+(\bar p)(1-\rho^+(\bar p))
+\frac{z^2}{4}
}}
{s(\bar p)+z^2}.
\label{eq:wilson}
\end{equation}

The following result characterizes the finite-evidence correction.

\begin{theorem}[Wilson finite-evidence correction]
\label{thm:wilson-error-bound}
For $s(\bar p)>0$ and fixed $z>0$, the correction
$\epsilon_W(\bar p)=\rho^+(\bar p)-W(\bar p)$ satisfies
\begin{equation}
0 \leq \epsilon_W(\bar p)
=
z\sqrt{\frac{W(\bar p)(1-W(\bar p))}{s(\bar p)}}
\leq
\frac{z}{2\sqrt{s(\bar p)}} .
\label{eq:wilson-correction}
\end{equation}
Thus, Wilson promotion discounts empirical support under finite evidence,
with the maximal correction vanishing uniformly as
$O(s(\bar p)^{-1/2})$.
\end{theorem}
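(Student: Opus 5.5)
The plan is to characterize $W=W(\bar p)$ as a root of the quadratic that defines the Wilson score interval, and then read off all three claims. Write $s=s(\bar p)>0$ and $\rho=\rho^+(\bar p)\in[0,1]$. The Wilson interval endpoints are the solutions $w$ of $(\rho-w)^2 = z^2 w(1-w)/s$, which expands to
\begin{equation*}
(s+z^2)w^2 - (2s\rho + z^2)w + s\rho^2 = 0 .
\end{equation*}
First I will verify by the quadratic formula that the smaller root equals the expression in Eq.~\ref{eq:wilson}. The discriminant is $(2s\rho+z^2)^2-4(s+z^2)s\rho^2 = 4z^2\bigl(s\rho(1-\rho)+z^2/4\bigr)$, so the smaller root is
\begin{equation*}
\frac{s\rho+z^2/2 - z\sqrt{s\rho(1-\rho)+z^2/4}}{s+z^2},
\end{equation*}
which is exactly $W$. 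Hence $W$ satisfies $(\rho-W)^2 = z^2W(1-W)/s$. This step is routine algebra.

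Second, I will establish $0\le W\le \rho$, so that taking square roots gives the stated equality with the correct sign.

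The product of the roots is $s\rho^2/(s+z^2)\ge 0$ and their sum is positive, so both roots are nonnegative; in particular $W\ge 0$. Moreover $W(1-W)=s(\rho-W)^2/z^2\ge 0$, which gives $W\le 1$.

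For $W\le\rho$, I will evaluate the quadratic $q(w)$ at $w=\rho$: $q(\rho)=z^2\rho^2 - z^2\rho = -z^2\rho(1-\rho)\le 0$. Since the leading coefficient is positive, $\rho$ lies between the two roots, so the smaller root satisfies $W\le\rho$. This sign argument is the main point needing care, especially the edge cases $\rho\in\{0,1\}$. When $\rho=0$, the roots are $0$ and $z^2/(s+z^2)$, so $W=0=\rho$. When $\rho=1$, the roots are $s/(s+z^2)$ and $1$, so $W=s/(s+z^2)\le 1=\rho$.

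With $0\le W\le\rho$ in hand, $\epsilon_W=\rho-W\ge 0$, and taking the nonnegative square root of $(\rho-W)^2=z^2W(1-W)/s$ yields
\begin{equation*}
\epsilon_W = z\sqrt{\frac{W(1-W)}{s}} .
\end{equation*}

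Finally, because $W\in[0,1]$, we have $W(1-W)\le 1/4$, which gives $\epsilon_W\le z/(2\sqrt{s})$. This bound is independent of $\rho$, so the correction vanishes uniformly as $O(s^{-1/2})$ for fixed $z$.
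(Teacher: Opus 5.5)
Your proof is correct and follows the paper's route: identify $W$ as the smaller root of $(s+z^2)w^2-(2s\rho+z^2)w+s\rho^2=0$, rearrange to $s(\rho-W)^2=z^2W(1-W)$, take the nonnegative square root using $W\le\rho$, and finish with $W(1-W)\le 1/4$. You also justify $0\le W\le 1$ via Vieta and $W\le\rho$ via $q(\rho)=-z^2\rho(1-\rho)\le 0$, both of which the paper asserts without proof from $W$ being the lower endpoint.
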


The proof and exact threshold and sample-size implications are given in
Appendix~\ref{app:memco-theory}.

After the fixed type-specific filters above, global memory is initialized
as $\mathcal M^g=\varnothing$ and updated by merging abstract records whose
Wilson score passes threshold $\lambda$:
\begin{equation}
\mathcal M^g\gets
\operatorname{GlobalMerge}
\left(
\mathcal M^g,
\{\bar p:W(\bar p)\geq\lambda\}
\right),
\quad 0<\lambda<1.
\label{eq:global-exact-merge}
\end{equation}
$\operatorname{GlobalMerge}(\cdot)$ maintains one global record per abstract
key, inserting new records or replacing the stored evidence with the latest
aggregated counts
(Algo.~\ref{alg:memco-global-merge-details}).
New local records or updated evidence trigger reaggregation and reassessment
of the affected abstract records.
Existing global records that no longer satisfy the preceding filters or
threshold are removed.
These updates become available to subsequent queries, allowing each agent to
use experience contributed by other agents while executing its own task
independently.

\subsection{Query-guided Memory Support}
\label{sec:3.5}

To translate stored experience into support for the next decision,
\modelname{} combines environment-specific local memory with reusable global
memory accumulated across agents.
The selected memory must be relevant to the current decision and grounded in
the environment the agent is currently interacting with.
At step $t$, the query
$\boldsymbol q_t=\mathcal Q(
    \boldsymbol g,
    \boldsymbol x_t,
    \mathcal A_t,
    \tau_{<t}
)$
captures the goal, current observation, available actions, and task
progress inferred from the preceding interaction history $\tau_{<t}$.
This query guides both memory selection and composition.

We retrieve two sets of memory entries from local and global sources separately:
\begin{equation}
\mathcal S_t^l
=
\mathbb R
\left(
\boldsymbol q_t,\mathcal M_i^l,k^l
\right),
\qquad
\mathcal S_t^g
=
\mathbb R
\left(
\boldsymbol q_t,\mathcal M^g,k^g
\right),
\label{eq:two-streams}
\end{equation}
where $\mathbb R(\cdot)$ is specified in
Algo.~\ref{alg:memco-bounded-selection}.
The local output $\mathcal S_t^l$ is selected from records in
$\mathcal P_i^l$, $\mathcal E_i^l$, and $\mathcal R_i^l$, together with
bounded transition entries from query-matched states in $G_i^l$.
These entries provide bounded local structural context for assessing workflow
or rule applicability.
The global output $\mathcal S_t^g$ is selected from the promoted records in
$\mathcal M^g$.
Both outputs obey their respective cardinality caps.

Before selection, each memory entry $m$ in the source-specific retrieval pool
is scored along three complementary dimensions: state, task phase, and goal:
\begin{equation}
r(m,\boldsymbol q_t)
=
r_x(m,\boldsymbol q_t)
+
r_p(m,\boldsymbol q_t)
+
r_g(m,\boldsymbol q_t),
\label{eq:memco-relevance}
\end{equation}
where $r_x$ measures consistency with the current state and available
actions, $r_p$ measures compatibility with task progress, and $r_g$
measures the alignment of object, destination, or tool roles with
the goal. 

Selection distinguishes applicability constraints from ranking
preferences.
Entries are filtered when their required conditions contradict the
current state or their actions cannot be matched to $\mathcal A_t$
when such matching is required.
The remaining entries are ranked using joint relevance, stored
evidence, and action compatibility.
We reduce priority when stale local information and
other soft conflicts occur.
Selection retains at most $k^l$ local and $k^g$ global entries.
When a retrieval pool exceeds its budget, strong matches along
individual relevance dimensions are retained before the remaining
positions are filled by rank, preserving complementary support
within the limit
(Algo.~\ref{alg:memco-bounded-selection}).

Composition connects shared knowledge to its local use by grounding
global records against $\boldsymbol x_t$ and $\mathcal A_t$,
with local evidence supplying environment-specific context where
available.
For example, a shared clean-before-place workflow specifies action
order, while local evidence identifies a usable basin for cleaning
the current object
(Appendix~\ref{app:memco-retrieval-example}).
The selected entries are organized by their roles in the current
decision:
\textit{grounding} provides evidence for interpreting the current state,
graph relations, and concrete action references;
\textit{workflow guidance} provides plans, procedures, and preconditions;
and \textit{reflection} provides failure patterns, repairs, and blocked
actions.
%
For each support usage role 
$h\in\mathcal H=
\{\mathrm{ground},\mathrm{workflow},\mathrm{reflection}\}$,
let $\mathcal U_t^h$ contain the entries assigned to that role.
Role-aware composition and query-guided textual rendering
produce the final memory context:
\begin{align}
\{\mathcal U_t^h\}_{h\in\mathcal H}
&=
\operatorname{Compose}
(\mathcal S_t^l,\mathcal S_t^g
\mid\boldsymbol q_t,\mathcal A_t),
\label{eq:memory-composition}\\
\mathcal C_t
&=
\operatorname{Render}
(\boldsymbol q_t,\{\mathcal U_t^h\}_{h\in\mathcal H})
=
[c_t^{\mathrm{ground}},
c_t^{\mathrm{workflow}},
c_t^{\mathrm{reflection}}],
\label{eq:adaptive-context}
\end{align}
where $c_t^h$ is the final textual context, functions
 $\operatorname{Render}(\cdot)$ and $\operatorname{Compose}(\cdot)$ are defined in Algo.~\ref{alg:compose}-\ref{alg:render}.
%
The resulting $\mathcal C_t$ conditions next-action prediction
in Eq.~\ref{eq:memco-action}.
After each action, new observations and feedback update the history
and the next query, allowing memory support to follow task progress.

\section{Experiments}

\noindent\textbf{Benchmarks.} We evaluate \modelname on four benchmarks that necessitate multi-step decision-making across episodes: (1) \textbf{ALFWorld} \cite{shridhar2020alfworld} aligns high-level language instructions with low-level robotic actions, challenging agents to reason and act within simulated environments; (2) \textbf{PDDL} \cite{ma2024agentboard} evaluates symbolic planning by requiring agents to generate valid action sequences that achieve specified goals under domain constraints; (3) \textbf{FEVER} \cite{thorne2018fever} evaluates large-scale fact verification, requiring agents to retrieve and reason over external evidence to classify claims as \textsc{Supports}, \textsc{Refutes}, or \textsc{Not Enough Info} (\textsc{NEI}); and (4) \textbf{ScienceWorld} \cite{wang2022scienceworld} is a comprehensive interactive benchmark for elementary science experiments. 

\noindent\textbf{Baselines.} We compare \modelname against several representative baselines: (1) \textbf{In-Context Learning} (ICL), using direct in-context prompting without a persistent memory bank; (2) \textbf{A-Mem} \cite{xu2025amem}, a specialized long-term memory architecture; (3) \textbf{MemRL} \cite{zhang2026memrl}, which utilizes reinforcement learning to enable self-evolution via a value-based decision process; (4) \textbf{MemSkill} \cite{zhang2026memskill}, which reframes memory management as a set of learnable natural-language skills; and (5) \textbf{G-Memory} \cite{zhang2025g}, a hierarchical graph-based memory for multi-agent systems that incorporates interactions and insights.

\noindent\textbf{Implementation Details.}
We implement \modelname using AutoGen~\cite{wu2024autogen} and evaluate open- and closed-source models. We use $\alpha=0.05$ and $\lambda=0.35$ for promotion across all benchmarks. Open-source model inference is conducted on a single NVIDIA RTX PRO 6000 Blackwell GPU. Both local and global retrievers use deterministic bounded selection with facet coverage and rank-based filling, i.e., $k^{\ell}=k^{g}=5$. Unless otherwise specified, we report accuracy (Acc., \%) and the average number of interaction steps during evaluation. For each benchmark, memory is constructed from designated source partitions and evaluated on disjoint tasks, with no evaluation trajectories used for memory construction. We use one agent per source domain: four agents for ALFWorld and PDDL, two for FEVER, and ten for ScienceWorld. Each agent is evaluated on the merged test set using its own frozen local memory and the shared global memory, covering both its acquisition domain and other environments. ALFWorld's Familiar and Held-out splits additionally assess layout generalization. Full partitioning and construction details
are in Appendix~\ref{appendix:benchmark}.

\subsection{Experimental Analysis}
\begin{table*}[t]
\centering
\caption{Main results on FEVER, PDDL, and ALFWorld (mean $\pm$ standard
deviation over three runs). \textbf{Bold} and \underline{underlined} indicate the best and second-best results within each backbone, respectively.}
\label{tab:memory-comparison-simplified}

\begingroup
\fontsize{7.5}{9}\selectfont
\setlength{\tabcolsep}{2pt}
\renewcommand{\arraystretch}{1.12}

\begin{adjustbox}{max width=0.92\textwidth}
\begin{tabular}{ll*{8}{c}}
\toprule
\multirow{2}{*}{\textbf{Backbone}}
& \multirow{2}{*}{\textbf{Method}}
& \multicolumn{2}{c}{\textbf{FEVER}}
& \multicolumn{2}{c}{\textbf{PDDL}}
& \multicolumn{4}{c}{\textbf{ALFWorld}} \\

\cmidrule(lr){3-4}
\cmidrule(lr){5-6}
\cmidrule(lr){7-8}
\cmidrule(lr){9-10}

& & Acc. $\uparrow$ & Steps $\downarrow$
  & Acc. $\uparrow$ & Steps $\downarrow$
  & Familiar Acc. $\uparrow$ & Familiar Steps $\downarrow$
& Held-out Acc. $\uparrow$ & Held-out Steps $\downarrow$ \\
\midrule

\multirow{6}{*}{\textbf{Qwen3-4B}}
& ICL
& \mcnum{56.78}{1.47} & \mcnum{4.23}{0.52}
& \mcnum{44.44}{3.47} & \mcnum{29.19}{0.14}
& \mcnum{60.28}{4.73} & \mcnum{20.43}{0.18}
& \mcnum{46.31}{3.79} & \mcnum{22.45}{0.74} \\

& A-Mem
& \mcnum{54.24}{2.24} & \mcnum{4.94}{1.99}
& \mcnum{37.22}{8.42} & \mcnum{29.17}{0.31}
& \mcnum{58.66}{8.24} & \mcnum{19.88}{0.93}
& \mcnum{45.60}{3.57} & \mcnum{22.14}{0.32} \\

& MemSkill
& \mcsecond{59.89}{4.35} & \mcsecond{3.41}{0.22}
& \mcnum{40.97}{3.49} & \mcnum{29.31}{0.07}
& \mcnum{61.90}{6.66} & \mcnum{20.00}{1.07}
& \mcsecond{57.74}{2.72} & \mcsecond{20.28}{1.17} \\

& MemRL
& \mcnum{55.93}{1.70} & \mcnum{4.29}{0.31}
& \mcsecond{46.58}{5.18} & \mcbest{29.03}{0.10}
& \mcnum{59.31}{4.88} & \mcnum{20.60}{0.22}
& \mcnum{49.41}{0.54} & \mcnum{22.16}{0.43} \\

& G-Memory
& \mcnum{55.50}{8.99} & \mcnum{5.44}{4.37}
& \mcnum{39.38}{0.88} & \mcnum{29.18}{0.18}
& \mcsecond{62.23}{2.31} & \mcsecond{18.73}{0.61}
& \mcnum{53.81}{1.15} & \mcnum{20.57}{1.08} \\

\rowcolor{black!4}
& \textbf{\modelname}
& \mcbest{60.17}{1.20} & \mcbest{3.38}{0.48}
& \mcbest{53.01}{1.47} & \mcsecond{29.16}{0.21}
& \mcbest{84.21}{2.28} & \mcbest{13.33}{0.64}
& \mcbest{82.84}{2.65} & \mcbest{13.77}{0.71} \\

\midrule[0.35pt]

\multirow{6}{*}{\textbf{GPT-4o-mini}}
& ICL
& \mcsecond{63.00}{3.42} & \mcsecond{6.33}{1.10}
& \mcnum{57.78}{2.55} & \mcnum{28.97}{0.12}
& \mcnum{28.57}{3.08} & \mcnum{25.86}{0.32}
& \mcnum{35.71}{2.77} & \mcnum{25.53}{0.74} \\

& A-Mem
& \mcnum{61.30}{4.90} & \mcnum{7.02}{0.87}
& \mcnum{52.64}{2.77} & \mcnum{29.36}{0.12}
& \mcnum{23.05}{4.82} & \mcnum{26.62}{0.69}
& \mcnum{35.00}{2.40} & \mcnum{26.04}{0.75} \\

& MemSkill
& \mcnum{62.71}{3.06} & \mcnum{7.76}{0.21}
& \mcbest{59.03}{4.46} & \mcsecond{28.89}{0.17}
& \mcnum{16.92}{6.06} & \mcnum{27.75}{1.42}
& \mcnum{28.93}{4.43} & \mcnum{26.70}{1.05} \\

& MemRL
& \mcnum{62.71}{2.93} & \mcnum{6.57}{2.11}
& \mcnum{55.56}{2.77} & \mcnum{28.98}{0.04}
& \mcnum{28.24}{3.27} & \mcnum{25.74}{0.21}
& \mcnum{34.41}{0.65} & \mcnum{26.10}{0.30} \\

& G-Memory
& \mcnum{62.99}{6.36} & \mcnum{7.95}{0.26}
& \mcsecond{58.89}{2.30} & \mcbest{28.41}{0.19}
& \mcbest{42.21}{2.63} & \mcsecond{24.05}{1.23}
& \mcsecond{49.29}{1.94} & \mcsecond{24.23}{1.80} \\

\rowcolor{black!4}
& \textbf{\modelname}
& \mcbest{70.90}{1.76} & \mcbest{2.63}{0.18}
& \mcnum{55.56}{1.46} & \mcnum{29.15}{0.02}
& \mcsecond{40.26}{3.10} & \mcbest{22.69}{0.55}
& \mcbest{50.00}{2.86} & \mcbest{21.75}{0.61} \\

\midrule[0.35pt]

\multirow{6}{*}{\textbf{Qwen3-32B}}
& ICL
& \mcnum{66.67}{1.96} & \mcnum{5.66}{2.24}
& \mcnum{76.25}{0.42} & \mcnum{26.19}{0.13}
& \mcnum{79.72}{1.42} & \mcnum{16.88}{0.46}
& \mcnum{79.69}{1.75} & \mcnum{17.54}{0.73} \\

& A-Mem
& \mcnum{68.08}{1.76} & \mcsecond{4.40}{1.93}
& \mcnum{54.58}{2.09} & \mcsecond{25.21}{0.41}
& \mcnum{81.82}{1.39} & \mcnum{15.44}{0.45}
& \mcnum{77.37}{1.22} & \mcnum{16.67}{1.17} \\

& MemSkill
& \mcnum{67.52}{3.43} & \mcnum{4.92}{1.63}
& \mcnum{70.28}{2.84} & \mcnum{26.56}{0.50}
& \mcnum{86.48}{5.45} & \mcnum{14.11}{1.76}
& \mcnum{85.88}{6.13} & \mcnum{15.92}{0.92} \\

& MemRL
& \mcnum{67.52}{5.38} & \mcnum{5.48}{1.65}
& \mcsecond{77.78}{1.05} & \mcnum{26.02}{0.10}
& \mcnum{80.29}{1.66} & \mcnum{16.46}{0.20}
& \mcnum{77.94}{0.75} & \mcnum{16.98}{0.16} \\

& G-Memory
& \mcsecond{68.64}{3.43} & \mcnum{6.49}{1.18}
& \mcnum{68.47}{2.10} & \mcbest{24.92}{0.43}
& \mcsecond{86.69}{2.94} & \mcsecond{13.38}{1.84}
& \mcsecond{87.19}{2.73} & \mcsecond{14.12}{2.52} \\

\rowcolor{black!4}
& \textbf{\modelname}
& \mcbest{71.19}{2.05} & \mcbest{3.24}{0.52}
& \mcbest{78.89}{1.58} & \mcnum{26.20}{0.10}
& \mcbest{96.62}{1.82} & \mcbest{9.71}{0.37}
& \mcbest{97.46}{1.45} & \mcbest{11.23}{0.42} \\

\bottomrule
\end{tabular}
\end{adjustbox}

\par\vspace{3pt}
\endgroup
\vspace{-10pt}
\end{table*}

\noindent\textbf{Overall Results.}
Tab.~\ref{tab:memory-comparison-simplified} presents the main comparison across three benchmarks and three LLM backbones. For ALFWorld, Familiar and Held-out denote the official \texttt{valid\_seen} and \texttt{valid\_unseen} layout splits, respectively. Counting FEVER, PDDL, and ALFWorld for each backbone, \modelname achieves the highest accuracy in 10 out of the 12 evaluation settings and requires the fewest interaction steps in 9 of them. The performance gains are frequently accompanied by shorter trajectories, indicating that \modelname improves task outcomes without relying on additional environment exploration.

The largest gains occur on ALFWorld: with Qwen3-4B,
\modelname improves held-out layout accuracy by 43.47\% and reduces
interaction steps by 32.10\% relative to the runner-up baseline
(MemSkill). Gains with Qwen3-32B show that benefits extend beyond
weaker policy models. Held-out results further support transfer
across layouts under this cross-environment evaluation protocol.
\modelname also leads FEVER in both accuracy and interaction
efficiency under all three backbones. With GPT-4o-mini, it improves
accuracy by 12.54\% and reduces steps by 58.45\%
relative to the runner-up baseline.
It also improves PDDL accuracy with both Qwen models.
These PDDL gains do not always coincide with the fewest interaction
steps, showing that success and interaction efficiency are not always
aligned. GPT-4o-mini reveals model dependence, but \modelname still
achieves the best held-out-layout accuracy and shortest ALFWorld
trajectories. Furthermore, exact two-sided Wilcoxon signed-rank
tests on the 12 matched configuration means indicate improvements
over each baseline in accuracy and steps after Holm correction,
under the stated block-level assumptions.
Full statistics and effect sizes appear in
Appendix~\ref{appendix:statistical-significance}.

\noindent\textbf{Robustness across Additional LLM Backbones.}
Tab.~\ref{tab:fever-backbone-combined} examines whether the improvements
persist across additional LLM families. Across them, \modelname consistently achieves the highest FEVER accuracy, improving over the runner-up baseline by 10.81\%, 32.82\%, and 6.98\%, respectively. These gains generally coincide with shorter trajectories. On DeepSeek-v4-flash, \modelname improves accuracy while requiring nearly the same number of steps as ICL. However, G-Memory did not complete successfully in this setting.
Together with Tab.~\ref{tab:memory-comparison-simplified}, these results indicate that the benefit of \modelname generalizes across model families rather than depending on a particular backbone or model scale.

\begin{table*}[t]
\centering
\begingroup

\captionsetup{skip=3pt}
\caption{Generalization across LLM backbones and to more complex tasks.}
\label{tab:fever-scienceworld}

\fontsize{7}{8.2}\selectfont
\renewcommand{\arraystretch}{1.02}
\setlength{\aboverulesep}{0.25ex}
\setlength{\belowrulesep}{0.25ex}

\captionsetup[subtable]{
    font=scriptsize,
    labelfont=bf,
    textfont=bf,
    skip=1.5pt
}

\begin{subtable}[t]{0.58\textwidth}
\centering
\caption{FEVER across different LLM backbones.}
\label{tab:fever-backbone-combined}

\setlength{\tabcolsep}{1.4pt}
\begin{tabularx}{\linewidth}{
    @{}l*{6}{>{\centering\arraybackslash}X}@{}
}
\toprule

\multirow{2}{*}{\textbf{Method}}
& \multicolumn{2}{c}{\textbf{Claude-Haiku-4.5}}
& \multicolumn{2}{c}{\textbf{Gemini-2.5-flash-lite}}
& \multicolumn{2}{c}{\textbf{DeepSeek-v4-flash}} \\

\cmidrule(lr){2-3}
\cmidrule(lr){4-5}
\cmidrule(lr){6-7}

& Acc. $\uparrow$ & \mbox{Steps\,$\downarrow$}
& Acc. $\uparrow$ & \mbox{Steps\,$\downarrow$}
& Acc. $\uparrow$ & \mbox{Steps\,$\downarrow$} \\

\midrule

ICL
& 54.24 & 10.44
& 49.15 & 6.24
& \underline{72.88} & \textbf{3.02} \\

A-Mem
& \underline{62.71} & 9.02
& 45.76 & 5.98
& 63.56 & 4.35 \\

MemSkill
& 60.17 & \underline{7.46}
& 53.39 & 6.05
& 68.65 & 3.54 \\

MemRL
& 54.24 & 7.69
& \underline{54.24} & \underline{3.47}
& 67.80 & 3.67 \\

G-Memory
& 57.63 & 7.95
& 44.92 & 5.15
& -- & -- \\

\rowcolor{black!4}
\textbf{\modelname{}}
& \textbf{69.49} & \textbf{6.38}
& \textbf{72.04} & \textbf{2.42}
& \textbf{77.97} & \underline{3.04} \\

\bottomrule
\end{tabularx}
\end{subtable}%
\hfill
\begin{subtable}[t]{0.40\textwidth}
\centering
\caption{ScienceWorld with Qwen3-32B.}
\label{tab:scienceworld}

\setlength{\tabcolsep}{1.2pt}
\begin{tabularx}{\linewidth}{
    @{}l*{4}{>{\centering\arraybackslash}X}@{}
}
\toprule

\multirow{2}{*}{\textbf{Method}}
& \multicolumn{4}{c}{\textbf{Qwen3-32B}} \\

\cmidrule(lr){2-5}

& \mbox{Success\,$\uparrow$}
& \mbox{Progress\,$\uparrow$}
& \mbox{Score\,$\uparrow$}
& \mbox{Steps\,$\downarrow$} \\

\midrule

ICL
& 13.80 & 79.80 & 25.75 & 10.77 \\

A-Mem
& 17.33 & 82.17 & 25.81 & 10.87 \\

MemSkill
& 13.67 & 81.50 & 28.13 & 11.79 \\

MemRL
& 8.33 & 77.00 & 22.25 & \textbf{5.21} \\

G-Memory
& \underline{21.67} & \textbf{91.33}
& \underline{30.02} & 9.67 \\

\rowcolor{black!4}
\textbf{\modelname{}}
& \textbf{30.47} & \underline{82.90}
& \textbf{34.39} & \underline{9.57} \\

\bottomrule
\end{tabularx}
\end{subtable}

\endgroup
\end{table*}

\noindent\textbf{Generalization to a More Complex Interactive Environment.}
On ScienceWorld, Tab.~\ref{tab:scienceworld} shows that \modelname achieves the highest final success and score, and the auxiliary metrics further reveal a distinction between intermediate exploration and successful task completion. G-Memory achieves higher intermediate progress, while MemRL terminates in fewer steps, yet neither converts these advantages into comparable final success. These results show that intermediate progress and fewer steps alone do not ensure successful completion, e.g., \modelname achieves better final outcomes with a step count close to G-Memory's. 
%

\subsection{Mechanism and Sensitivity Analysis}

Following the memory lifecycle introduced in Section~\ref {sec:method}, we conduct controlled studies of \modelname's complementary memory scopes, evidence-aware promotion, and query-guided memory use. Unless otherwise specified, all variants use the same frozen local-memory snapshot and differ only in the component under evaluation. These results represent controlled within-snapshot comparisons rather than independent repetitions of the main experiments.

\noindent\textbf{Complementary Local and Global Memory.} Fig.~\ref{fig:lambda-component-ablation}(c)-(d) shows that Local-only performs well on familiar layouts but degrades on held-out layouts, whereas Global-only narrows this gap. Held-out accuracy increases from $89.64\%$ with Local-only to $97.14\%$ with Global-only and $97.86\%$ with the full system. Retaining local evidence also improves familiar-layout accuracy over Global-only, from $96.10\%$ to $97.40\%$. Their adaptive combination achieves the highest overall accuracy with shorter trajectories than either isolated scope, supporting complementary roles for local grounding and transferable global guidance with abstract rules and workflows.

\begin{figure*}[t]
  \centering
  \includegraphics[width=\textwidth]{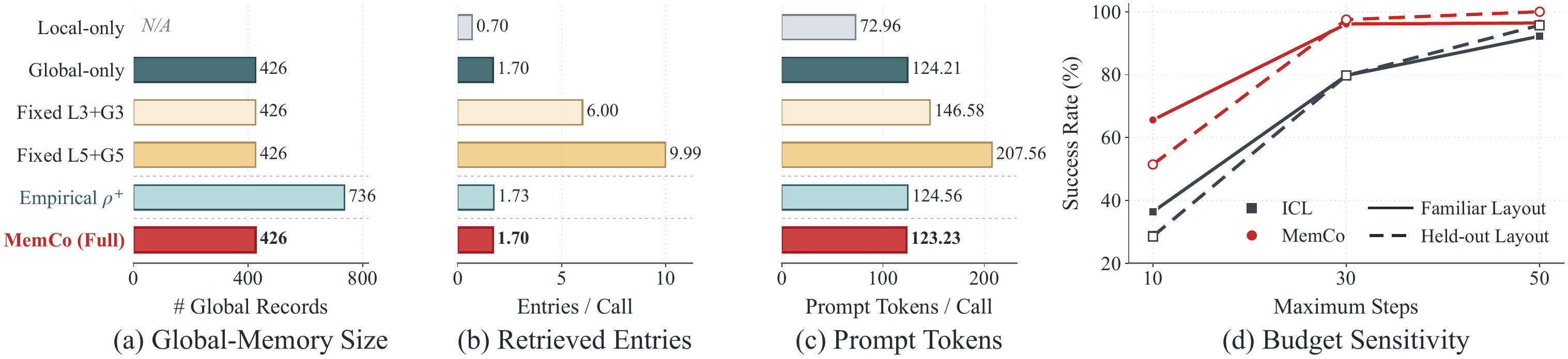}
  \caption{\textbf{Memory efficiency and interaction-budget sensitivity on ALFWorld with Qwen3-32B.}
  Panels (a)--(c) report the global-memory size, average retrieved memory entries, and memory-prompt tokens per retrieval call. Panel (d) compares \modelname and ICL under different interaction budgets.}
  \label{fig:memory-cost-budget-sensitivity}
  \vspace{-15pt}
\end{figure*}

\begin{figure*}[t]
  \centering
  \includegraphics[width=0.8\textwidth]
  {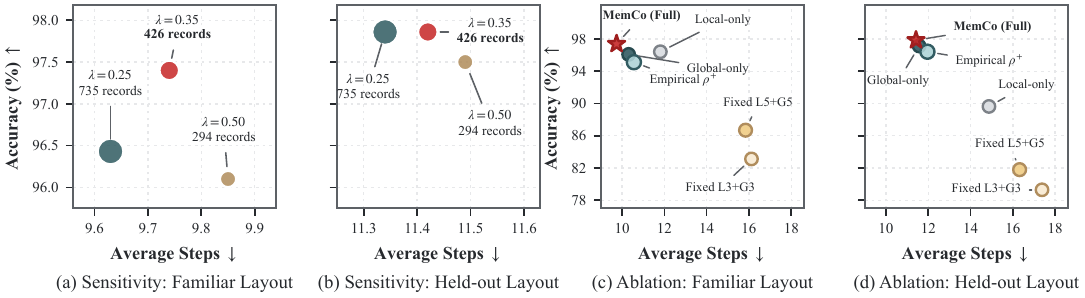}
  \caption{\textbf{Sensitivity and component ablations on ALFWorld with Qwen3-32B.} Panels (a)--(b) report familiar-layout and held-out-layout performance under different Wilson promotion thresholds $\lambda$; bubble area represents the number of promoted global-memory records. Panels (c)--(d) compare memory-scope, fixed-concatenation, empirical-promotion, and full \modelname variants.}
  \label{fig:lambda-component-ablation}
   \vspace{-8pt}
\end{figure*}

\noindent\textbf{Finite-Evidence-Aware Progressive Collaboration.}
We replace the Wilson lower confidence bound promotion rule $W(\bar p)$ with its counterpart $\rho^{+}(\bar p)$ while keeping $\lambda=0.35$ and the frozen local evidence fixed. As shown in Fig.~\ref{fig:memory-cost-budget-sensitivity}(a)-(c) and Fig.~\ref{fig:lambda-component-ablation}(c)-(d), Wilson promotion reduces the global bank from 736 to 426 records and improves familiar- and held-out-layout accuracy by 2.27 and 1.43 percentage points, respectively, with fewer interaction steps. Because the two variants inject nearly the same number of entries per decision, the difference is not due to prompt size alone.
 This result is consistent with Wilson calibration favoring candidates supported by more reliable finite evidence.

\noindent\textbf{Efficiency under Restricted Interaction Budgets.}
As shown in Fig.~\ref{fig:memory-cost-budget-sensitivity}(d), \modelname improves when the interaction budget increases from 10 to 30 steps, but gains only marginally from 30 to 50 steps on both splits. This early saturation suggests that a 30-step budget is largely sufficient in this setting for \modelname to exploit its retrieved experience effectively. In comparison, ICL improves more gradually as the budget increases and requires more interaction opportunities to approach \modelname's performance. By varying step limits, \modelname at 30 steps already exceeds ICL at 50 steps on both splits, supporting improved task completion under smaller execution budgets.

\noindent\textbf{Sensitivity to the Promotion Threshold.} Fig.~\ref{fig:lambda-component-ablation}(a)-(b) evaluates the robustness of the promotion threshold by varying $\lambda$ while keeping the same frozen local-memory snapshot and reconstructing only the global memory. Increasing $\lambda$ reduces the global memory from 735 to 294 records, while accuracy and trajectory length remain insensitive. 
Threshold $\lambda=0.35$ reduces the number of global
memory records by 42.0\% compared with $\lambda=0.25$, while
maintaining similar accuracy and interaction steps.

\noindent\textbf{Query-Guided Memory Support.} The fixed-concatenation variants in Fig.~\ref{fig:lambda-component-ablation}(c)--(d) test whether \modelname benefits merely from exposing the policy model to more stored experience. As shown in Fig.~\ref{fig:lambda-component-ablation}(c)--(d) and Fig.~\ref{fig:memory-cost-budget-sensitivity}(b)--(c), the results show the opposite: L5+G5 trails full \modelname by 10.71 and 16.07 percentage points on the Familiar and Held-out Layout splits, respectively, despite injecting more memory. These results indicate that the way retrieved records are selected and composed matters; simply concatenating more records does not recover the benefit of adaptive composition. The full system uses less context while achieving higher accuracy and shorter trajectories. This is consistent with less targeted memory leading irrelevant guidance into state-specific decision making.

Fig.~\ref{fig:relevance-retrieval-ablation}(a) examines the state, phase, and goal relevance facets. Removing any one facet or retaining only a single facet lowers accuracy and increases interaction steps on both splits. The all-zero relevance control in Appendix~\ref{appendix:extended-results} also underperforms the full method.
These within-snapshot comparisons support the complementary, incremental contributions of the three facets to memory selection.
Fig.~\ref{fig:relevance-retrieval-ablation}(b) examines an orthogonal
design choice: how memory relevance is scored. It replaces
\modelname's explicit state, phase, and goal matching with cosine
similarity or BM25.
Both alternatives underperform the full method in accuracy and efficiency, and neither consistently dominates across splits. 
These comparisons support explicit state, phase, and goal matching over generic semantic or lexical similarity for selecting memory entries in the evaluated setting.

\begin{figure*}[t]
  \centering
  \includegraphics[width=\textwidth]
  {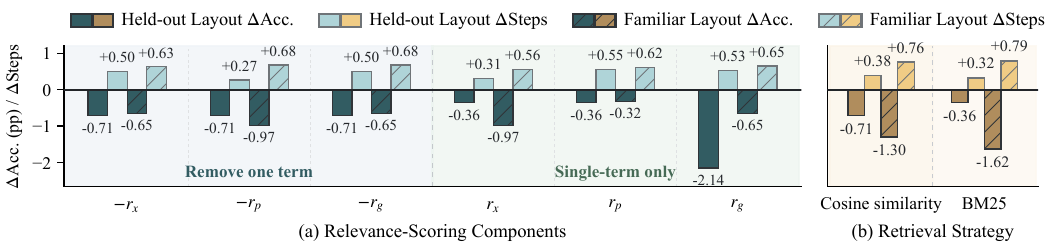}
\caption{\textbf{Ablation of relevance scoring and retrieval on ALFWorld with Qwen3-32B.} Bars show changes from full \modelname, with $\Delta$Acc. measured in percentage points and $\Delta$Steps in interactions. In (a), for each relevance facet $d\in\{x,p,g\}$, $-r_d$ removes term $r_d$, whereas $r_d$ retains only that term. Panel (b) replaces \modelname's retrieval strategy with cosine similarity or BM25.}
  \label{fig:relevance-retrieval-ablation}
  
 \vspace{-10pt}
\end{figure*}

Qualitative trajectory analysis in Appendix~\ref{appendix:case-study} further illustrates how this mechanism combines transferable global workflows with environment-specific local grounding, while excluding incompatible source-environment guidance in the shown cross-environment trajectories.

\FloatBarrier
\section{Case Study: Adaptive Memory Support and Cross-Environment Generalization}

\begin{figure}[H]
  \centering
  \includegraphics[width=0.94\linewidth]{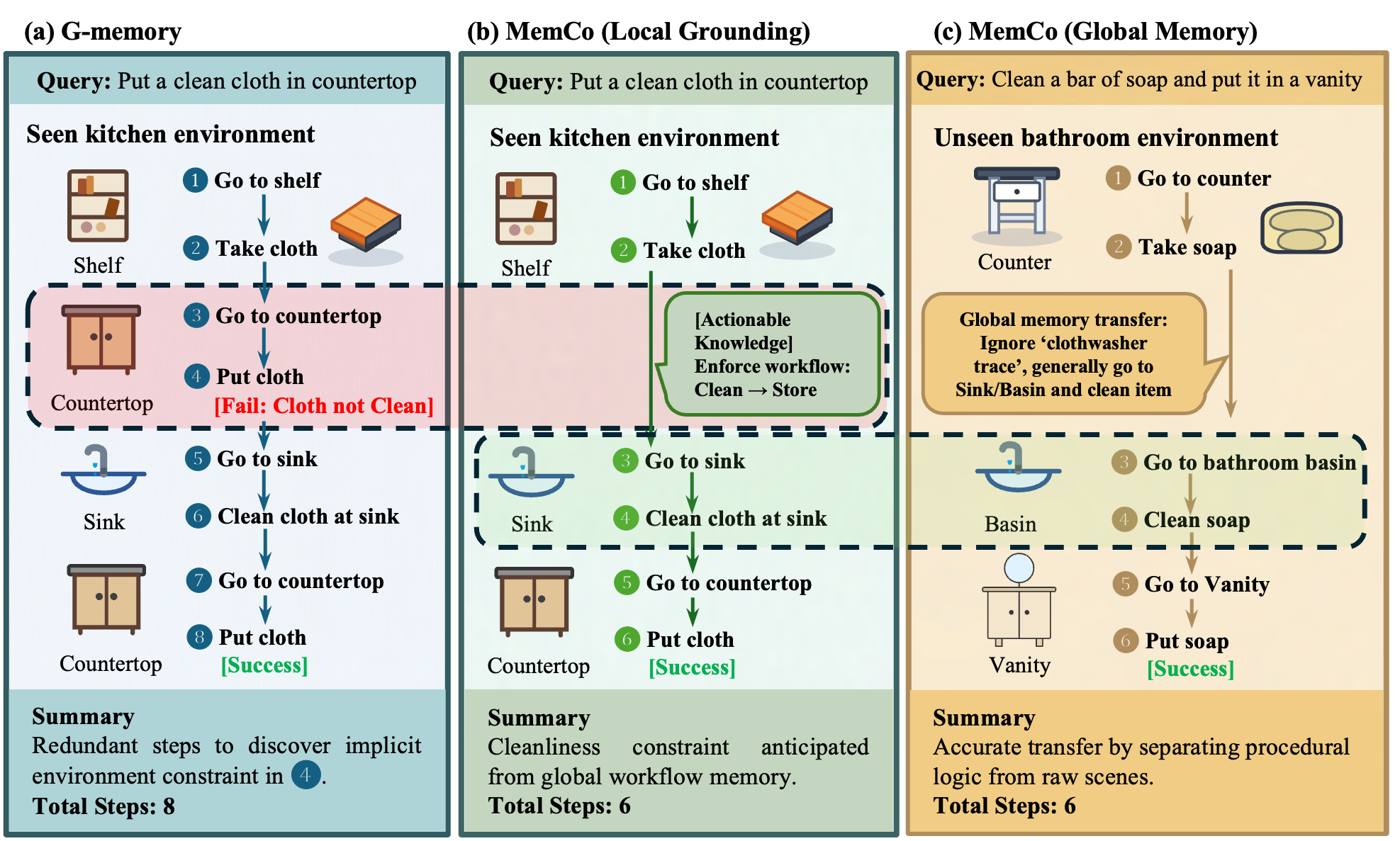}
  \caption{\textbf{Case study showcasing adaptive memory support and cross-environment generalization.} (a) In the shown G-Memory trajectory, the agent attempts placement before satisfying the cleanliness requirement. (b) \modelname{} uses local actionable knowledge to avoid the invalid placement in the same environment. (c) In an unseen environment, \modelname{} combines a reusable global workflow with current-environment grounding.}
  \label{fig:casestudy-main}
\end{figure}

Figure~\ref{fig:casestudy-main} illustrates how local situational grounding and global procedural workflows interact during online decision making. For the task ``Put a clean cloth in countertop'', G-Memory attempts placement before satisfying the cleanliness requirement, while \modelname{} retrieves the clean-before-store ordering together with local actionable knowledge and avoids the invalid move. In the unseen bathroom environment, query-guided selection excludes incompatible kitchen-specific guidance and grounds the transferable workflow in the available basin and vanity, completing the task in six steps.

\section{Conclusion}

In this work, we introduced MemCo, a memory-centric collaboration framework for generalizing LLM agents to unseen interactive environments. \modelname constructs local memory from each agent's interaction trajectories, progressively promotes reusable abstract records to shared global memory, and adaptively selects and composes local and global evidence into a decision-relevant memory context during online interaction. Experiments across four heterogeneous benchmarks show that \modelname improves task success and generally reduces redundant exploration compared with memory-free, isolated-memory, and shared-memory baselines.



\section*{Limitations}

First, our framework relies on initial interaction traces to bootstrap the global workflow memory. While a complete lack of early experience limits global guidance, this cold-start characteristic is common to all retrieval-based experience sharing systems. Second, we evaluate \modelname exclusively on text-based and symbolic interactive benchmarks. Extending this hierarchical memory architecture to multimodal environments with visual observations remains an open direction for future work.

\bibliography{references}
\bibliographystyle{iclr2027_conference}

\newpage
\appendix
\makeatletter
\@ifundefined{corollary}{\newtheorem{corollary}{Corollary}[section]}{}
\makeatother
\MemCoAppendixContents

\section{Extended Related Work}
\label{appendix:related_work}

\setcounter{figure}{0}
\renewcommand{\thefigure}{A.\arabic{figure}}
\renewcommand{\theHfigure}{\modelnameappendix.\arabic{figure}}

\noindent\textbf{Individual Experience and Agent Memory.}
Generative Agents~\citep{park2023generative} stores observation streams and
derives higher-level reflections, while MemGPT~\citep{packer2023memgpt}
manages information beyond the immediate context window.
MemoryBank~\citep{zhong2024memorybank}, Mem0~\citep{chhikara2025mem0},
and MIRIX~\citep{wang2025mirix} study long-term memory for conversational,
personalized, or multimodal interactions, while
MEMENTO~\citep{kwon2025embodied} evaluates memory utilization for
personalized embodied assistance.
Long contexts themselves do not guarantee effective reuse, since relevant
information may become difficult to access and irrelevant context can distract
reasoning~\citep{liu2024lost,shi2023large}, motivating explicit storage and
retrieval of decision-relevant experience.
For interactive agents, a closely related line of work turns environment
interaction into reusable decision support.
Reflexion~\citep{shinn2023reflexion} stores verbal reflections from previous
trials; Voyager~\citep{wang2023voyager} and GITM~\citep{zhu2023ghost}
accumulate reusable skills or knowledge for open-world interaction; and
SkillGen~\citep{ding2026skillgen} extracts domain skills for sequential
decision making.
Memp~\citep{fang2025memp} studies procedural-memory construction, retrieval,
and update, while ReMe~\citep{cao2025remember} distills and refines procedural
experience.
Meta-Policy Reflexion~\citep{wu2025meta} consolidates reflections into reusable
rules, whereas A-Mem~\citep{xu2025amem}, Nemori~\citep{nan2025nemori},
MemRL~\citep{zhang2026memrl}, and MemSkill~\citep{zhang2026memskill}
improve memory organization, episodic reuse, or memory-management policies.
These methods primarily improve how experience accumulated within an individual
agent or memory system is represented and reused.
When multiple agents interact with different environments, a complementary
question is how experience acquired by one agent can become useful to others.

\noindent\textbf{Agent Collaboration through Shared Experience.}
Multi-agent collaboration is commonly realized through communication, agent
selection, or execution-time orchestration, e.g.,
AutoGen~\citep{wu2024autogen} and AgentStore~\citep{jia2025agentstore}.
Experience sharing provides a complementary form of collaboration in which
experience acquired in prior interactions can support other agents later.
Recent work realizes such sharing through different representations.
G-Memory~\citep{zhang2025g} organizes prior collaboration within a multi-agent
system through hierarchical interaction, query, and insight memory.
MATM~\citep{kim2026multiagenttransactivememory} supports population-level
storage and retrieval of producer-agent trajectories for later use by consumer
agents.
Agent KB~\citep{tang2025agentkbleveragingcrossdomain} aggregates trajectories
from heterogeneous agent frameworks into a structured knowledge base for
cross-domain planning and feedback.
MemCollab~\citep{chang2026memcollabcrossmodelmemorycollaboration} distills
shared constraints from trajectories generated by different backbone models,
while FedAgentKE~\citep{li2026fedagentkefederatedsemanticknowledge}
evolves transferable semantic knowledge across heterogeneous agents through
federated distillation and aggregation.
Collectively, these works demonstrate that experience can be reused beyond
its original interaction and, in different settings, across agents, models,
or agent frameworks. \modelname{} builds on this direction for independently
acting agents that accumulate experience in different environments. It keeps
environment-specific experience local while sharing reusable records across
agents, so that cross-agent experience can be used without collapsing the
context in which it was acquired.

\noindent\textbf{Constructing and Using Shared Memory.}
A first question is how reusable experience should be represented and retained.
Procedural-memory methods show that trajectories can be distilled into more
compact skills, scripts, rules, or workflows
~\citep{ding2026skillgen,fang2025memp,cao2025remember,wu2025meta}.
FlowSearcher~\citep{xiangflowsearcher} synthesizes memory-guided workflows from
historical experience, while PlugMem~\citep{yang2026plugmem} organizes
episodic experience into a knowledge-centric memory graph.
MCMA~\citep{liang2026learning} learns memory structures at different
abstraction levels for selective transfer under distribution shift, and
Memora~\citep{xia2026memora} explicitly balances abstraction against
specificity.
Together, these studies highlight that transferable memory benefits from
capturing reusable structure while retaining information needed for later
application. A second question is how stored memory should be used for the
current decision. RCR-Router~\citep{liu2025rcr} dynamically selects structured
memory subsets according to agent role and task stage, while
ShardMemo~\citep{zhao2026shardmemo} uses masked mixture-of-experts routing
over sharded memory. These works highlight that the utility of stored memory
also depends on context-aware selection at inference time.
\modelname{} connects these two stages in a single memory-centric collaboration
process: environment-specific evidence is retained locally, reusable records
are promoted using evidence accumulated across agents, and local and global
memory are selected and grounded according to the receiving agent's current
state, progress, goal, and available actions. These mechanisms jointly define
the proposed collaboration framework rather than treating shared-memory
construction and decision-time use as separate problems.
\section{Algorithms and Implementation}
\label{app:algorithms-implementation}

\subsection{Algorithmic Overview}
\label{sec:implementation-algorithms}

To improve reproducibility, this section summarizes the concrete procedures
used by \modelname{}.
Algorithm~\ref{alg:local-memory} constructs local graph memory and extracts
pattern records from interaction trajectories.
Algorithm~\ref{alg:memco-local-graph-merge-details} specifies the structural
merge used to consolidate one agent's episodic graphs.
Algorithm~\ref{alg:promotion-merge} abstracts the shareable pattern and
actionable records in $\mathcal P_i^l\cup\mathcal E_i^l$, aggregates their
evidence across agents, and rebuilds the Wilson-promoted global snapshot.
Algorithm~\ref{alg:memco-global-merge-details} separately specifies the
exact-key update of promoted records in global memory.
Algorithm~\ref{alg:routing-current} retrieves and composes local and global
memory according to the current decision context.

Following the main text, local memory is
$\mathcal M_i^l=(G_i^l,\mathcal P_i^l,\mathcal E_i^l,\mathcal R_i^l)$.
Here $p\in\mathcal P_i^l\cup\mathcal E_i^l$ denotes a shareable local
pattern or actionable record and $\bar p=\operatorname{Abstract}(p)$ its
abstract form. The local graph $G_i^l$ and concrete cases
and reflections $\mathcal R_i^l$ remain local. At decision time, $m$ denotes
a memory entry, including a record or a bounded transition entry derived from
$G_i^l$.
We use $N$ for the number of agents and $N_i$ for agent $i$'s episodes.
The counters $c_i^\omega(p)$ and $n_i^\omega(p)$ denote occurrence-level
and episode-reduced evidence, respectively; $c_t^h$ denotes rendered text
for support role $h$.

Table~\ref{tab:memco-memory-types} provides a unified reference for the
memory components, their record types, local examples, and possible global
counterparts. Appendix~\ref{sec:memco_module_examples} distinguishes these
stored types from the support roles used during retrieval and composition.

\begin{algorithm*}[!htbp]
\caption{Local Memory Construction}
\label{alg:local-memory}
\small
\begin{algorithmic}[1]
\STATE \textbf{Input:} Agent $i$'s interaction trajectories
$\{\tau_n\}_{n=1}^{N_i}$, where
$\tau_n=\{(\boldsymbol{x}_t,\boldsymbol{a}_t,\boldsymbol{f}_t,
\boldsymbol{x}_{t+1})\}_{t=1}^{T_n}$
\STATE \textbf{Return:} Local memory
$\mathcal{M}_i^l=(G_i^l,\mathcal{P}_i^l,\mathcal{E}_i^l,\mathcal{R}_i^l)$

\FOR{each trajectory $\tau_n$}
    \STATE Initialize an episodic graph
    $G_{\tau_n}=(V_{\tau_n},E_{\tau_n})$.
    \FOR{each transition
    $(\boldsymbol{x}_t,\boldsymbol{a}_t,\boldsymbol{f}_t,
    \boldsymbol{x}_{t+1})\in\tau_n$}
        \STATE At the method level, associate state nodes $v_t^x,v_{t+1}^x$,
        action node $v_t^a$, and feedback node $v_t^f$ with the normalized
        observation, parsed operation, and observed outcome.
        \STATE Represent the interaction as
        $v_t^x\rightarrow v_t^a\rightarrow v_t^f\rightarrow v_{t+1}^x$.
        In the serialized graph, materialize the
        $v_t^x\rightarrow v_t^a\rightarrow v_{t+1}^x$ backbone and attach
        the information represented by $v_t^f$ to the corresponding transition.
        \STATE Store the attached feedback through transition statistics and,
        when present, typed state-delta, subgoal, and failure relations together
        with the record evidence ledger.
    \ENDFOR
\ENDFOR

\STATE Preserve the observed transition order within each episodic graph so
that record induction follows the executed trajectory, while the merged graph
accumulates recurring structure across episodes.
\STATE Construct the local graph as
$G_i^l\leftarrow\operatorname{LocalMerge}
(\{G_{\tau_n}\}_{n=1}^{N_i})$ using
Algorithm~\ref{alg:memco-local-graph-merge-details}.
\STATE Initialize $\mathcal P_i^l\leftarrow\emptyset$,
$\mathcal E_i^l\leftarrow\emptyset$, and
$\mathcal R_i^l\leftarrow\emptyset$.

\FOR{each trajectory $\tau_n$ and its episodic graph $G_{\tau_n}$}
    \STATE Induce workflows, preconditions, repairs, and failure-pattern
    records from the observed episode structure and add them to
    $\mathcal P_i^l$.
    \STATE Extract state--action--outcome relations into
    $\mathcal E_i^l$, and retain concrete examples, failure diagnoses, and
    recoveries in $\mathcal R_i^l$.
\ENDFOR

\STATE Within $\mathcal P_i^l$ and $\mathcal E_i^l$, consolidate records
with the same local record key and union their occurrence references.

\FOR{each shareable local record
$p\in\mathcal{P}_i^l\cup\mathcal E_i^l$}
    \STATE Initialize occurrence counters
    $c_i^+(p),c_i^-(p),c_i^0(p)$
    to zero and an episode ledger
    $\mathcal J_i(p)\leftarrow\emptyset$.
    \FOR{each observed occurrence $j$ of $p$}
        \STATE Assign signed evidence $e_j(p)\in\{+,-\}$ according to
        whether the observed feedback and transition support or contradict
        the relation encoded by $p$.
        \IF{$e_j(p)=+$}
            \STATE $c_i^+(p)\leftarrow c_i^+(p)+1$.
        \ELSE
            \STATE $c_i^-(p)\leftarrow c_i^-(p)+1$.
        \ENDIF
        \IF{occurrence $j$ has no observable state or task progress}
            \STATE $c_i^0(p)\leftarrow c_i^0(p)+1$.
        \ENDIF
        \STATE Add the signed annotation and optional stall marker to
        $\mathcal J_i(p)
        [\operatorname{scope}(j),\operatorname{episode}(j)]$.
    \ENDFOR
    \STATE Reduce each annotated episode once: contradiction takes
    precedence for ordinary relations, while a stall marker supports an
    anti-pattern. Store the resulting promotion counts as
    $n_i^+(p),n_i^-(p),n_i^0(p)$.
\ENDFOR

\STATE \textbf{Return}
$\mathcal{M}_i^l=
(G_i^l,\mathcal{P}_i^l,\mathcal{E}_i^l,\mathcal{R}_i^l)$
according to Eq.~\ref{eq:local-memory}.
\end{algorithmic}
\end{algorithm*}
\begin{algorithm*}[!htbp]
\caption{Progressive Memory Collaboration and Global Promotion}
\label{alg:promotion-merge}
\small
\begin{algorithmic}[1]
\STATE \textbf{Input:} Shareable local records
$\{\mathcal{P}_i^l\cup\mathcal{E}_i^l\}_{i=1}^{N}$,
promotion threshold $\lambda$, and one-sided tail probability $\alpha$
\STATE \textbf{Return:} Rebuilt global memory $\mathcal{M}^g$
\STATE Initialize an empty global memory $\mathcal M^g\leftarrow\emptyset$.
\FOR{each $p\in\mathcal P_i^l\cup\mathcal E_i^l$ from each agent $i$}
    \STATE Construct its abstract form
    $\bar p\leftarrow\operatorname{Abstract}(p)$ using
    Algorithm~\ref{alg:memco-abstract-details}; keep $p$ local if no
    transferable abstract form is defined.
    \STATE Compute its deterministic global record key $\kappa_g(\bar p)$.
    \STATE Merge $p$ into the aggregate for $\kappa_g(\bar p)$: sum its
    occurrence statistics, union its source provenance, and register its
    episode-reduced counts $n_i^+(p)$, $n_i^-(p)$, and $n_i^0(p)$.
\ENDFOR
\FOR{each abstract-record aggregate $\bar p_{\kappa}$}
    \STATE Aggregate the episode-reduced counts according to
    Eq.~\ref{eq:promotion-count-aggregation} to obtain
    $n^+(\bar p_{\kappa})$, $n^-(\bar p_{\kappa})$, and diagnostic
    $n^0(\bar p_{\kappa})$; occurrence-level counters are not used in the
    Wilson sample.
    \STATE Compute the signed evidence size
    $s\leftarrow n^+(\bar p_{\kappa})+n^-(\bar p_{\kappa})$ and
    $\rho^+\leftarrow n^+(\bar p_{\kappa})/\max(s,1)$.
    \STATE Set $z\leftarrow\Phi^{-1}(1-\alpha)$ and set $W\leftarrow0$ if
    $s=0$; otherwise compute the one-sided Wilson lower bound using
    Eq.~\ref{eq:wilson}.
    \IF{$\bar p_{\kappa}$ passes its record-type structural check,
    passes the applicable low-information check, and $W\geq\lambda$}
        \STATE Update the rebuilt global memory as
        $\mathcal M^g\leftarrow\operatorname{GlobalMerge}
        (\mathcal M^g,\{\bar p_{\kappa}\})$ using
        Algorithm~\ref{alg:memco-global-merge-details}.
    \ENDIF
\ENDFOR
\STATE Atomically publish the complete rebuilt snapshot for subsequent
queries; do not change the memory context of a decision already in progress.
\STATE \textbf{Return} $\mathcal{M}^g$.
\end{algorithmic}
\end{algorithm*}

\begin{algorithm*}[!htbp]
\caption{Query-guided Local--Global Memory Support}
\label{alg:routing-current}
\small
\begin{algorithmic}[1]
\STATE \textbf{Input:} Task goal $\boldsymbol g$,
observation $\boldsymbol x_t$, available actions $\mathcal A_t$,
interaction history $\tau_{<t}$, local memory $\mathcal M_i^l$,
global memory $\mathcal M^g$, and retrieval caps $k^l,k^g$
\STATE \textbf{Return:} Memory context
$\mathcal C_t=
[c_t^{\mathrm{ground}},
c_t^{\mathrm{workflow}},
c_t^{\mathrm{reflection}}]$

\STATE Construct the current decision query
$\boldsymbol q_t
\leftarrow
\mathcal Q(
\boldsymbol g,\boldsymbol x_t,\mathcal A_t,\tau_{<t})$
following the query definition in Section~\ref{sec:3.5}.

\STATE Retrieve local memory using
$\mathcal S_t^l
\leftarrow
\mathbb R(
\boldsymbol q_t,\mathcal M_i^l,k^l)$.

\STATE Retrieve global memory using the same function:
$\mathcal S_t^g
\leftarrow
\mathbb R(
\boldsymbol q_t,\mathcal M^g,k^g)$.

\STATE Use Algorithm~\ref{alg:memco-bounded-selection} for both retrieval
calls, consistent with Eq.~\ref{eq:two-streams}.

\STATE Define
$\mathcal H=
\{\mathrm{ground},\mathrm{workflow},\mathrm{reflection}\}$.

\STATE Compose role-specific support as
$\{\mathcal U_t^h\}_{h\in\mathcal H}
\leftarrow
\operatorname{Compose}\!\left(
\mathcal S_t^l,\mathcal S_t^g
\mid\boldsymbol q_t,\mathcal A_t
\right)$
using Algorithm~\ref{alg:compose}.

\STATE Render the selected support as
$\mathcal C_t
\leftarrow
\operatorname{Render}\!\left(
\boldsymbol q_t,
\{\mathcal U_t^h\}_{h\in\mathcal H}
\right)$
using Algorithm~\ref{alg:render}.

\STATE \textbf{Return} $\mathcal C_t$.
\end{algorithmic}
\end{algorithm*}

\FloatBarrier

\subsection{Local Graph Representation and Record Evidence}
\label{app:memco-local-details}

\subsubsection{Trajectory and Episodic Graph Representation}

Given an interaction episode
\begin{equation}
\tau=
\{(\boldsymbol x_t,\boldsymbol a_t,\boldsymbol f_t,
\boldsymbol x_{t+1})\}_{t=1}^{T},
\end{equation}
\modelname{} normalizes the interaction records and constructs an episodic
graph
\begin{equation}
G_\tau=(V_\tau,E_\tau).
\end{equation}

At the method level, state nodes $v_t^x$, action nodes $v_t^a$, and feedback
nodes $v_t^f$ represent the normalized observation, parsed operation, and
environment feedback, respectively.
Directed edges connect them according to the temporal interaction:
$v_t^x\rightarrow v_t^a\rightarrow v_t^f\rightarrow v_{t+1}^x$.
The feedback node records the observed outcome associated with the executed
action, including state progress, failure, or task completion information.

In the serialized implementation, $v_t^f$ is not materialized as a separate
stored node for every transition.
Instead, its information is attached to the corresponding transition through
edge statistics and, when present, typed state-delta, subgoal, and failure
relations together with the record evidence ledger.
The stored graph backbone is therefore
$v_t^x\rightarrow v_t^a\rightarrow v_{t+1}^x$,
while preserving the feedback information represented by $v_t^f$ in the
paper-level episodic graph.
The structural merge below operates on this stored realization of the same
episodic graph; we retain the notation $G_\tau$ because the feedback semantics
above are preserved in its attached transition fields.

Record induction remains anchored to the observed order of each completed
episode and its temporary episodic graph. The persistent merged graph described
next is a structural index: it does not retain per-edge episode membership and
is not used to reconstruct an episode-consistent trajectory after merging.
\subsubsection{Structural Matching and Local Graph Merge}

To connect related experience across episodes, \modelname{} compares graph
elements using structural equality after their contents have been normalized. Two nodes are considered equivalent only when both their node
types and normalized contents match, while two edges are equivalent only when
they share the same relation and equivalent endpoints. Matched elements
accumulate their statistics. In the serialized representation, normalized
feedback associated with a transition is compared through its attached
transition payload and typed relations rather than through a separately stored
feedback node. The local memory records the completed episode identifiers at
store level, but not a membership list on every merged node or edge.
For each agent $i$, equivalent graph elements across its episodic trajectories
are aligned to construct
\begin{equation}
G_i^l=\operatorname{LocalMerge}
\left(\{G_{\tau_n}\}_{n=1}^{N_i}\right).
\label{eq:memco-app-graph-merge}
\end{equation}
The resulting graph therefore provides a connected aggregate structural index
across trajectories rather than a collection of isolated action records. An
adjacent path in this merged index is not asserted to be a newly observed
single-episode trajectory.
This operation acts on graph nodes and edges before abstraction; it is not the
global exact-key record update in
Algorithm~\ref{alg:memco-global-merge-details}.

\begin{algorithm*}[!htbp]
\caption{$\operatorname{LocalMerge}$: Structural Merge of Episodic Graphs}
\label{alg:memco-local-graph-merge-details}
\small
\begin{algorithmic}[1]
\STATE \textbf{Input:} Episodic graphs
$\{G_{\tau_n}=(V_{\tau_n},E_{\tau_n})\}_{n=1}^{N_i}$ for agent $i$
\STATE \textbf{Return:} Merged local graph $G_i^l=(V(G_i^l),E(G_i^l))$
\STATE Initialize $V(G_i^l)\leftarrow\emptyset$, $E(G_i^l)\leftarrow\emptyset$,
and an empty map from episodic node identifiers to matching local nodes.
\FOR{each episodic graph $G_{\tau_n}$}
    \FOR{each node $v\in V_{\tau_n}$}
        \STATE Form the local node key
        $\kappa_v(v)=(\operatorname{type}(v),
        \operatorname{Normalize}(\operatorname{content}(v)))$.
        \IF{no node in $V(G_i^l)$ has key $\kappa_v(v)$}
            \STATE Insert a local copy of $v$ into $V(G_i^l)$.
        \ELSE
            \STATE Reuse the matching local node and accumulate node
            statistics.
        \ENDIF
        \STATE Map the episodic identifier of $v$ to that local node.
    \ENDFOR
    \FOR{each edge $e=(u,\eta,v)\in E_{\tau_n}$}
        \STATE Resolve $u$ and $v$ to their matching local nodes and form
        $\kappa_e(e)=(\kappa_v(u),\eta,\kappa_v(v))$.
        \IF{no edge in $E(G_i^l)$ has key $\kappa_e(e)$}
            \STATE Insert a local edge with relation $\eta$ into $E(G_i^l)$.
        \ELSE
            \STATE Reuse the matching edge and accumulate edge statistics;
            retain the payload of the stored signature entry.
        \ENDIF
    \ENDFOR
\ENDFOR
\STATE \textbf{Return} $G_i^l=(V(G_i^l),E(G_i^l))$.
\end{algorithmic}
\end{algorithm*}

\subsubsection{Type-Specific Record Extraction}
\label{app:memco-memory-type-definitions}

The local graph exposes recurring structures across episodes, from which
\modelname{} induces reusable records. Record induction remains anchored to the
observed order of the contributing episodes, so it does not mine arbitrary
synthetic paths from the merged graph. Recurring records with the same
normalized content are consolidated so that repeated observations contribute
evidence to the same local entry.

Specifically, a \textit{workflow} pattern captures an ordered sequence of
progress phases and associated operations; a \textit{precondition} pattern
records a temporal transition associated with successful progress; a
\textit{repair} pattern records a successful action that resolves an observed
failure; and a \textit{failure} pattern records an action associated with a
failure or no-progress outcome. Here, a precondition is an experience-derived
condition for progress, not a claim that a logically necessary or sufficient
operator precondition has been proved.

The other local components in Eq.~\ref{eq:local-memory} retain complementary
information. The graph $G_i^l$ preserves connected state--action--feedback
experience across episodes. Actionable knowledge $\mathcal E_i^l$ stores
state--action--outcome relations that can ground an executable action, while
concrete cases and reflections $\mathcal R_i^l$ retain episode-specific
examples, failure diagnoses, and recoveries. These components may describe
related transitions but serve different purposes: patterns summarize recurring
structure, actionable knowledge supports a concrete decision, and
$\mathcal R_i^l$ keeps instance-level context.

Extracted records with the same record type and normalized content are
consolidated into a single entry in $\mathcal P_i^l$; actionable records in
$\mathcal E_i^l$ are consolidated analogously by their local normalized keys.
This record consolidation is distinct from graph-element merging and from
global promotion. Only selected records in
$\mathcal P_i^l\cup\mathcal E_i^l$ are considered for sharing; $G_i^l$ and
$\mathcal R_i^l$ remain local. Table~\ref{tab:memco-memory-types} summarizes
the corresponding memory types and sharing scope.
\subsubsection{Record Evidence and Local Counts}
\label{app:memco-evidence-counts}

Evidence is defined relative to the relation encoded by each shareable record
$p\in\mathcal P_i^l\cup\mathcal E_i^l$. A record may be observed multiple times
within one episode or across different episodes. For its $j$-th occurrence,
we assign a signed evidence label
\begin{equation}
e_j(p)\in\{+,-\},
\label{eq:memco-app-evidence-sign}
\end{equation}
where $e_j(p)=+$ means that the observed transition supports the relation
encoded by $p$, and $e_j(p)=-$ means that it contradicts that relation.

\paragraph{Occurrence-level counters.}
The occurrence counters summarize all matched observations of the local record:
\begin{equation}
c_i^\omega(p)=\sum_j\mathbb I[e_j(p)=\omega],
\qquad \omega\in\{+,-\}.
\label{eq:memco-app-signed-counts}
\end{equation}
No-progress behavior is tracked as an auxiliary, potentially overlapping
statistic,
\begin{equation}
c_i^0(p)=
\sum_j
\mathbb I[\operatorname{NoProgress}(j,p)].
\label{eq:memco-app-stall-count}
\end{equation}
A no-progress occurrence does not by itself determine the sign $e_j(p)$: the
same observed outcome can contradict a workflow that claims progress while
supporting a failure or anti-pattern whose relation predicts no progress.
The counters $c_i^+(p)$, $c_i^-(p)$, and $c_i^0(p)$ are retained as
occurrence-level evidence for local record statistics and retrieval-side
ranking. They are \emph{not} the sample counts used by the Wilson promotion
rule.

\paragraph{Episode-reduced promotion counters.}
For global promotion, repeated occurrences of the same local record within one
episode are first reduced to at most one signed vote. Let
$\mathcal J_i(p)$ be the evidence ledger keyed by the local record's source
scope and episode identifier. A deterministic, relation-aware reducer maps the
annotations under each episode key $\xi$ to
$\widetilde e_\xi(p)\in\{+,-\}$ or to no vote. For ordinary progress relations,
a contradiction takes precedence over support within the same episode. For a
failure/no-progress relation, an observed stall supports the asserted
anti-pattern rather than being treated as successful execution. An unannotated
episode contributes no signed vote. The promotion counters are
\begin{equation}
n_i^\omega(p)=\sum_{\xi\in\mathcal J_i(p)}
\mathbb I[\widetilde e_\xi(p)=\omega],\qquad \omega\in\{+,-\}.
\label{eq:memco-app-episode-counts}
\end{equation}
The diagnostic stall count is
\begin{equation}
n_i^0(p)=\sum_{\xi\in\mathcal J_i(p)}
\mathbb I[z_\xi(p)=1],
\label{eq:memco-app-episode-stalls}
\end{equation}
where $z_\xi(p)$ indicates that episode key $\xi$ contains a no-progress
marker. Like $c_i^0(p)$, $n_i^0(p)$ may overlap the signed counts. It is never
added to the Wilson sample size.

The two levels therefore have distinct roles:
$$
\{e_j(p),\operatorname{NoProgress}(j,p)\}_j
\;\rightarrow\;
(c_i^+,c_i^-,c_i^0,\mathcal J_i)
\;\rightarrow\;
(n_i^+,n_i^-,n_i^0)
\;\rightarrow\;
(n^+(\bar p),n^-(\bar p))
\;\rightarrow\;
\rho^+(\bar p),\,W(\bar p).
$$
The first counters summarize occurrences and may affect retrieval ranking.
For promotion, annotations associated with the same local record, source
scope, and episode are first reduced to at most one signed vote, producing
$n_i^+(p)$ and $n_i^-(p)$. Equation~\ref{eq:promotion-count-aggregation}
then aggregates these episode-reduced counts over local records that map to
the same abstract record. Wilson uses only
$s(\bar p)=n^+(\bar p)+n^-(\bar p)$ and
$\rho^+(\bar p)=n^+(\bar p)/\max\{s(\bar p),1\}$.
Neither the occurrence counters $c_i^\omega$ nor the diagnostic count $n^0$
enters $s(\bar p)$ or the Wilson formula. Consequently, repeated occurrences
of the same local record within one episode do not directly multiply the
Wilson evidence.

\begin{table*}[htbp]
\centering
\small
\begin{tabularx}{\linewidth}{@{}p{0.23\linewidth}XX@{}}
\toprule
\textbf{Record type} & \textbf{Supporting evidence} &
\textbf{Contradicting evidence}\\
\midrule
Workflow & The observed sequence makes the claimed progress and reaches the
recorded outcome. & The sequence is applicable but fails to make the claimed
progress or reaches an incompatible outcome.\\
Precondition / actionable relation & The condition--action relation is observed
with compatible progress or outcome. & The same relation is applicable but the
observed outcome is incompatible with the stored relation.\\
Repair & The repair follows the relevant failure context and restores progress.
& The proposed repair is applicable but does not resolve the failure context.\\
Failure / anti-pattern & The observed action reproduces the stored failure or
no-progress relation. & The action instead produces compatible progress.\\
\bottomrule
\end{tabularx}
\caption{Record-level evidence used before episode reduction. Evidence is
assigned relative to the relation encoded by the record; the same environment
outcome can therefore support one relation while contradicting another.}
\label{tab:memco-evidence-reduction}
\end{table*}

\subsubsection{Deterministic Evidence Assignment}
\label{app:memco-evidence-assignment}

Evidence signs are assigned from normalized interaction signals rather than by
an additional LLM call. Each environment interface maps its native observations
and outcomes to the common information required by the method, including action
admissibility, state or task progress, failure, and terminal success. These
normalized signals are interpreted relative to the relation encoded by the
record to produce $e_j(p)$ and the optional no-progress marker. The evidence
semantics, episode reduction, aggregation, and Wilson promotion rule are shared
across evaluation environments; only the interface that exposes the normalized
state, action, and feedback fields depends on the environment.

\FloatBarrier

\subsection{Progressive Memory Collaboration Details}
\label{app:memco-collaboration-details}
\label{app:abstraction-details}

\subsubsection{Role-Level Abstraction}
\label{app:memco-abstraction-example}
\label{app:memco-role-abstraction}

Before global aggregation, \modelname{} applies
Algorithm~\ref{alg:memco-abstract-details} only to selected records
$p\in\mathcal P_i^l\cup\mathcal E_i^l$. The abstraction preserves the relation
that makes a record useful---its type, applicability conditions, action order or
state--action relation, and outcome---while removing details that identify only
the source episode or scene. When a concrete entity has a supported semantic
role in the task representation, the stored entity can be replaced by that
role; otherwise it remains an applicability constraint that must be matched
before use. Records for which no transferable abstract form can be constructed
remain local. Neither $G_i^l$ nor $\mathcal R_i^l$ is promoted as a global copy.

\paragraph{Type preservation and shared-record fields.}
Global memory is therefore a collection of promoted abstractions of
$\mathcal P_i^l$ and $\mathcal E_i^l$, not a duplicate of the complete local
memory. For a selected record $p$, $\bar p=\operatorname{Abstract}(p)$ retains
the fields that define the transferable relation, while source-episode and
scene-specific anchors are removed. A promoted entry additionally carries its
aggregated evidence, promotion score, and source provenance as specified in
Algorithm~\ref{alg:promotion-merge}. Abstraction preserves operational
conditions rather than turning a conditional relation into an unconditional
rule. Consequently, a shared record is still checked and grounded against the
receiving agent's current decision context before it is used.

Each abstracted pattern or actionable record is assigned the deterministic
global key
\begin{equation}
\kappa_g(\bar p)
=\operatorname{StoredRecordID}
\bigl(\operatorname{type}(\bar p),
\operatorname{sig}(\operatorname{relation}(\bar p))\bigr),
\label{eq:memco-abstract-key}
\end{equation}
where $\operatorname{relation}(\bar p)$ denotes the retained applicability
condition, ordered actions or state--action relation, and outcome fields for
the record. Evidence counts, display-only summaries, source sets, and removed
local anchors do not participate in the identifier. Thus records are merged
only when their transferable relation matches exactly after abstraction.

Consider a schematic household episode in which an agent takes a kettle from a
counter, fills it at a sink, and heats it on a stove burner. The local
trajectory is first converted into a workflow record and is then
abstracted into a role-level representation before promotion.
Table~\ref{tab:memco-abstraction-example} shows which information is retained
and which local anchors are removed.

\begin{table*}[htbp]
\centering
\begin{tabularx}{\linewidth}{@{}>{\raggedright\arraybackslash}p{0.24\linewidth}>{\raggedright\arraybackslash}X@{}}
\toprule
\textbf{Field} & \textbf{Value}\\
\midrule
Local trajectory
& go to counter 2; take kettle 1; go to sink 1; fill kettle 1; go to
stoveburner 1; heat kettle 1.\\
Record type
& workflow.\\
Condition
& target\_object is fillable and heatable.\\
Action sequence
& find(source = water); fill(target\_object, water);
move(target\_object, heating\_device); heat(target\_object).\\
Outcome
& target\_object becomes heated.\\
Removed local details
& kettle 1; counter 2; sink 1; stoveburner 1; local graph-node ids; raw
observation strings.\\
Global workflow fields: metadata
& type = workflow; condition = fillable-heatable target\_object;
outcome = heated target\_object.\\
Global workflow fields: actions
& action 1 = find water source; action 2 = fill target; action 3 = move to
heating device; action 4 = heat target.\\
\bottomrule
\end{tabularx}
\caption{Schematic household abstraction example. The action strings
illustrate the existing abstraction fields rather than a verbatim benchmark
rollout. Concrete entities are removed, while the workflow type, abstract
condition, action order, and outcome relation are retained.}

\label{tab:memco-abstraction-example}
\label{tab:abstraction-example}
\end{table*}

\begin{algorithm*}[!htbp]
\caption{$\operatorname{Abstract}$: {Role-Level Abstraction}}
\label{alg:memco-abstract-details}
\small
\begin{algorithmic}[1]
\STATE {\textbf{Input:} Shareable local record
$p\in\mathcal P_i^l\cup\mathcal E_i^l$ and its record-type abstraction
schema}
\STATE {\textbf{Return:} Abstract form $\bar p$ with attached
occurrence and episode evidence, or $\varnothing$ when $p$ remains local}
\STATE {Initialize $\bar p$ from the transferable pattern or actionable
relation fields of $p$.}
\IF{no transferable abstract form is defined for $p$}
    \STATE {\textbf{Return} $\varnothing$.}
\ENDIF
\STATE {Preserve the record type, applicability condition, action order, and
outcome relation defined by $p$.}
\STATE {Replace a stored entity by a task role when the task representation
supports that mapping; otherwise retain it as an applicability constraint.}
\STATE {Remove the designated episode, scene, layout, raw-observation, and
local graph-reference fields for this record type.}
\STATE {Preserve the retained operational relation, action order,
occurrence statistics, source sets, and source-scoped episode ledger.}
\STATE {Compute $\kappa_g(\bar p)$ using
Eq.~\ref{eq:memco-abstract-key}.}
\STATE {\textbf{Return} $\bar p$.}
\end{algorithmic}
\end{algorithm*}
\FloatBarrier
\subsubsection{Pre-Promotion Record Checks}
\label{app:memco-promotion-eligibility}

The Wilson test is applied only to records that first pass two deterministic
checks. Let $A_{\mathrm{str}}(\bar p)$ denote structural completeness and
$A_{\mathrm{info}}(\bar p)$ denote the low-information check. A record proceeds
to Wilson scoring only when $A_{\mathrm{str}}=A_{\mathrm{info}}=1$; all
shareable record types then use the same Wilson formula and threshold
$\lambda$.

\begin{table*}[htbp]
\centering
\small
\begin{tabularx}{\linewidth}{@{}p{0.27\linewidth}X@{}}
\toprule
\textbf{Record relation} & \textbf{Required transferable content}\\
\midrule
Workflow & A nonempty applicability condition or task role, an ordered action
or phase sequence, and an outcome.\\
Precondition & A condition, an action relation, and evidence of compatible
progress.\\
Repair / failure relation & A triggering failure or no-progress condition and
the corresponding repair or avoidance relation.\\
Actionable knowledge & A usable condition--action--outcome relation that can be
bound to the receiving decision context.\\
\bottomrule
\end{tabularx}
\caption{Structural completeness checks applied before Wilson promotion. The
checks retain only records with sufficient transferable content; incomplete,
source-bound, or uninformative records remain local.}
\label{tab:memco-promotion-eligibility}
\end{table*}

The low-information check removes records whose abstract form contains no
usable condition, relation, or outcome beyond generic interaction advice. These
checks operate before evidence calibration and do not change the Wilson score
or threshold. Source provenance and retrieval relevance are retained for later
aggregation or decision-time use rather than used as alternative promotion
thresholds.

\subsubsection{Global Exact-Key Merge of Promoted Records}
\label{app:memco-merge-example}

Global construction does not use embedding similarity or an LLM to decide
whether two records are equivalent. It first constructs the applicable abstract
form for each selected $p\in\mathcal P_i^l\cup\mathcal E_i^l$ and then
groups the resulting pattern and actionable representations by the key in
Eq.~\ref{eq:memco-abstract-key}. Evidence counts do not
participate in identifier matching. The Wilson implementation rebuilds each
snapshot from all current local memories, so occurrence statistics and source
sets are recomputed from those local records rather than cumulatively added to
an earlier global report. Episode reduction is completed before global
aggregation, and Eq.~\ref{eq:promotion-count-aggregation} then sums the
resulting episode-reduced signed counts for records with the same exact
abstract key. Occurrence counts are therefore never substituted for the
promotion counts. The accompanying diagnostic is aggregated as
\begin{equation}
n^0(\kappa)=
\sum_{\substack{i\in[N],\,p\in\mathcal P_i^l\cup\mathcal E_i^l:
\kappa_g(\operatorname{Abstract}(p))=\kappa}}
n_i^0(p),
\label{eq:memco-global-stall-aggregation}
\end{equation}
where $\kappa$ is the global record key. Only
$n^+(\kappa)+n^-(\kappa)$ enters the Wilson sample size. This is the same
episode-reduction-then-aggregation order specified by
Eq.~\ref{eq:promotion-count-aggregation}.

After aggregation, each record is evaluated by the structural,
low-information, and Wilson gates. Because every rebuild starts from an empty
global memory, a record that no longer passes is absent from the new snapshot;
every retained global record therefore satisfies the promotion rule under the
current local evidence.

The exact-record-key rule prevents overly broad merging. A
filling workflow and a cleaning workflow may both mention a water source, but
they are not the same pattern entry. Tables~\ref{tab:memco-positive-merge}
and~\ref{tab:memco-negative-merge} retain concrete and global examples for
readability and illustrate normalized workflow-identifier matching rather
than surface semantic overlap.

\begin{table}[htbp]
\centering
\begin{tabularx}{\linewidth}{@{}>{\raggedright\arraybackslash}p{0.21\linewidth}>{\raggedright\arraybackslash}X>{\raggedright\arraybackslash}X@{}}
\toprule
\textbf{Field} & \textbf{Kitchen record} & \textbf{Bathroom record}\\
\midrule
Concrete entities
& cloth, kitchen sink, countertop.
& soap, bathroom basin, vanity.\\
Record type
& workflow.
& workflow.\\
Condition after abstraction
& cleanable target\_object.
& cleanable target\_object.\\
Action sequence after abstraction
& find water source; clean target\_object; place in goal\_destination.
& find water source; clean target\_object; place in goal\_destination.\\
Outcome after abstraction
& stored clean target\_object.
& stored clean target\_object.\\
\bottomrule
\end{tabularx}
\par\smallskip
\noindent\textbf{Merge decision.}
Merged as one promoted workflow record in $\mathcal M^g$. The two
$\mathcal P_i^l$ records have the same $\kappa_g$ after induction-time role
normalization. Snapshot construction
recomputes the source set and evidence totals from all current local records.
\caption{Positive merge example. Different local objects can map to the same
global workflow when the record fields match exactly.}
\label{tab:memco-positive-merge}
\label{tab:merge-positive-example}
\end{table}

\begin{table}[htbp]
\centering
\begin{tabularx}{\linewidth}{@{}>{\raggedright\arraybackslash}p{0.21\linewidth}>{\raggedright\arraybackslash}X>{\raggedright\arraybackslash}X@{}}
\toprule
\textbf{Field} & \textbf{Filling workflow} & \textbf{Cleaning workflow}\\
\midrule
Condition
& fillable target\_object.
& cleanable target\_object.\\
Action sequence
& find water source; fill target\_object.
& find water source; clean target\_object.\\
Outcome
& filled target\_object.
& clean target\_object.\\
\bottomrule
\end{tabularx}
\par\smallskip
\noindent\textbf{Merge decision.}
Not merged. The entries share the phrase \textit{water source}, but their
normalized workflow identifiers differ because their action sequences and
outcomes differ.
\caption{Negative merge example. Surface overlap is insufficient for merging.}
\label{tab:memco-negative-merge}
\label{tab:merge-negative-example}
\end{table}

Algorithm~\ref{alg:memco-global-merge-details} implements
$\operatorname{GlobalMerge}$ in Eq.~\ref{eq:global-exact-merge}.
Evidence is aggregated before promotion; this update stores the current
totals under the appropriate global record key. In contrast,
$\operatorname{LocalMerge}$ acts on graph elements
(Algorithm~\ref{alg:memco-local-graph-merge-details}).
\begin{algorithm*}[!htbp]
\caption{$\operatorname{GlobalMerge}$: Exact-Key Update of Promoted Global Records}
\label{alg:memco-global-merge-details}
\small
\begin{algorithmic}[1]
\STATE {\textbf{Input:} Global memory $\mathcal M^g$ and
abstract aggregates $\bar p$ that pass the pre-promotion checks and satisfy
$W(\bar p)\geq\lambda$, carrying the
aggregated evidence from Eq.~\ref{eq:promotion-count-aggregation}}
\STATE {\textbf{Return:} Updated global memory $\mathcal M^g$}
\FOR{each promoted aggregate $\bar p$}
    \STATE {Compute its global record key $\kappa_g(\bar p)$ using
    Eq.~\ref{eq:memco-abstract-key}.}
    \IF{no stored record has key $\kappa_g(\bar p)$}
        \STATE {Insert $\bar p$, its current source set, promotion
        score, and aggregated evidence
        into $\mathcal M^g$.}
    \ELSE
        \STATE {Keep one record for the matching global key and
        replace its mutable fields with the current source set, promotion
        score, occurrence statistics, and episode-reduced counts
        $n^+(\bar p)$, $n^-(\bar p)$, and $n^0(\bar p)$.}
    \ENDIF
\ENDFOR
\STATE {\textbf{Return} $\mathcal M^g$.}
\end{algorithmic}
\end{algorithm*}

\paragraph{Snapshot realization.}
Algorithm~\ref{alg:promotion-merge} constructs the global snapshot from the
current local memories by reducing episode evidence, grouping records by the
exact global key $\kappa_g$, and applying the pre-promotion and Wilson checks.
A complete rebuilt snapshot is then made available to subsequent decisions.
The acquisition and evaluation lifecycle is summarized in
Appendix~\ref{app:memco-evaluation-scope}.

\FloatBarrier

\subsection{Query-Guided Memory Support Details}
\label{app:memco-retrieval-details}

\subsubsection{Local Graph Query Anchoring}
\label{app:memco-local-graph-retrieval}

The local graph is queried as a bounded structural index. State anchors are
matched to the current query using normalized state content together with
location, visible or held entities, task progress, and goal-related cues when
available. From each retained anchor, retrieval follows only a bounded local
neighborhood of the observed state--action--feedback--next-state transition.
In the serialized graph, this is the corresponding
state--action--next-state backbone together with its attached feedback fields.
The resulting transition entries summarize concrete local evidence that can
support the current decision; retrieval does not search arbitrary paths or
construct a new long workflow from the merged graph.

Graph-derived entries and records from $\mathcal P_i^l$, $\mathcal E_i^l$,
and $\mathcal R_i^l$ enter the same bounded local selection stage. Their graph
provenance can strengthen a record when the retrieved state, action, or outcome
matches the current query, while incompatible or stale context lowers its
priority. All such entries remain subject to the common state, progress, goal,
and action-compatibility checks below.

\subsubsection{Memory Entry Construction, Matching, and Ranking}
\label{app:memco-relevance-computation}

At each decision step, \modelname{} constructs the query defined in
Section~\ref{sec:3.5} from the current goal, observation, available actions,
and interaction history. The same retrieval function $\mathbb R$ is called
independently on local and global memory according to
Eq.~\ref{eq:two-streams}. Local retrieval draws from the local records and
bounded graph neighborhoods described above, whereas global retrieval draws
from promoted abstract records in $\mathcal M^g$.

Candidate entries are evaluated through the three complementary relevance
facets in Eq.~\ref{eq:memco-relevance}:
\begin{equation*}
r(m,\boldsymbol q_t)
=r_x(m,\boldsymbol q_t)+r_p(m,\boldsymbol q_t)+r_g(m,\boldsymbol q_t).
\end{equation*}
Here $r_x$ measures consistency with the current state and available actions,
$r_p$ measures compatibility with task progress, and $r_g$ measures alignment
with the goal and its functional roles. These facets score the same memory
entry from complementary views; they are not independent pass/fail gates.
Each facet is computed by deterministic comparisons over the normalized
fields available to the entry: compatible matches contribute positively,
explicit mismatches reduce the score, and unavailable optional fields
contribute no update. The reported outer weights of the three facets are one.
These comparisons use the common normalized representation rather than a
benchmark-specific retrieval objective.

\begin{center}
{\small
\setlength{\tabcolsep}{4pt}
\renewcommand{\arraystretch}{1.12}
\begin{tabularx}{\linewidth}{@{}>{\raggedright\arraybackslash}p{0.12\linewidth}>{\raggedright\arraybackslash}p{0.20\linewidth}>{\raggedright\arraybackslash}X@{}}
\toprule
\textbf{Facet} & \textbf{Main emphasis} & \textbf{Examples of matching evidence}\\
\midrule
$r_x$ & Current state and action context
& State or location consistency, visible or held entities, applicable
conditions, and compatibility with currently available actions.\\[4pt]
$r_p$ & Task and execution progress
& Task stage, completed or remaining subgoals, failure context, and whether the
stored procedure is appropriate at the current decision phase.\\[4pt]
$r_g$ & Goal and functional roles
& Alignment between stored object, destination, tool, or action roles and the
current task goal.\\
\bottomrule
\end{tabularx}}
\end{center}

Applicability constraints are handled before ranking: an entry is removed when
a required condition contradicts the current state or when a required action
cannot be matched to $\mathcal A_t$. These hard compatibility checks are
distinct from the soft preferences used to rank the remaining entries.

The joint relevance score is then combined with stored evidence, current
action binding, source-specific context, and soft conflict penalties to obtain
the full rank used for bounded selection:
\begin{equation*}
s_{\mathrm{rank}}(m,\boldsymbol q_t)
=r(m,\boldsymbol q_t)+\Delta_{\mathrm{ev}}(m)
+\Delta_{\mathrm{bind}}(m,\boldsymbol q_t)
+\Delta_{\mathrm{src}}(m,\boldsymbol q_t)
-p_{\mathrm{soft}}(m,\boldsymbol q_t).
\end{equation*}
Here $\Delta_{\mathrm{ev}}$ summarizes the occurrence-level evidence retained
with the entry, $\Delta_{\mathrm{bind}}$ reflects compatibility between the
stored operation or semantic roles and the currently available actions,
$\Delta_{\mathrm{src}}$ incorporates source-specific supporting context
(such as query-matched local graph context for local entries or retained
applicability and role context for global entries), and
$p_{\mathrm{soft}}$ collects non-disqualifying conflicts such as stale or
weakly mismatched guidance. These terms are deterministic adjustments over
the same normalized fields used by the common retrieval interface.
Occurrence-level statistics can contribute to retrieval ranking, whereas the
episode-reduced counts $n^+,n^-$ are reserved for Wilson promotion. The Wilson
score therefore controls membership in global memory rather than being added
directly to the online relevance score.

\subsubsection{Bounded Selection}
\label{app:memco-selection-details}
Algorithm~\ref{alg:memco-bounded-selection} combines source-aware pool
construction, compatibility filtering, joint relevance, and bounded selection.
The caps $k^l$ and $k^g$ bound the retrieved sets. Before rank-based filling,
selection preserves at most one strongest supporting match for each relevance
facet, when such a match is available, so that one dominant score does not
remove complementary state-, progress-, or goal-related support. A supporting
facet match is one with $r_d(m,\boldsymbol q_t)>0$ for
$d\in\{x,p,g\}$. Remaining positions are filled by decreasing full rank.
Composition may subsequently reject or deduplicate selected entries after
current-environment grounding.

\begin{algorithm*}[!htbp]
\caption{$\mathbb R$: Source-Aware Retrieval, Filtering, and Bounded Selection}
\label{alg:memco-bounded-selection}
\footnotesize
\begin{algorithmic}[1]
\STATE {\textbf{Input:} Query $\boldsymbol q_t$, one memory
$\mathcal M\in\{\mathcal M_i^l,\mathcal M^g\}$, and a nonnegative cap $k$}
\STATE {\textbf{Return:} Selected memory entries $\mathcal S$}
\STATE Initialize the retrieval pool $\mathcal B\leftarrow\emptyset$.
\IF{$\mathcal M=\mathcal M_i^l=(G_i^l,\mathcal P_i^l,
\mathcal E_i^l,\mathcal R_i^l)$}
    \STATE Add records from $\mathcal P_i^l$, $\mathcal E_i^l$, and
    $\mathcal R_i^l$ to $\mathcal B$.
    \STATE Add bounded transition entries from query-matched neighborhoods
    of $G_i^l$, preserving the paper-level state--action--feedback--next-state
    semantics and the attached serialized feedback.
\ELSE
    \STATE Add the promoted abstract records in $\mathcal M^g$ to
    $\mathcal B$.
\ENDIF
\STATE Initialize the eligible scored pool $\mathcal V\leftarrow\emptyset$.
\FOR{each memory entry $m\in\mathcal B$}
    \STATE Compute $r_x(m,\boldsymbol q_t)$,
    $r_p(m,\boldsymbol q_t)$, and $r_g(m,\boldsymbol q_t)$.
    \IF{$m$ contradicts a required current condition or a required action
    cannot be matched to $\mathcal A_t$}
        \STATE Exclude $m$ from $\mathcal V$.
    \ELSE
        \STATE Compute the full rank from joint relevance, stored evidence,
        current action binding, source context, and soft conflicts.
        \STATE Add $m$ and its facet scores, full rank, provenance, and
        available action/graph evidence to $\mathcal V$.
    \ENDIF
\ENDFOR
\STATE Deduplicate equivalent entries in $\mathcal V$, keeping the
higher-ranked applicable entry, and sort by decreasing full rank.
\IF{$|\mathcal V|\leq k$}
    \STATE \textbf{Return} $\mathcal V$.
\ENDIF
\STATE Initialize $\mathcal S\leftarrow[\,]$.
\FOR{each relevance facet $d\in\{x,p,g\}$}
    \IF{$|\mathcal S|<k$ and an unselected entry satisfies
    $r_d(m,\boldsymbol q_t)>0$}
        \STATE Append the unselected entry with the largest $r_d$,
        breaking ties by full rank.
    \ENDIF
\ENDFOR
\FOR{each remaining entry in decreasing full-rank order}
    \IF{$|\mathcal S|<k$}
        \STATE Append the entry to $\mathcal S$ without duplication.
    \ENDIF
\ENDFOR
\STATE Retain provenance, facet scores, full rank, and action/graph evidence
for composition.
\STATE \textbf{Return} $\mathcal S$.
\end{algorithmic}
\end{algorithm*}

\FloatBarrier

\subsubsection{Composition and Rendering}
\label{app:memco-composition-details}
Retrieval and composition serve different roles. Retrieval selects relevant
entries within each memory source, whereas composition determines how the
selected local and global entries should be used in the current decision. With
$\mathcal H=\{\mathrm{ground},\mathrm{workflow},\mathrm{reflection}\}$,
grounding uses current-state facts and graph relations; workflow support uses
plans, procedures, and preconditions; and reflection uses failure patterns,
repairs, and blocked actions. Entries whose content is incompatible with the
current role or decision context are removed.

For a global record, composition first binds its abstract object, destination,
tool, and action roles to the current query and available actions. A global
record is not emitted when a required binding cannot be established. Local
state constraints and concrete local grounding take precedence over an
incompatible abstract recommendation. The remaining support is deduplicated,
organized by role, and kept within a bounded context budget before rendering.
This stage therefore converts shared knowledge into current-environment support
rather than directly copying a stored global record into the prompt.

The renderer serializes the admitted support into the three memory-bearing
fields in Eq.~\ref{eq:adaptive-context}. It keeps only information needed for
the current decision and uses the current bindings for grounded global records
rather than source-environment anchors. The full prompt template is shown in
Appendix~\ref{app:memco-prompts}.

\begin{algorithm*}[!htbp]
\caption{$\operatorname{Compose}$: Query-Conditioned Local--Global Support Composition}
\label{alg:compose}
\small
\begin{algorithmic}[1]
\STATE \textbf{Input:} Selected memory entries $\mathcal S_t^l,\mathcal S_t^g$,
query $\boldsymbol q_t$, actions $\mathcal A_t$, roles
$\mathcal H=\{\mathrm{ground},\mathrm{workflow},\mathrm{reflection}\}$,
and the configured context budget
\STATE \textbf{Return:} Role-specific supports
$\{\mathcal U_t^h\}_{h\in\mathcal H}$
\STATE Initialize $\mathcal U_t^h\leftarrow[\,]$ for every
$h\in\mathcal H$.
\FOR{each memory entry $m\in\mathcal S_t^l\cup\mathcal S_t^g$ in retained
full-rank order}
    \STATE Preserve the source, full rank, facet scores, and available
    action/graph evidence attached to $m$.
    \IF{$m$ is an abstract global record}
        \STATE Bind its object, destination, tool, and action roles against
        $\boldsymbol q_t$ and $\mathcal A_t$.
        \IF{a required role cannot be grounded or its required action is
        unavailable}
            \STATE Exclude $m$ from the current composition.
        \ENDIF
    \ENDIF
    \IF{$m$ remains eligible after binding and compatibility checks}
        \STATE Determine the compatible role set
        $\mathcal H_m\subseteq\mathcal H$ from the record type and its current
        contextual trigger.
        \STATE Assign current-state facts and graph relations to
        $\mathrm{ground}$; plans, procedures, and preconditions to
        $\mathrm{workflow}$; and failures, repairs, and blocked actions to
        $\mathrm{reflection}$ whenever the corresponding role is in
        $\mathcal H_m$.
    \ENDIF
\ENDFOR
\STATE Remove duplicate support items within each role while retaining the
higher-ranked applicable item and its provenance.
\STATE Resolve action conflicts by discarding unavailable actions and by
letting current-state constraints and concrete local grounding override an
incompatible abstract recommendation.
\STATE Apply the fixed context budget to the remaining support in
deterministic full-rank order.
\STATE Retain the admitted memory-bearing support in the corresponding
$\mathcal U_t^h$ in deterministic rank order.
\STATE \textbf{Return} $\{\mathcal U_t^h\}_{h\in\mathcal H}$.
\end{algorithmic}
\end{algorithm*}

\begin{algorithm*}[!htbp]
\caption{$\operatorname{Render}$: Role-Specific Memory Context Rendering}
\label{alg:render}
\small
\begin{algorithmic}[1]
\STATE \textbf{Input:} Query $\boldsymbol q_t$ and composed supports
$\{\mathcal U_t^h\}_{h\in\mathcal H}$
\STATE \textbf{Return:} Memory context
$\mathcal C_t=[c_t^{\mathrm{ground}},c_t^{\mathrm{workflow}},
c_t^{\mathrm{reflection}}]$
\FOR{each support role
$h\in(\mathrm{ground},\mathrm{workflow},\mathrm{reflection})$}
    \STATE Order $\mathcal U_t^h$ by the deterministic composition order.
    \STATE Convert each selected memory entry into a concise instruction
    using only fields relevant to $\boldsymbol q_t$ and role $h$.
    \STATE For grounded global records, emit the current concrete bindings
    rather than removed scene-specific anchors.
    \STATE Deduplicate equivalent instructions and retain the highest-priority
    applicable instruction for each emitted memory line.
    \STATE Store the resulting text in $c_t^h$; set $c_t^h$ to the empty
    string when $\mathcal U_t^h$ is empty after composition.
\ENDFOR
\STATE If all three rendered fields are empty, return an empty memory
context.
\STATE Set $\mathcal C_t\leftarrow
[c_t^{\mathrm{ground}},c_t^{\mathrm{workflow}},
c_t^{\mathrm{reflection}}]$.
\STATE \textbf{Return} $\mathcal C_t$.
\end{algorithmic}
\end{algorithm*}

\FloatBarrier

\subsubsection{Concrete Retrieval Example}
\label{app:memco-retrieval-example}

Suppose the current task is to clean a bar of soap and place it in a vanity.
The current observation mentions a bathroom basin, and the environment exposes
an admissible action for cleaning the soap with the basin. The local record
in this illustrative example is actionable knowledge in $\mathcal E_i^l$;
the shared record is a promoted workflow from $\mathcal M^g$.
They support the same decision in different ways: the local relation grounds
an executable cleaning action, while the global workflow supplies the
clean-before-place ordering. The global roles are bound to the current soap,
basin, and vanity before use. The rejected kitchen-specific instruction illustrates a grounding conflict,
not an additional observed record in the global snapshot.

\begin{table}[htbp]
\centering
\small
\begin{tabularx}{\linewidth}{@{}>{\raggedright\arraybackslash}p{0.24\linewidth}>{\raggedright\arraybackslash}X@{}}
\toprule
\textbf{Field} & \textbf{Value}\\
\midrule
Current goal & Clean soap and put it in the vanity.\\
Current observation & Soap and a basin are visible in the bathroom.\\
Admissible action & \texttt{clean soap with basin}.\\
\midrule
Local type and record
& Actionable knowledge, $\mathcal E_i^l$: the basin can clean the soap;
the action relation matches the currently admissible cleaning action.\\
Local use
& Current-state and action compatibility support $c_t^{\mathrm{ground}}$.\\
\midrule
Global type and record
& Promoted workflow, $\mathcal M^g$: for a cleanable target,
find a water source; clean \texttt{target\_object}; place it in
\texttt{goal\_destination}. Outcome: a stored clean target.\\
Current role bindings
& \texttt{target\_object} $\mapsto$ soap; water source $\mapsto$ basin;
\texttt{goal\_destination} $\mapsto$ vanity.\\
Global use
& Goal-role and task-progress compatibility support
$c_t^{\mathrm{workflow}}$; the ordering does not replace local grounding.\\
\midrule
Rejected memory entry & \texttt{go to kitchen sink}.\\
Rejection reason
& The concrete kitchen location cannot be bound to the current bathroom
observation and available actions; the instruction is not used.\\
\bottomrule
\end{tabularx}
\caption{Schematic typed retrieval and composition example. Local actionable
knowledge supplies executable grounding, while a global workflow supplies
action order. Source, record type, and support role are shown separately.}
\label{tab:memco-retrieval-example}
\label{tab:retrieval-example}
\end{table}

\FloatBarrier
\subsection{Reported Configuration and Scope of the Details}
\label{app:memco-configuration}
The reported configuration uses $\alpha=0.05$, $\lambda=0.35$, retrieval caps
$k^l=k^g=5$, and unit outer weights for the three relevance facets. These are
the paper-level parameters governing promotion and bounded retrieval. Across
the evaluated benchmarks, the same paper-level pipeline is used for local
memory construction, abstraction and promotion, query-guided retrieval, and
composition. Environment interfaces provide the normalized state, action,
feedback, goal, progress, and supported task-role fields consumed by this
pipeline; benchmark-specific serialization details are not additional
components of the method. The paper-level memory context is the three-role
$\mathcal C_t$ in Eq.~\ref{eq:adaptive-context}; surrounding task-wrapper
context shown in Appendix~\ref{app:memco-prompts} is kept separate from this
local/global retrieval interface.

\FloatBarrier
\section{Extended Experimental Results}
\label{appendix:extended-results}

This section provides the numerical results underlying the mechanism
and sensitivity analyses in
Figs.~\ref{fig:memory-cost-budget-sensitivity},
\ref{fig:lambda-component-ablation}, and
\ref{fig:relevance-retrieval-ablation}.
Table~\ref{tab:12} reports the Wilson-threshold sweep,
Table~\ref{tab:component-ablationapp} combines the memory-scope,
promotion, composition, and memory-cost results,
Table~\ref{tab:app-budget-sensitivity} reports interaction-budget
sensitivity, and Table~\ref{tab:app-relevance-retrieval} gives the
absolute relevance-scoring and retrieval results.

Unless otherwise specified, the component, promotion, and retrieval
variants reuse the same frozen local-memory snapshot and differ only in
the evaluated component. These are controlled within-snapshot
comparisons rather than independent repetitions of the main experiments.
The interaction-budget sweep follows the budget-specific evaluation
protocol used in Fig.~\ref{fig:memory-cost-budget-sensitivity}(d).
Accuracy and success rates are reported in percent, and Steps denotes the
average number of environment interactions. Familiar and Held-out refer
to ALFWorld's official \texttt{valid\_seen} and
\texttt{valid\_unseen} layout splits, respectively. Collectively, the
controlled studies examine the three mechanisms emphasized by the method:
complementary local/global memory scope, evidence-aware global promotion, and
query-guided use of memory for the current decision.

\subsection{Experimental Reproduction Boundaries}
\label{app:memco-experimental-boundaries}

The main results and the controlled component studies serve different purposes.
Table~\ref{tab:memory-comparison-simplified} reports mean $\pm$ standard
deviation over three runs. The component, promotion, and retrieval studies in
this appendix instead reuse a fixed local-memory snapshot so that the evaluated
mechanism can be changed without recollecting experience. For threshold
sensitivity, only the promoted global snapshot is reconstructed from the same
local evidence; for retrieval and composition studies, the underlying memories
remain fixed.

The main experiments use the shared setting $\lambda=0.35$ across benchmarks,
while Table~\ref{tab:12} reports a controlled sensitivity sweep around this
operating point. The cross-setting significance analysis in
Appendix~\ref{appendix:statistical-significance} uses the displayed main-result
means as matched backbone--task blocks; it does not treat the repeated runs as
additional independent blocks.

\subsection{Sensitivity to the Wilson Promotion Threshold}
\label{app:memco-threshold-results}
Table~\ref{tab:12} reconstructs global memory at each threshold
while reusing the same local evidence. Increasing $\lambda$ from $0.25$ to
$0.50$ reduces the global bank from 735 to 294 records. Over this range, familiar-layout
accuracy remains between $96.10\%$ and $97.40\%$, and held-out-layout accuracy between
$97.50\%$ and $97.86\%$. The reported setting $\lambda=0.35$ uses 426 records,
$42.0\%$ fewer than $\lambda=0.25$, while attaining the highest familiar-layout accuracy
and tying the highest held-out-layout accuracy. The lower threshold nevertheless has
the fewest steps on both splits. The table therefore characterizes the
accuracy--compactness--efficiency trade-off rather than establishing that one
threshold optimizes every metric.

\begin{table*}[!htbp]
  \centering
  
  \small
  \caption{Sensitivity to the Wilson promotion threshold on
  ALFWorld with Qwen3-32B.
  All thresholds use the same frozen local-memory snapshot; only global
  memory is reconstructed.}
  \label{tab:12}
  \renewcommand{\arraystretch}{1}
  \setlength{\tabcolsep}{4pt}
  \begin{tabular}{@{}cc
  cccc@{}}
    \toprule
    \boldmath$\lambda$
      & \shortstack{\textbf{Global}\\\textbf{records}}
      & \shortstack{\textbf{Familiar}\\\textbf{Acc. (\%) $\uparrow$}}
      & \shortstack{\textbf{Held-out}\\\textbf{Acc. (\%) $\uparrow$}}
      & \shortstack{\textbf{Familiar}\\\textbf{Steps $\downarrow$}}
      & \shortstack{\textbf{Held-out}\\\textbf{Steps $\downarrow$}} \\
    \midrule
0.25 & 735 & \underline{96.43} & \textbf{97.86}
     & \textbf{9.63} & \textbf{11.34} \\
0.35 & 426 & \textbf{97.40} & \textbf{97.86}
     & \underline{9.74} & \underline{11.42} \\
0.50 & 294 & 96.10 & \underline{97.50}
     & 9.85 & 11.49 \\
    \bottomrule
  \end{tabular}
\end{table*}

\FloatBarrier
\subsection{Memory Scope, Promotion, and Query-Guided Composition}
\label{app:memco-component-results}
Table~\ref{tab:component-ablationapp} separates the roles of the two memory
sources, the promotion criterion, and the way retrieved records are used.
Local-only retains strong familiar-layout accuracy but has a larger familiar--held-out layout gap;
Global-only improves held-out-layout accuracy from $89.64\%$ to $97.14\%$ relative to
Local-only. The complete system improves on both isolated sources and uses
fewer steps on both splits. Simply concatenating local and global records
does not recover this benefit: relative to fixed L5+G5, the full method gains
10.71 and 16.07 percentage points on familiar- and held-out-layout accuracy, respectively.

The empirical-promotion row changes the admission score from $W(\bar p)$
to $\rho^+(\bar p)$ at the same $\lambda=0.35$, leaving the structural
eligibility checks and query-guided retrieval policy unchanged.
Wilson promotion improves accuracy by 2.27 and 1.43 percentage points and
reduces steps by 0.82 and 0.54 on the familiar-layout and held-out-layout splits, respectively.
Together with Figure~\ref{fig:memory-cost-budget-sensitivity}, this comparison
separates the quality of the promoted bank from merely retrieving more
records. These within-snapshot results support the contribution of each
component without attributing every change to a single underlying cause.

\begin{table*}[h]
  \centering
  
  \caption{\textbf{Ablation of memory organization, global promotion,
  and query-guided composition on ALFWorld with Qwen3-32B.}
  All variants use the same frozen local-memory snapshot.
  ``Global records'' denotes the size of promoted global memory.
  ``Avg.\ \#Mem'' and ``Avg.\ Mem.\ Tok.'' denote the average number of
  entries and memory-prompt tokens injected per retrieval call,
  respectively, averaged over the Familiar and Held-out splits.
  A dash indicates that no global memory is used.}
  \label{tab:component-ablationapp}

  \scriptsize
  \setlength{\tabcolsep}{3pt}
  \renewcommand{\arraystretch}{1.08}

  \resizebox{0.99\textwidth}{!}{
  \begin{tabular}{@{}lrrrrrrr@{}}
    \toprule
    \multirow{2}{*}{\textbf{Setting}}
      & \multirow{2}{*}{\textbf{Global records}}
      & \multicolumn{2}{c}{\textbf{Accuracy (\%) $\uparrow$}}
      & \multicolumn{2}{c}{\textbf{Steps $\downarrow$}}
      & \multirow{2}{*}{\textbf{Avg.\ \#Mem $\downarrow$}}
      & \multirow{2}{*}{\textbf{Avg.\ Mem.\ Tok. $\downarrow$}} \\
    \cmidrule(lr){3-4}
    \cmidrule(lr){5-6}
      & & \textbf{Familiar} & \textbf{Held-out}
      & \textbf{Familiar} & \textbf{Held-out}
      & & \\
    \midrule

    \multicolumn{8}{@{}l}{\textit{Memory scope and composition policy}} \\
    \addlinespace[1pt]

    Local-only
      & -- & 96.43 & 89.64 & 11.81 & 14.87 & 0.70 & 72.96 \\

    Global-only
      & 426 & 96.10 & 97.14 & 10.31 & 11.57 & 1.70 & 124.21 \\

    Local+Global, fixed concat (L3+G3)
      & 426 & 83.12 & 79.29 & 16.12 & 17.38 & 6.00 & 146.58 \\

    Local+Global, fixed concat (L5+G5)
      & 426 & 86.69 & 81.79 & 15.84 & 16.32 & 9.99 & 207.56 \\

    \addlinespace[2pt]
    \multicolumn{8}{@{}l}{\textit{Global-memory promotion criterion}} \\
    \addlinespace[1pt]

    Empirical support ratio $\rho^{+}$
      & 736 & 95.13 & 96.43 & 10.56 & 11.96 & 1.73 & 124.56 \\

    \midrule

    \textbf{MemCo (Wilson + query-guided)}
      & 426 & 97.40 & 97.86 & 9.74 & 11.42 & 1.70 & 123.23 \\

    \bottomrule
  \end{tabular}}
\end{table*}

\FloatBarrier

\subsection{Interaction-Budget Sensitivity}
\label{app:budget-sensitivity}

Table~\ref{tab:app-budget-sensitivity} reports the exact success rates
underlying Fig.~\ref{fig:memory-cost-budget-sensitivity}(d).
This experiment varies only the maximum interaction budget under its
budget-specific evaluation protocol.

\begin{table}[h]
  \centering
  \caption{\textbf{Sensitivity to the maximum interaction budget on
  ALFWorld with Qwen3-32B.}
  Success rates are reported for the Familiar and Held-out layout splits.}
  \label{tab:app-budget-sensitivity}

  \small
  \setlength{\tabcolsep}{6pt}
  \renewcommand{\arraystretch}{1.08}

  \begin{tabular}{llrr}
    \toprule
    \textbf{Method}
      & \textbf{Budget}
      & \textbf{Familiar}
      & \textbf{Held-out} \\
    \midrule

    \multirow{3}{*}{ICL}
      & 10 & 36.36 & 28.57 \\
      & 30 & 79.72 & 79.69 \\
      & 50 & 92.21 & 95.71 \\

    \addlinespace[2pt]

    \multirow{3}{*}{MemCo}
      & 10 & 65.58 & 51.43 \\
      & 30 & 96.10 & 97.50 \\
      & 50 & 96.43 & 100.00 \\

    \bottomrule
  \end{tabular}
\end{table}

\subsection{Relevance Scoring and Retrieval}
\label{app:relevance-retrieval-results}

Table~\ref{tab:app-relevance-retrieval} reports the absolute results
underlying Fig.~\ref{fig:relevance-retrieval-ablation}.
The main-text figure visualizes signed changes from the full MemCo row,
whereas the table reports the corresponding absolute accuracy and
trajectory length.

\begin{table*}[h]
  \centering
  \caption{\textbf{Absolute results for relevance-scoring and retrieval
  ablations on ALFWorld with Qwen3-32B.}
  In the component rows, $-r_i$ removes term $r_i$, whereas an isolated
  $r_i$ retains only that term. The all-zero control is included for
  completeness but omitted from the main-text visualization.}
  \label{tab:app-relevance-retrieval}

  \small
  \setlength{\tabcolsep}{7pt}
  \renewcommand{\arraystretch}{1.08}

  \begin{tabular}{@{}lrrrr@{}}
    \toprule
    \multirow{2}{*}{\textbf{Setting}}
      & \multicolumn{2}{c}{\textbf{Accuracy (\%) $\uparrow$}}
      & \multicolumn{2}{c}{\textbf{Steps $\downarrow$}} \\
    \cmidrule(lr){2-3}
    \cmidrule(lr){4-5}
      & \textbf{Familiar}
      & \textbf{Held-out}
      & \textbf{Familiar}
      & \textbf{Held-out} \\
    \midrule

    \multicolumn{5}{@{}l}{\textit{Reference configuration}} \\
    \textbf{Full MemCo} $(r_x+r_p+r_g)$
      & 97.40 & 97.86 & 9.74 & 11.42 \\

    \addlinespace[2pt]
    \multicolumn{5}{@{}l}{\textit{Remove one relevance term}} \\
    $-r_x$ (retain $r_p+r_g$)
      & 96.75 & 97.14 & 10.37 & 11.92 \\
    $-r_p$ (retain $r_x+r_g$)
      & 96.43 & 97.14 & 10.42 & 11.69 \\
    $-r_g$ (retain $r_x+r_p$)
      & 96.75 & 97.14 & 10.42 & 11.92 \\

    \addlinespace[2pt]
    \multicolumn{5}{@{}l}{\textit{Single-term relevance}} \\
    $r_x$ only
      & 96.43 & 97.50 & 10.30 & 11.74 \\
    $r_p$ only
      & 97.08 & 97.50 & 10.36 & 11.98 \\
    $r_g$ only
      & 96.75 & 95.71 & 10.39 & 11.95 \\

    \addlinespace[2pt]
    \multicolumn{5}{@{}l}{\textit{Additional control}} \\
    $r=0$
      & 96.43 & 97.14 & 10.37 & 11.96 \\

    \addlinespace[2pt]
    \multicolumn{5}{@{}l}{\textit{Alternative retrieval strategy}} \\
    Cosine similarity
      & 96.10 & 97.14 & 10.50 & 11.80 \\
    BM25
      & 95.78 & 97.50 & 10.53 & 11.75 \\

    \bottomrule
  \end{tabular}
\end{table*}

\FloatBarrier
\section{Theoretical Analysis}
\label{app:memco-theory}

This section analyzes two properties of \modelname's global promotion mechanism.
We first prove the finite-evidence correction stated in
Theorem~\ref{thm:wilson-error-bound} and derive sharper asymptotic and
threshold implications. We then characterize the stability of the
episode-level evidence aggregation used before Wilson promotion.
Throughout this section, $n^+$ and $n^-$ denote the episode-reduced
promotion counts defined in Appendix~\ref{app:memco-evidence-counts}, rather than the
occurrence-level statistics used for retrieval. Wilson promotion uses only
these signed episode-reduced counts through
$s=n^++n^-$ and $\rho^+=n^+/s$ when $s>0$; neither $c^\omega$ nor the
diagnostic $n^0$ enters the Wilson score.

\subsection{Wilson Finite-Evidence Correction}
\label{app:wilson-analysis}
\begin{proof}[Proof of Theorem~\ref{thm:wilson-error-bound}]
For brevity, let
$s=s(\bar p)$, $\rho=\rho^+(\bar p)$, and $W=W(\bar p)$.
The Wilson lower bound in Eq.~\ref{eq:wilson} is equivalently the
smaller root of
\begin{equation}
(s+z^2)W^2-(2s\rho+z^2)W+s\rho^2=0.
\label{app_eq:wilson-quadratic}
\end{equation}
Indeed, solving Eq.~\ref{app_eq:wilson-quadratic} for $W$ recovers
exactly the closed form in Eq.~\ref{eq:wilson}.

Rearranging Eq.~\ref{app_eq:wilson-quadratic} gives
\begin{equation}
s(\rho-W)^2
=
z^2W(1-W).
\label{app_eq:wilson-score-identity}
\end{equation}
Since $W$ is the lower endpoint and therefore $W\leq\rho$,
\begin{equation}
\epsilon_W(\bar p)
=
\rho^+(\bar p)-W(\bar p)
=
z\sqrt{
\frac{W(\bar p)(1-W(\bar p))}
{s(\bar p)}
}.
\label{app_eq:wilson-identity}
\end{equation}
For $\rho^+(\bar p)=0$, Eq.~\ref{eq:wilson} gives
$W(\bar p)=0$, so the same identity holds trivially.

Finally, since $0\leq W(\bar p)\leq1$ and
$W(\bar p)(1-W(\bar p))\leq1/4$,
\begin{equation}
0\leq\epsilon_W(\bar p)
\leq
\frac{z}{2\sqrt{s(\bar p)}}.
\label{app_eq:wilson-bound}
\end{equation}
The bound is independent of $\rho^+(\bar p)$, establishing the uniform
$O(s(\bar p)^{-1/2})$ correction stated in
Theorem~\ref{thm:wilson-error-bound}.
\end{proof}

\begin{corollary}[Exact promotion threshold and required evidence]
\label{app_cor:wilson-threshold}
For $s=s(\bar p)>0$, $z>0$, and $0<\lambda<1$, the Wilson gate satisfies
\begin{equation}
W(\bar p)\geq\lambda
\quad\Longleftrightarrow\quad
\rho^+(\bar p)\geq
\lambda+z\sqrt{\frac{\lambda(1-\lambda)}{s(\bar p)}}.
\label{app_eq:wilson-threshold}
\end{equation}
Equivalently, for integer signed evidence counts,
\begin{equation}
n^+(\bar p)\geq
\left\lceil s(\bar p)\lambda+
z\sqrt{s(\bar p)\lambda(1-\lambda)}\right\rceil.
\label{app_eq:wilson-count-threshold}
\end{equation}
For a fixed empirical rate $\rho^+(\bar p)>\lambda$, this is equivalent to
\begin{equation}
s(\bar p)\geq
\frac{z^2\lambda(1-\lambda)}{(\rho^+(\bar p)-\lambda)^2}.
\label{app_eq:wilson-sample-threshold}
\end{equation}
No finite positive evidence size passes when $\rho^+(\bar p)\leq\lambda$.
At the all-supporting boundary $n^-(\bar p)=0$, the minimum integer
evidence size is
\begin{equation}
s_{\min}=\left\lceil\frac{z^2\lambda}{1-\lambda}\right\rceil.
\label{app_eq:wilson-all-supporting-minimum}
\end{equation}
Below this size, no supporting rate can pass the Wilson gate. These
conditions concern the Wilson gate only; structural and information
eligibility are still required for global promotion.
\end{corollary}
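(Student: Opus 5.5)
We need to prove the Corollary. The plan is to work directly with the quadratic characterization of $W$ established in the proof of Theorem~\ref{thm:wilson-error-bound}, namely that $W$ is the smaller root of $f(w)=(s+z^2)w^2-(2s\rho+z^2)w+s\rho^2$, or equivalently of $g(w)=s(\rho-w)^2-z^2w(1-w)$.

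\textbf{Step 1: the root structure of $g$.} The polynomial $g$ is an upward parabola with leading coefficient $s+z^2>0$. At the endpoints,
\[
g(0)=s\rho^2\ge 0,\qquad g(1)=s(1-\rho)^2\ge 0 .
\]
Moreover $g(\rho)=-z^2\rho(1-\rho)\le 0$. Hence both roots lie in $[0,1]$, with the smaller root $W$ in $[0,\rho]$ and the larger root in $[\rho,1]$.

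\textbf{Step 2: the exact threshold characterization.} For $\lambda\in(0,1)$, we have $W\ge\lambda$ if and only if $\lambda\le W$. Since $g$ is positive to the left of its smaller root, this holds if and only if $g(\lambda)\ge 0$ and $\lambda\le\rho$. The second condition is what selects the left branch and rules out $\lambda$ lying at or beyond the larger root. Expanding, $g(\lambda)\ge0$ reads $s(\rho-\lambda)^2\ge z^2\lambda(1-\lambda)$. Combined with $\rho\ge\lambda$, taking square roots gives
\[
\rho\ge\lambda+z\sqrt{\lambda(1-\lambda)/s}.
\]
The converse direction follows by reversing these implications. Because $z\sqrt{\lambda(1-\lambda)/s}>0$, this condition forces $\rho>\lambda$. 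Therefore no finite positive $s$ passes when $\rho\le\lambda$.

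\textbf{Step 3: the count form.} Multiplying the threshold by $s$ and substituting $n^+=s\rho$ gives
\[
n^+\ge s\lambda+z\sqrt{s\lambda(1-\lambda)}.
\]
Since $n^+$ is an integer, this is equivalent to $n^+$ being at least the ceiling of the right-hand side.

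\textbf{Step 4: the sample-size form.} For fixed $\rho>\lambda$, rearranging $s(\rho-\lambda)^2\ge z^2\lambda(1-\lambda)$ yields $s\ge z^2\lambda(1-\lambda)/(\rho-\lambda)^2$ directly.

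\textbf{Step 5: the all-supporting minimum.} At the boundary $n^-=0$ we have $\rho=1$, and the sample-size condition becomes $s\ge z^2\lambda(1-\lambda)/(1-\lambda)^2=z^2\lambda/(1-\lambda)$. Since $s$ is an integer, this gives $s_{\min}$ as stated.

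To justify the claim that below $s_{\min}$ no supporting rate can pass, I need monotonicity of $W$ in $\rho$ for fixed $s$. I will argue it from the threshold form of Step 2: the right-hand side $\lambda+z\sqrt{\lambda(1-\lambda)/s}$ does not depend on $\rho$. So if $\rho=1$ fails the inequality, every $\rho\le1$ fails it as well.

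\textbf{Main obstacle.} The one delicate point is the branch selection in Step 2, that is, ensuring that $g(\lambda)\ge0$ together with $\lambda\le\rho$ characterizes exactly $\lambda\le W$ rather than $\lambda$ lying beyond the larger root. Step 1, which places $\rho$ between the two roots, handles this. The degenerate case $\rho=0$, where $W=0<\lambda$ and the threshold fails, is covered by the same argument.
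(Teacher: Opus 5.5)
Your proposal is correct and follows essentially the paper's route: both take $W$ as the smaller root of $s(\rho-w)^2-z^2w(1-w)$ and place $\rho$ between the two roots, so that $W\geq\lambda$ reduces to $s(\rho-\lambda)^2\geq z^2\lambda(1-\lambda)$ together with $\lambda\leq\rho$, and both obtain the count, sample-size, and $s_{\min}$ statements from the requirement that the $\rho$-free threshold $\lambda+z\sqrt{\lambda(1-\lambda)/s}$ not exceed one. The only difference is boundary bookkeeping: you treat $\rho\in\{0,1\}$ uniformly via the non-strict $g(\rho)\leq 0$, whereas the paper uses the strict $Q_\rho(\rho)<0$ for $0<\rho<1$ and checks $\rho=0$ and $\rho=1$ directly; in your version, note that $\lambda\leq\rho$ alone does not exclude $\lambda=\rho$ being the larger root, a case that forces $\rho\in\{0,1\}$ and is therefore ruled out by $0<\lambda<1$.
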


\begin{proof}
Write $\rho=\rho^+(\bar p)$ and define
$Q_\rho(w)=s(\rho-w)^2-z^2w(1-w)$.
By Eq.~\ref{app_eq:wilson-quadratic}, its smaller root is $W$.
For $0<\rho<1$, $Q_\rho(\rho)<0$, so $\rho$ lies strictly between
the two roots. Consequently, when $\lambda\leq\rho$, the condition
$W\geq\lambda$ is equivalent to $Q_\rho(\lambda)\geq0$.
Taking the nonnegative square root gives
$\rho-\lambda\geq z\sqrt{\lambda(1-\lambda)/s}$.
When $\lambda>\rho$, both sides of the claimed equivalence are false
because $W\leq\rho$. The boundary $\rho=0$ gives $W=0$; at $\rho=1$,
$W=s/(s+z^2)$ gives the same equivalence directly.
Multiplying by $s$ and using integer $n^+$ proves
Eq.~\ref{app_eq:wilson-count-threshold}; rearranging for $\rho>\lambda$
proves Eq.~\ref{app_eq:wilson-sample-threshold}.
Finally, the right-hand side of Eq.~\ref{app_eq:wilson-threshold} must
not exceed one. This is equivalent to $s\geq z^2\lambda/(1-\lambda)$,
which is attained when all votes are supporting.
\end{proof}

\paragraph{Pointwise refinement.}
The uniform bound in Theorem~\ref{thm:wilson-error-bound} controls the
largest possible finite-evidence correction over supporting rates.
The dependence on a fixed supporting rate can be characterized more
precisely.

Substituting Eq.~\ref{eq:wilson} into
$\epsilon_W(\bar p)=\rho^+(\bar p)-W(\bar p)$ gives the exact closed form
\begin{equation}
\begin{aligned}
\epsilon_W(\bar p)
={}&
\frac{1}{1+z^2/s(\bar p)}
\Biggl[
\frac{z^2}{s(\bar p)}
\left(\rho^+(\bar p)-\frac{1}{2}\right)\\
&\qquad
+z
\left(
\frac{\rho^+(\bar p)(1-\rho^+(\bar p))}
{s(\bar p)}
+
\frac{z^2}{4s(\bar p)^2}
\right)^{1/2}
\Biggr].
\end{aligned}
\label{app_eq:wilson-closed-form}
\end{equation}

For a fixed $0<\rho^+(\bar p)<1$ and increasing $s(\bar p)$,
\begin{equation}
\begin{aligned}
&
\left[
\frac{\rho^+(\bar p)(1-\rho^+(\bar p))}
{s(\bar p)}
+
\frac{z^2}{4s(\bar p)^2}
\right]^{1/2}\\
&\qquad =
\left[
\frac{\rho^+(\bar p)(1-\rho^+(\bar p))}
{s(\bar p)}
\right]^{1/2}
+
O\!\left(s(\bar p)^{-3/2}\right).
\end{aligned}
\label{app_eq:wilson-sqrt-asymptotic}
\end{equation}
Substituting Eq.~\ref{app_eq:wilson-sqrt-asymptotic} into
Eq.~\ref{app_eq:wilson-closed-form}, together with
\[
\left(1+\frac{z^2}{s(\bar p)}\right)^{-1}
=
1+O\!\left(s(\bar p)^{-1}\right),
\]
gives
\begin{equation}
\epsilon_W(\bar p)
=
z\sqrt{
\frac{\rho^+(\bar p)(1-\rho^+(\bar p))}
{s(\bar p)}
}
+
O\!\left(s(\bar p)^{-1}\right).
\label{app_eq:wilson-pointwise}
\end{equation}

At the all-supporting boundary $\rho^+(\bar p)=1$,
Eq.~\ref{eq:wilson} reduces to
\begin{equation}
W(\bar p)
=
\frac{s(\bar p)}{s(\bar p)+z^2},
\qquad
\epsilon_W(\bar p)
=
1-W(\bar p)
=
\frac{z^2}{s(\bar p)+z^2}.
\label{app_eq:wilson-boundary}
\end{equation}
Thus, for fixed $0<\rho^+(\bar p)<1$, the leading correction is
$O(s(\bar p)^{-1/2})$, whereas at the all-supporting boundary it
decreases as $O(s(\bar p)^{-1})$. The
$O(s(\bar p)^{-1/2})$ statement in
Theorem~\ref{thm:wilson-error-bound} is therefore a uniform bound rather
than a claim that every fixed supporting-rate regime has the same
asymptotic behavior.
\paragraph{Interpretation and assumptions.}
Theorem~\ref{thm:wilson-error-bound} establishes an explicit
finite-evidence correction relative to the empirical supporting rate,
while Corollary~\ref{app_cor:wilson-threshold} gives its direct promotion
interpretation: under a fixed threshold, records supported by less
evidence require stronger empirical support.

The identities and bounds above are algebraic properties of the Wilson
score and do not themselves require independent observations.
The nominal one-sided confidence interpretation additionally relies on
the corresponding binomial sampling assumption. Wilson coverage is an
approximation under that model, not an exact finite-sample coverage guarantee. Episode-level reduction prevents repeated annotations within
one episode from directly multiplying the promotion evidence, but does
not establish independence or identical distribution across episodes.

\subsection{Stability of Episode-Level Evidence Aggregation}
\label{app:memco-aggregation-proof}

The preceding analysis characterizes how Wilson promotion treats a fixed pair
of aggregated signed counts. We next consider the stability of the procedure
that constructs those counts.

\begin{proposition}[Order and repetition invariance of episode-reduced aggregation]
\label{prop:aggregation}
Fix the current set of local records, their abstract keys, and the
deterministic pre-promotion checks. Apply the episode reduction in
Appendix~\ref{app:memco-evidence-counts} separately to each record to obtain
$n_i^+(p)$, $n_i^-(p)$, and $n_i^0(p)$, and then aggregate these counts by exact
abstract key as in Eq.~\ref{eq:promotion-count-aggregation}. The resulting
$(n^+,n^-,n^0)$, Wilson score, and promotion decision are invariant to the
processing order of this fixed input. Repeating an annotation already present
for the same record and episode does not change its episode-reduced
contribution.
\end{proposition}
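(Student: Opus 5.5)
The plan is to write the whole pipeline as a composition of functions that each depend only on \emph{sets} or \emph{multisets} of annotations, never on sequences. Order invariance and repetition invariance then both follow from commutativity and idempotence of the elementary operations involved.

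\textbf{Step 1: the per-episode reducer.} For a fixed record $p$ of agent $i$ and a fixed ledger key $\xi=(\operatorname{scope},\operatorname{episode})$, the ledger entry $\mathcal J_i(p)[\xi]$ is a collection of signed annotations together with optional stall markers. I would show that the reducer $\widetilde e_\xi(p)$ depends only on the \emph{set} of distinct labels present, $L_\xi\subseteq\{+,-,\mathrm{stall}\}$, and not on their order or multiplicity. The rules in Appendix~\ref{app:memco-evidence-counts} are stated purely as presence conditions:
\begin{itemize}
\item for ordinary relations, $-\in L_\xi$ gives $-$, otherwise $+\in L_\xi$ gives $+$;
\item for anti-patterns, $\mathrm{stall}\in L_\xi$ (or $+\in L_\xi$) gives $+$, with the analogous precedence;
\item $L_\xi=\emptyset$ gives no vote.
\end{itemize}
The stall indicator $z_\xi(p)=\mathbb I[\mathrm{stall}\in L_\xi]$ is likewise a presence test. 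Since $L_\xi$ is the image of the annotation multiset under the support map, it is unchanged by permuting annotations and by inserting an annotation already present for that record and episode. This already gives the repetition clause: the contribution of $\xi$ to $n_i^\omega(p)$ and $n_i^0(p)$ is unchanged. I expect this to be the main obstacle, because it requires reading the reducer as a genuine set function. If the implementation broke ties by, say, ``last annotation wins,'' invariance would fail. I would therefore make explicit the assumption, implicit in the word ``deterministic,'' that the precedence rule is a fixed total priority on labels, so that reduction equals taking the maximum under that priority. Max is commutative, associative and idempotent.

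\textbf{Step 2: the local counts.} $n_i^\omega(p)=\sum_{\xi}\mathbb I[\widetilde e_\xi(p)=\omega]$ is a sum over the \emph{set} of ledger keys. Each key is determined by the annotation's scope and episode, not by processing order, so the set of keys is independent of order. Finite sums of integers are permutation invariant, so the local counts are order invariant. The same argument applies to $n_i^0(p)$.

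\textbf{Step 3: aggregation and promotion.} By Eq.~\ref{eq:promotion-count-aggregation}, $n^\omega(\bar p)$ sums $n_i^\omega(p')$ over the fixed index set $\{(i,p'):\operatorname{Abstract}(p')=\bar p\}$. Membership in this set is determined by the exact key $\kappa_g$ of Eq.~\ref{eq:memco-abstract-key}, which is a deterministic function of the record fields alone, so again the sum is permutation invariant. The same holds for $n^0$ via Eq.~\ref{eq:memco-global-stall-aggregation}.

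Then $s$, $\rho^+$ and $W$ are deterministic functions of $(n^+,n^-)$ through Eq.~\ref{eq:wilson}, with fixed $z$. The pre-promotion checks are fixed by hypothesis. So the decision $\mathbb I[A_{\mathrm{str}}=A_{\mathrm{info}}=1,\ W\ge\lambda]$ is invariant. Because Algorithm~\ref{alg:promotion-merge} rebuilds the snapshot from an empty $\mathcal M^g$ and GlobalMerge keeps one record per key, the resulting snapshot is the same set of keyed records regardless of the order in which aggregates are inserted.
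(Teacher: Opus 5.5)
Your proposal is correct and follows essentially the same route as the paper's proof: the relation-aware reducer depends only on the \emph{set} of support, contradiction, and no-progress annotations for each record--episode key, which gives both order and repetition invariance. The subsequent aggregation by exact abstract key is a commutative, associative sum, and the Wilson score and fixed pre-promotion checks are deterministic functions of the aggregated counts; your version is more explicit, stating as an assumption that the precedence rule is presence-based (which the paper asserts without comment) and spelling out the sum over ledger keys and the rebuild-from-empty snapshot consequence.
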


\begin{proof}
For a fixed record and episode, the relation-aware reducer depends on the set
of support, contradiction, and no-progress annotations, not on their
processing order. Repeating an annotation that is already present therefore
does not change the reduced signed count or its diagnostic stall marker.
The subsequent aggregation in
Eq.~\ref{eq:promotion-count-aggregation} is a sum over the fixed
episode-reduced counts associated with each exact abstract key, and is
therefore associative and commutative. Since the abstract key and the
pre-promotion checks are also fixed, the Wilson score and promotion decision
are unchanged.
\end{proof}

\paragraph{Scope.}
Proposition~\ref{prop:aggregation} concerns a fixed set of current local
records. It does not make stale or out-of-order replacement safe in an
asynchronous implementation; a streaming realization must separately ensure
that the retained source state is current. The evaluated snapshot-rebuild
specialization is described in Appendix~\ref{app:memco-merge-example}.
Different intermediate snapshots may affect subsequent trajectories, so
order-dependent acquisition does not constitute fixed input evidence.

Adding previously absent contradictory evidence or a no-progress marker can
legitimately change an episode-reduced count. The repetition guarantee does not
apply to occurrence-level retrieval counters. Finally, neither exact-key
aggregation nor episode reduction establishes independence or identical
distribution of the aggregated evidence; the nominal statistical
interpretation of the Wilson bound therefore remains subject to the
assumptions stated in Appendix~\ref{app:memco-theory}.

\section{Memory Types and Local--Global Examples}
\label{sec:memco_module_examples}

This section complements the implementation details with a unified account of
what each memory component stores, what can be shared, and how the resulting
records are used. The four pattern types belong to $\mathcal P_i^l$; together
with $G_i^l$, $\mathcal E_i^l$, and $\mathcal R_i^l$, they constitute the local
memory in Eq.~\ref{eq:local-memory}. Global memory stores promoted abstract
records from the shareable components, not a duplicate of the entire local
memory.

\subsection{Type Definitions and Local--Global Correspondence}
\label{app:memco-type-correspondence}
\label{sec:abs_exam}

Table~\ref{tab:memco-memory-types} covers every component and pattern subtype.
The examples are schematic adaptations of the household and PDDL examples
used in this appendix, not new benchmark rollouts or literal serialized dumps.
Role-level examples assume that a concrete entity has a supported task-role
mapping; operational conditions are retained. ``Shareable'' in the sharing
column means eligible for consideration, not automatically admitted to global
memory. Concrete PDDL memory snapshots are shown separately in
Appendix~\ref{appendix:pddl_vis}.

\begin{table*}[!htbp]
\centering
\begingroup
\fontsize{8.5}{10.2}\selectfont
\setlength{\tabcolsep}{3pt}
\renewcommand{\arraystretch}{1.12}
\begin{tabularx}{\linewidth}{@{}>{\raggedright\arraybackslash}p{0.13\linewidth}>{\raggedright\arraybackslash}p{0.205\linewidth}>{\raggedright\arraybackslash}p{0.225\linewidth}>{\raggedright\arraybackslash}X>{\raggedright\arraybackslash}p{0.10\linewidth}@{}}
\toprule
\textbf{Component / type} & \textbf{Definition} & \textbf{Local example}
& \textbf{Possible global counterpart} & \textbf{Sharing}\\
\midrule
Graph $G_i^l$
& Paper-level state--action--feedback transition structure with transition
statistics; the serialized graph stores feedback on the corresponding
state--action--next-state transition, while completed episode identifiers are
retained at store level.
& Soap is dirty at the basin $\to$ clean soap with basin $\to$ progress $\to$ soap is clean.
& No shared graph. Separately extracted patterns or actionable relations may be considered.
& Local only.\\[5pt]
Workflow in $\mathcal P_i^l$
& Ordered procedure spanning distinct progress phases, with conditions and an outcome.
& Find a basin; clean the soap; place the clean soap in the vanity.
& For a cleanable target: find water source; clean target; place in goal destination. Outcome: stored clean target.
& Shareable\\[5pt]
Precondition in $\mathcal P_i^l$
& A temporal transition whose observed condition and action are associated with successful progress.
& While carrying dirty soap at a usable basin, cleaning the soap advances the cleaning phase.
& With a dirty target held at a usable cleaning tool, cleaning advances the same phase.
& Shareable\\[5pt]
Repair in $\mathcal P_i^l$
& Successful action that resolves a failure identified in the preceding observation.
& After an \texttt{invalid\_action} observation, \texttt{check valid actions} resolves the recorded failure.
& After \texttt{invalid\_action}, \texttt{check valid actions} restores progress. This instance-free record is unchanged by abstraction.
& Shareable\\[5pt]
Failure in $\mathcal P_i^l$
& Conditioned action--failure relation, usable to avoid repeating a failed action.
& In a recorded local state, \texttt{pickup b1} leads to \texttt{invalid\_action}.
& An eligible anti-pattern can share a conditioned no-progress relation.
The raw failure example itself remains local unless it meets the
type-specific eligibility checks.
& Shareable\\[5pt]
Actionable knowledge $\mathcal E_i^l$
& State--action--outcome relation that provides usable action or object--tool knowledge.
& In the applicable soap/basin state, \texttt{clean soap with basin} produces clean soap.
& Under the retained conditions, a compatible cleaning tool can clean a cleanable target. Bind both roles before use.
& Shareable\\[5pt]
Cases / reflections $\mathcal R_i^l$
& Concrete episode examples, failure diagnoses, and recovery context.
& The observation, failed \texttt{pickup b1}, diagnosis, and subsequent recovery in that episode.
& No shared copy of the case or reflection. A separately extracted repair/failure relation is a different record.
& Local only.\\
\bottomrule
\end{tabularx}
\endgroup
\caption{Memory definitions, schematic examples, and sharing scope.
The global column shows possible abstract records, not guaranteed promotions.
Every shared record remains subject to type-specific eligibility,
aggregated evidence, and the Wilson threshold. Applicability conditions and
current action binding are still required when a promoted record is used.}
\label{tab:memco-memory-types}
\end{table*}

\FloatBarrier

\subsection{From Local Records to Promoted Records}
\label{sec:merge_exam}

The global column in Table~\ref{tab:memco-memory-types} describes possible
abstract counterparts, not automatic promotions or interchangeable copies of
local records. In particular, a supported failure pattern remains a
conditioned failure relation rather than becoming a successful workflow.
Table~\ref{tab:memco-abstraction-example} gives the field-level abstraction;
Tables~\ref{tab:memco-positive-merge} and~\ref{tab:memco-negative-merge}
illustrate exact-key matching. Evidence aggregation and admission are specified
once in Algorithm~\ref{alg:promotion-merge}. Every admitted record must still
be compatible with the receiving agent's current state and action bindings.
\subsection{Memory Source, Record Type, and Support Role}
\label{sec:retrieval_exam}

Three distinctions are important. \emph{Source} specifies whether an item comes
from $\mathcal M_i^l$ or $\mathcal M^g$. \emph{Record type} specifies the stored
content, such as a workflow, repair, or actionable relation. \emph{Support role}
specifies its query-conditioned use in
$\mathcal H=\{\mathrm{ground},\mathrm{workflow},\mathrm{reflection}\}$.
These are not three alternative classifications of the memory banks.

In Table~\ref{tab:memco-retrieval-example}, a local actionable record supplies
grounding and a global workflow supplies action order. A local workflow can
also support procedural guidance, while an applicable global record can
support grounding after role binding. Local or promoted repair/failure records
may supply reflection support when their conditions are triggered; this does
not require sharing the concrete cases in $\mathcal R_i^l$. Selection and use
remain governed by the current state, task phase, goal, and admissible actions,
rather than by a fixed one-to-one assignment from local/global scope to a
support role. This separates reusable experience from the local context needed
to execute it without treating all cross-agent experience as directly
applicable.
\FloatBarrier
\section{Benchmark Domain Construction and Evaluation Scope}
\label{appendix:benchmark}

\subsection{ALFWorld Multi-Domain Environment}
\label{app:memco-alfworld-split}
To evaluate collaboration across environments, ALFWorld tasks are organized
into four functional scene domains: Kitchen, Living Room, Bedroom, and
Bathroom. We sample 100 training tasks from each domain for experience
acquisition. Following the official protocol, evaluation uses both the
\emph{Familiar Layout} (\texttt{valid\_seen}) and \emph{Held-out Layout}
(\texttt{valid\_unseen}) splits. The resulting training/Familiar/Held-out
counts are Kitchen ($100/20/20$), Living Room ($100/17/11$), Bedroom
($100/20/20$), and Bathroom ($100/20/19$).

Following the problem formulation, an \emph{unseen environment} is defined
with respect to the evaluated agent: it is an environment in which that agent
did not acquire experience before evaluation. This agent-specific notion and
the official held-out-layout split provide complementary axes of
generalization. An environment may be unseen to one agent while experience
from other agents is available through global memory; the Held-out Layout split
additionally changes layout familiarity.

\subsection{PDDL Planning and Domain Partitioning}
\label{app:memco-pddl-split}
PDDL tasks are organized into the four planning domains Gripper, Barman,
Blockworld, and Tyreworld. The source sets contain 20, 20, 10, and 10 tasks,
respectively, and evaluation uses a unified test set of 60 tasks. Local memory
is constructed independently for each source domain before cross-agent global
memory is formed.

\subsection{FEVER Domain Partitioning}
\label{app:memco-fever-split}
For FEVER, we construct two semantic source domains, Film/TV and Music, with
100 training examples in each domain. As noted in the AI Use Statement,
GPT-4o-mini is used only to assist the semantic domain labeling used for this
partition construction; it is not an evaluation-time memory component.
Evaluation uses 60 disjoint examples, 30 from each domain, with a balanced
distribution over SUPPORTS, REFUTES, and NEI labels. These partitions provide
distinct sources of interaction experience while keeping the verification task
and action interface unchanged.

\subsection{ScienceWorld Task-Family Partitioning}
\label{app:memco-scienceworld-split}
ScienceWorld tasks are organized by their official task families while
preserving the official train/test variation split. We sample up to 10 training
variations per task template for local experience acquisition and up to 2 test
variations for evaluation, yielding 60 held-out evaluation variations in the
merged test set. Each local domain is evaluated with the same local-plus-global
memory protocol.

\subsection{Acquisition, Evaluation, and Controlled Comparisons}
\label{app:memco-evaluation-scope}
During acquisition, agents interact with their designated source environments,
update their own local memories, and contribute selected shareable records to
global promotion. Global memory is constructed only from the completed local
stores. During evaluation, local and global memories are frozen: the evaluated
agent refreshes its query and retrieved support at every decision step, but the
test trajectory is not written back into either memory.

For each source-domain local memory, evaluation is performed on the common
merged test set together with the shared global memory. Consequently, the same
evaluated agent encounters both tasks aligned with its own acquisition domain
and tasks from environments in which it did not acquire local experience. This
operationalizes the agent-specific unseen-environment setting in the problem
formulation while allowing the shared global memory to supply experience
contributed by other agents. In ALFWorld, Familiar and Held-out additionally
measure whether the layout itself was seen in the benchmark training split.

The controlled ablations in Appendix~\ref{appendix:extended-results} reuse the
same frozen local-memory snapshot. Threshold sensitivity reconstructs only the
global memory from the same evidence, while the other component studies change
the evaluated memory scope, promotion rule, retrieval, or composition policy
without recollecting source experience.

\FloatBarrier
\section{\modelname{} Per-Step Prompt Template}
\label{app:memco-prompts}

The displayed \texttt{Local memory}, \texttt{Global memory}, and
\texttt{Failure memory} lines are rendering labels for the grounding,
workflow, and reflection fields of Eq.~\ref{eq:adaptive-context}, respectively;
they do not define three independently stored memory banks. Record provenance
is retained during composition, and an applicable global record may also
contribute to grounding after current-state role binding
(Appendix~\ref{sec:retrieval_exam}).
The displayed prompt also contains reference-case, trajectory, and insight
slots outside the Memory Decision Summary, but the formatter emits an optional
slot only when its list is nonempty. In the implemented \modelname payload,
\texttt{reference\_cases}, \texttt{insights}, \texttt{action\_constraints},
and \texttt{repair\_hints} are empty. The \modelname summary is supplied as at most
one \texttt{planner\_notes} entry after query-guided composition and
rendering. \texttt{execution\_patterns} is an optional wrapper slot: it is
empty for ALFWorld, PDDL, and FEVER, while in ScienceWorld it may contain at
most two score-improving fragments retrieved from committed local histories.
Separately, the task wrapper supplies benchmark-authored few-shot reference
cases from the task configuration and currently truncates that list to its
first case. These surrounding wrapper slots are outside the three-field
Memory Decision Summary in Eq.~\ref{eq:adaptive-context}.

\begin{tcolorbox}[skillbox,title={\modelname{} Per-Step Prompt Template}]
\small
\ttfamily
\textbf{System Prompt}\\
{[ENVIRONMENT-SPECIFIC SYSTEM PROMPT]}\\
For example, the system prompt specifies the legal action space and output format for the current benchmark: ALFWorld household actions, PDDL operators, FEVER Search/Lookup/Finish actions, or ScienceWorld valid commands.\\[6pt]

\textbf{User Prompt}\\
\#\# Successful Examples (Reference Cases; optional, task-configured)\\
Below are some examples of similar tasks that were successfully completed. Please use these as references to guide your thinking and approach to the current task:\\[2pt]
[FEW-SHOT REFERENCE CASES]\\[6pt]

\#\# Your Own Past Successes (Execution Patterns)\\
Here are examples of successful execution processes you've previously used on similar tasks. Pay special attention to the step-by-step procedures and strategies, especially when encountering obstacles:\\[2pt]
[RETRIEVED SUCCESSFUL TRAJECTORIES]\\[6pt]

\#\# Key Insights from Related Tasks (empty in the \modelname payload)\\
The following are insights gathered during the execution of similar tasks. You may refer to them during your task execution to improve problem-solving accuracy.\\[2pt]
[RETRIEVED RULES / INSIGHTS]\\
{[PLANNER NOTES: AT MOST ONE \modelname SUMMARY]}\\
{[ACTION CONSTRAINTS: EMPTY FOR \modelname]}\\
{[REPAIR HINTS: EMPTY FOR \modelname]}\\[6pt]

\#\# How to Use Your Intermediate Findings\\
{[ENVIRONMENT-SPECIFIC MEMORY-USAGE INSTRUCTIONS]}\\
For example, treat retrieved memories as planning hints rather than exact action sources; prefer currently valid actions; avoid copying stale object names, operator arguments, evidence labels, or unavailable commands; and do not repeat actions that failed or made no progress.\\[6pt]

\#\#\# \modelname{} Memory Decision Summary\\
Current phase: [PHASE\_LABEL]\\
Current state: [STATE\_SUMMARY]\\
Local memory: [LOCAL\_MEMORY]\\
Global memory: [GLOBAL\_MEMORY]\\
Failure memory: [FAILURE\_MEMORY]\\
Next priority: [NEXT\_PRIORITY]\\
Confidence / caveat: [CONFIDENCE\_CAVEAT]\\[6pt]

\#\# Your Turn: Take Action!\\
Use the above examples and insights as a foundation, and now work on the following task:\\[2pt]
[TASK DESCRIPTION / QUERY]\\[4pt]

[CURRENT OBSERVATION]\\
{[CURRENT VALID ACTIONS, IF PROVIDED]}\\
{[RECENT ACTION-OBSERVATION HISTORY]}\\[6pt]

Output exactly one valid next action.
\end{tcolorbox}

\FloatBarrier

\section{Cross-Setting Statistical Significance Analysis}
\label{appendix:statistical-significance}

We conduct a complementary non-parametric analysis to assess whether the performance improvements of \modelname{} are consistent across backbone and task settings. Following standard practice for comparing multiple learning methods across matched evaluation settings \cite{demvsar2006statistical}, each backbone--task configuration is treated as one evaluation block. The inputs are the method-wise mean results in Table~\ref{tab:memory-comparison-simplified}. Table~\ref{tab:cross-setting-ranks} summarizes relative standing, Table~\ref{tab:friedman-tests} tests the across-method omnibus null, and Table~\ref{tab:wilcoxon-holm-results} reports the paired comparisons against each baseline. These tables answer different questions and are not additional independent experimental runs.

\subsection{Matched Evaluation Blocks and Average Ranks}
\label{app:memco-statistical-blocks}
\paragraph{Evaluation blocks.}
The analysis contains $N_{\mathrm{blk}}=12$ matched configurations, corresponding to three
LLM backbones and four task settings: FEVER, PDDL, ALFWorld Familiar
Layout, and ALFWorld Held-out Layout. Six methods are compared in every block:
ICL, A-Mem, MemSkill, MemRL, G-Memory, and \modelname{}, giving $K_{\mathrm{meth}}=6$.

Accuracy and interaction steps are analyzed separately. This avoids treating
the two metrics obtained from the same evaluation as independent blocks.
For accuracy, larger values indicate better performance; for interaction
steps, smaller values indicate better performance. Within each block, the
best method receives rank 1, and tied methods receive their average rank.

\paragraph{Average Ranks.}
Let $\operatorname{rank}_{ij}^{(\mu)}$ denote the rank of method $j$ in block $i$ for metric
$\mu\in\{\mathrm{Acc},\mathrm{Steps}\}$. The average rank is

\begin{equation}
\overline{R}_j^{(\mu)}
=
\frac{1}{N_{\mathrm{blk}}}
\sum_{i=1}^{N_{\mathrm{blk}}} \operatorname{rank}_{ij}^{(\mu)}.
\end{equation}

Table~\ref{tab:cross-setting-ranks} reports the resulting average ranks.
\modelname{} obtains the best average rank for both accuracy and interaction
steps. We additionally report the average over all 24 metric-specific
slices as a descriptive summary; this combined value is not used as the
sample size of the inferential tests. In particular, the combined column is
the arithmetic mean of the two metric-specific average ranks; it is not a
new set of 24 independent evaluation blocks.

\begin{table}[h]
\centering
\caption{
Average ranks across the matched backbone--task configurations.
Lower values are better. The final column is descriptive only; statistical
tests are conducted separately for accuracy and steps.
}
\label{tab:cross-setting-ranks}
\small
\setlength{\tabcolsep}{5pt}
\begin{tabular}{lccc}
\toprule
\textbf{Method}
& \textbf{Accuracy}
& \textbf{Steps}
& \textbf{Combined} \\
\midrule
ICL
& 3.750 & 4.333 & 4.042 \\
A-Mem
& 5.333 & 3.833 & 4.583 \\
MemSkill
& 3.583 & 4.000 & 3.792 \\
MemRL
& 4.125 & 4.000 & 4.063 \\
G-Memory
& 2.833 & 3.083 & 2.958 \\
\textbf{\modelname{}}
& \textbf{1.375}
& \textbf{1.750}
& \textbf{1.563} \\
\bottomrule
\end{tabular}
\end{table}

\FloatBarrier
\subsection{Omnibus Tests Across Methods}
\label{app:memco-statistical-omnibus}
For each metric, the Friedman statistic is

\begin{equation}
\chi_{F,\mu}^{2}
=
\frac{12N_{\mathrm{blk}}}{K_{\mathrm{meth}}(K_{\mathrm{meth}}+1)}
\left[
\sum_{j=1}^{K_{\mathrm{meth}}}
\left(\overline{R}_j^{(\mu)}\right)^2
-
\frac{K_{\mathrm{meth}}(K_{\mathrm{meth}}+1)^2}{4}
\right].
\end{equation}

When tied ranks occur, we apply the standard correction

\begin{equation}
C_\mu
=
1-
\frac{
\sum_{i=1}^{N_{\mathrm{blk}}}\sum_g
\left(t_{ig}^{3}-t_{ig}\right)
}{
N_{\mathrm{blk}}(K_{\mathrm{meth}}^3-K_{\mathrm{meth}})
},
\qquad
\widetilde{\chi}_{F,\mu}^{2}
=
\frac{\chi_{F,\mu}^{2}}{C_\mu},
\end{equation}

where $t_{ig}$ denotes the number of tied methods in tie group $g$ of
block $i$. Accuracy contains three pairwise ties, resulting in
$C_{\mathrm{Acc}}=0.9929$. No ties occur for interaction steps, so
$C_{\mathrm{Steps}}=1$.

Because the classical Friedman statistic can be conservative, we use the
Iman--Davenport correction:

\begin{equation}
F_{F,\mu}
=
\frac{
(N_{\mathrm{blk}}-1)\widetilde{\chi}_{F,\mu}^{2}
}{
N_{\mathrm{blk}}(K_{\mathrm{meth}}-1)-\widetilde{\chi}_{F,\mu}^{2}
}.
\end{equation}

Under the omnibus null hypothesis, the Iman--Davenport statistic is
compared with an approximate $F$ reference distribution with $K_{\mathrm{meth}}-1=5$ and
$(K_{\mathrm{meth}}-1)(N_{\mathrm{blk}}-1)=55$ degrees of freedom.

As reported in Table~\ref{tab:friedman-tests}, the omnibus null hypothesis
is rejected for both metrics under this testing model. This omnibus result
indicates an across-method difference, but does not establish which individual
baseline differs from \modelname{}; those comparisons are reported next.

\begin{table}[!htbp]
\centering
\caption{
Friedman tests with the Iman--Davenport correction.
The reported Friedman statistics include the tie correction where
applicable.
}
\label{tab:friedman-tests}
\small
\setlength{\tabcolsep}{5pt}
\begin{tabular}{lcccc}
\toprule
\textbf{Metric}
& \boldmath$\widetilde{\chi}_F^2$
& \boldmath$F_F$
& \textbf{df}
& \boldmath$p$ \\
\midrule
Accuracy
& 30.324
& 11.240
& $(5,55)$
& $1.71\times10^{-7}$ \\
Steps
& 15.571
& 3.855
& $(5,55)$
& $4.58\times10^{-3}$ \\
\bottomrule
\end{tabular}
\end{table}

\FloatBarrier
\subsection{Paired Comparisons, Effect Sizes, and Interpretation}
\label{app:memco-statistical-paired}
After rejecting the omnibus null hypotheses, we compare \modelname{} with
each baseline using exact two-sided Wilcoxon signed-rank tests. Unlike a
comparison based only on average ranks, this paired test uses the direction
and magnitude of the performance differences within each matched
configuration.

For baseline $b$, we define the accuracy difference as

\begin{equation}
\Delta_{i,b}^{\mathrm{Acc}}
=
\mathrm{Acc}_{i,\modelname{}}
-
\mathrm{Acc}_{i,b},
\end{equation}

and the interaction-step difference as

\begin{equation}
\Delta_{i,b}^{\mathrm{Steps}}
=
\mathrm{Steps}_{i,b}
-
\mathrm{Steps}_{i,\modelname{}}.
\end{equation}

Thus, positive differences favor \modelname{} for both metrics. Zero
differences are excluded from the signed-rank statistic. The remaining
absolute differences are ranked, with average ranks assigned to ties.
Let $T_{\mathrm{W}}^{+}$ and $T_{\mathrm{W}}^{-}$ be the sums of ranks associated with positive and
negative differences, respectively. The two-sided test statistic is

\begin{equation}
T_{\mathrm{W}}=\min\{T_{\mathrm{W}}^{+},T_{\mathrm{W}}^{-}\}.
\end{equation}

We calculate the conditional two-sided $p$-values by enumerating all sign
assignments to the observed nonzero absolute-difference ranks under the paired
sign-symmetry null. Here $T_{\mathrm{W}}$ denotes the signed-rank statistic;
$W(\bar p)$ remains the Wilson promotion score. The enumeration is exact for this sign-assignment
model, including averaged tied ranks; it does not itself establish that the
evaluation blocks are independent.

Because the analysis contains five control-versus-baseline comparisons for
each of two metrics, we jointly correct all $M=10$ post-hoc $p$-values using
Holm's step-down procedure \cite{holm1979simple}. Given the ordered raw
$p$-values
$p_{(1)}\leq\cdots\leq p_{(M)}$, the adjusted values are

\begin{equation}
\widetilde{p}_{(i)}
=
\max_{1\leq j\leq i}
\left\{
\min\left[1,(M-j+1)p_{(j)}\right]
\right\}.
\end{equation}

This correction controls the family-wise error rate across all reported
method--metric comparisons.

\paragraph{Effect Size.}
In addition to statistical significance, we report the matched-pairs
rank-biserial correlation

\begin{equation}
r_{\mathrm{rb}}
=
\frac{T_{\mathrm{W}}^{+}-T_{\mathrm{W}}^{-}}
{T_{\mathrm{W}}^{+}+T_{\mathrm{W}}^{-}}.
\end{equation}

Positive values indicate an advantage for \modelname{}, with values closer to
1 indicating greater cross-configuration consistency.

Table~\ref{tab:wilcoxon-holm-results} reports the complete results.
The Win/Loss/Tie column counts the configurations in which \modelname{} is
better, worse, or tied with the corresponding baseline. For example, the
G-Memory comparisons give 10 wins and 2 losses for each metric, with positive
rank-biserial effects and Holm-adjusted $p=0.0146$. The win counts summarize
direction, whereas the effect sizes and signed-rank statistics also depend on
the ranked magnitudes of the paired differences.

\begin{table*}[!htbp]
\centering
\caption{
Exact two-sided Wilcoxon signed-rank comparisons between \modelname{} and
each baseline across the 12 matched backbone--task configurations.
Holm correction is applied jointly to all ten comparisons.
All differences are defined such that positive values favor \modelname{}.
}
\label{tab:wilcoxon-holm-results}
\small
\setlength{\tabcolsep}{5pt}
\begin{tabular}{llccccc}
\toprule
\textbf{Metric}
& \textbf{Baseline}
& \textbf{Win/Loss/Tie}
& \boldmath$T_{\mathrm{W}}$
& \boldmath$r_{\mathrm{rb}}$
& \textbf{Exact $p$}
& \textbf{Holm-adjusted $p$} \\
\midrule

\multirow{5}{*}{Accuracy}
& ICL
& 11/1/0
& 1
& 0.974
& $9.77\times10^{-4}$
& $8.79\times10^{-3}$ \\

& A-Mem
& 12/0/0
& 0
& 1.000
& $4.88\times10^{-4}$
& $4.88\times10^{-3}$ \\

& MemSkill
& 11/1/0
& 2
& 0.949
& $1.46\times10^{-3}$
& $1.03\times10^{-2}$ \\

& MemRL
& 11/0/1
& 0
& 1.000
& $9.77\times10^{-4}$
& $8.79\times10^{-3}$ \\

& G-Memory
& 10/2/0
& 6
& 0.846
& $6.84\times10^{-3}$
& $1.46\times10^{-2}$ \\

\midrule

\multirow{5}{*}{Steps}
& ICL
& 10/2/0
& 4
& 0.897
& $3.42\times10^{-3}$
& $1.46\times10^{-2}$ \\

& A-Mem
& 11/1/0
& 3
& 0.923
& $2.44\times10^{-3}$
& $1.46\times10^{-2}$ \\

& MemSkill
& 11/1/0
& 3
& 0.923
& $2.44\times10^{-3}$
& $1.46\times10^{-2}$ \\

& MemRL
& 9/3/0
& 6
& 0.846
& $6.84\times10^{-3}$
& $1.46\times10^{-2}$ \\

& G-Memory
& 10/2/0
& 5
& 0.872
& $4.88\times10^{-3}$
& $1.46\times10^{-2}$ \\

\bottomrule
\end{tabular}
\end{table*}

\paragraph{Summary.}
Under the stated block-level testing assumptions, all ten paired comparisons
have Holm-adjusted $p<0.05$.
 In particular, the improvements
over G-Memory remain significant for both accuracy
($p_{\mathrm{Holm}}=0.0146$) and interaction steps
($p_{\mathrm{Holm}}=0.0146$). The corresponding rank-biserial correlations
are 0.846 and 0.872, respectively, indicating that the differences are not
only statistically significant but also directionally consistent across the
evaluated configurations.

The unit of analysis in these tests is the matched backbone--task
configuration. The two metrics are tested separately, and the analysis does
not treat the three repeated runs or the two metrics from one configuration
as additional independent evaluation blocks. Accordingly, these results
should be interpreted as evidence of consistent performance differences
across the evaluated configurations rather than as a claim about all
possible models and environments. Configurations reuse backbones and task
families, so dependence between blocks may limit that inferential
interpretation; the reported $p$-values are conditional on the block-level
assumptions and are not a substitute for independent task- or seed-level
replication. The average ranks and paired differences remain descriptive
summaries of the evaluated configurations.

\FloatBarrier

\section{Case Study: Adaptive Memory Support and Cross-Environment Generalization}

\label{appendix:case-study}

\begin{figure*}[t]
  \centering
  \includegraphics[width=1\linewidth]{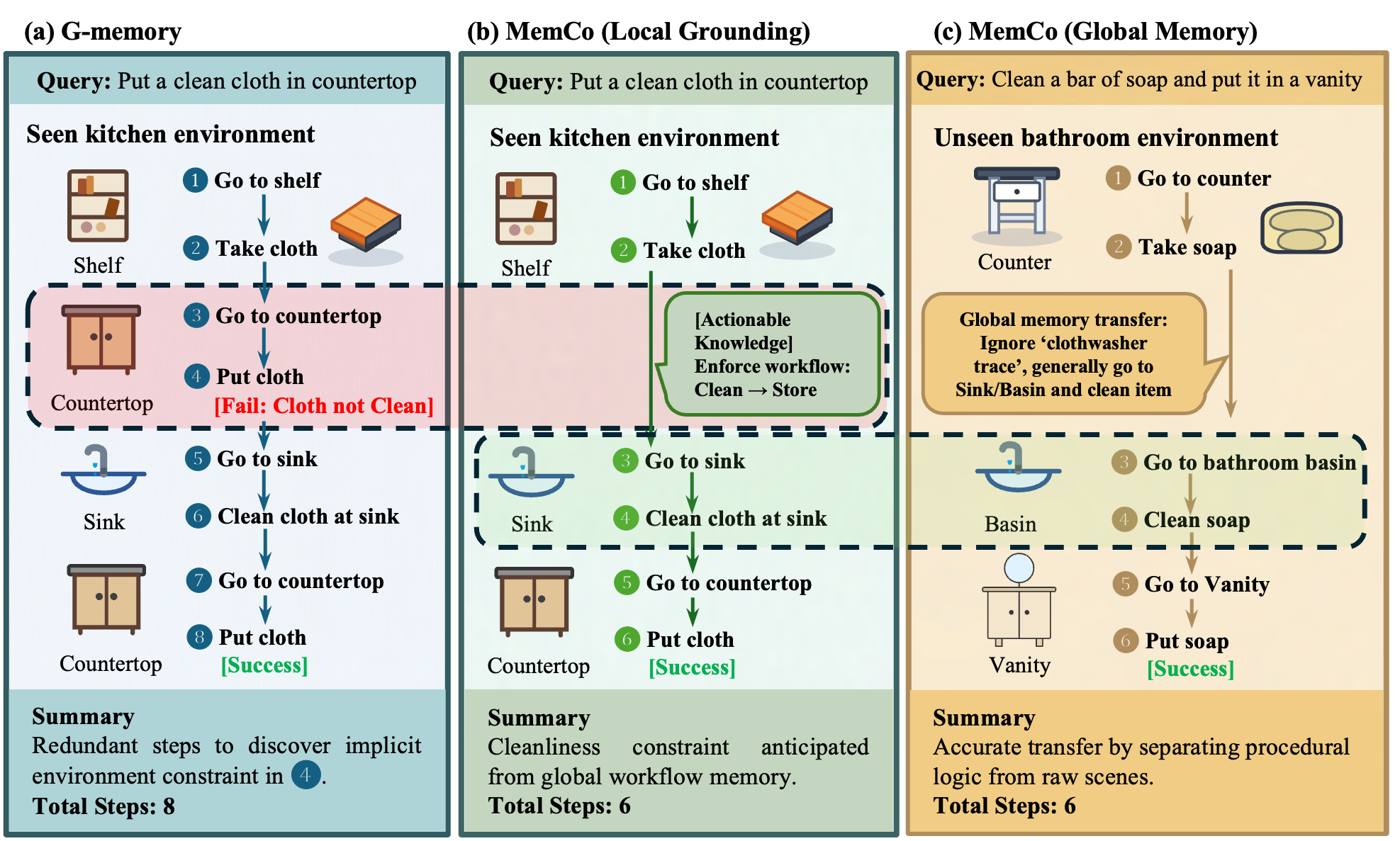}
  \caption{\textbf{Case study showcasing adaptive memory support and cross-environment generalization.} (a) In the shown G-Memory trajectory, the agent attempts placement before satisfying the cleanliness requirement. (b) \modelname{} uses local actionable knowledge to avoid the invalid placement in the same environment. (c) In an unseen environment, \modelname{} combines a reusable global workflow with current-environment grounding.}
  \label{fig:casestudy}
  \vspace{-4mm}
\end{figure*}

To qualitatively evaluate \modelname, we analyze the trajectories across heterogeneous settings to investigate how local situational grounding and global procedural workflows interact dynamically online.

\paragraph{Exploration Efficiency.}
As shown in Fig.~\ref{fig:casestudy}(a) and~\ref{fig:casestudy}(b), when facing the same task ``Put a clean cloth in countertop'', the displayed G-Memory trajectory attempts to place the cloth before satisfying the cleanliness requirement, resulting in redundant interactions. In contrast, \modelname{} retrieves decision-relevant support before placement: the reusable clean-before-store ordering is combined with local actionable knowledge for the current state. The resulting trajectory avoids the invalid move and completes the task with fewer interaction steps.

\paragraph{Unseen-Environment Generalization.}
The comparison between Fig.~\ref{fig:casestudy}(b) and~\ref{fig:casestudy}(c)
illustrates how shared experience supports an agent in an unseen environment.
The task changes from cleaning a cloth in a kitchen to ``Clean a bar of soap
and put it in a vanity'' in a bathroom. A kitchen-specific instruction can
be irrelevant in the bathroom, whereas the clean-before-place ordering can
remain useful under the corresponding applicability conditions.

As illustrated in Fig.~\ref{fig:casestudy}(c), query-guided selection and
composition exclude incompatible kitchen-specific guidance, including the
irrelevant ``clothwasher trace''. The shared workflow supplies the action
order; the current observation and available actions ground its roles in the
bathroom basin and vanity, with applicable local evidence providing further
context. This decision-time grounding does not require writing new
environment-specific records into the frozen evaluation memory. The shown
trajectory completes the task in 6 steps, illustrating how shared procedural
guidance can be transferred to an environment unseen by the evaluated agent
while current-environment grounding determines how that guidance is executed.

\section{Visualizations of PDDL Memory Structures}
\label{appendix:pddl_vis}

To present a clear and concrete view of our memory repository, we visualize the actual global and local memory structures generated during the PDDL symbolic planning tasks. Because these structures are very wide, each visualization and its caption are rotated together and placed on a standalone figure page to use the page height and improve readability. 

\subsection{Global Memory Structure}
\label{app:memco-global-visualization}
Figure~\ref{fig:pddl_global} shows the global memory snapshot assembled
from contributions in gripper, blockworld, barman, and tyreworld. The sharing
interface in Algorithm~\ref{alg:promotion-merge} receives selected reusable
records and their evidence rather than complete local trajectories or graphs.
The visualization contains promoted workflows and precondition relations from
different planning domains, illustrating how recurrent local structures are
represented in the shared global bank. The stored global records remain
conditional and are checked against the current planning state before use.

\begin{figure*}[p]
\centering
\rotatebox{90}{%
\begin{minipage}{0.90\textheight}
\centering
\includegraphics[width=\linewidth,keepaspectratio]{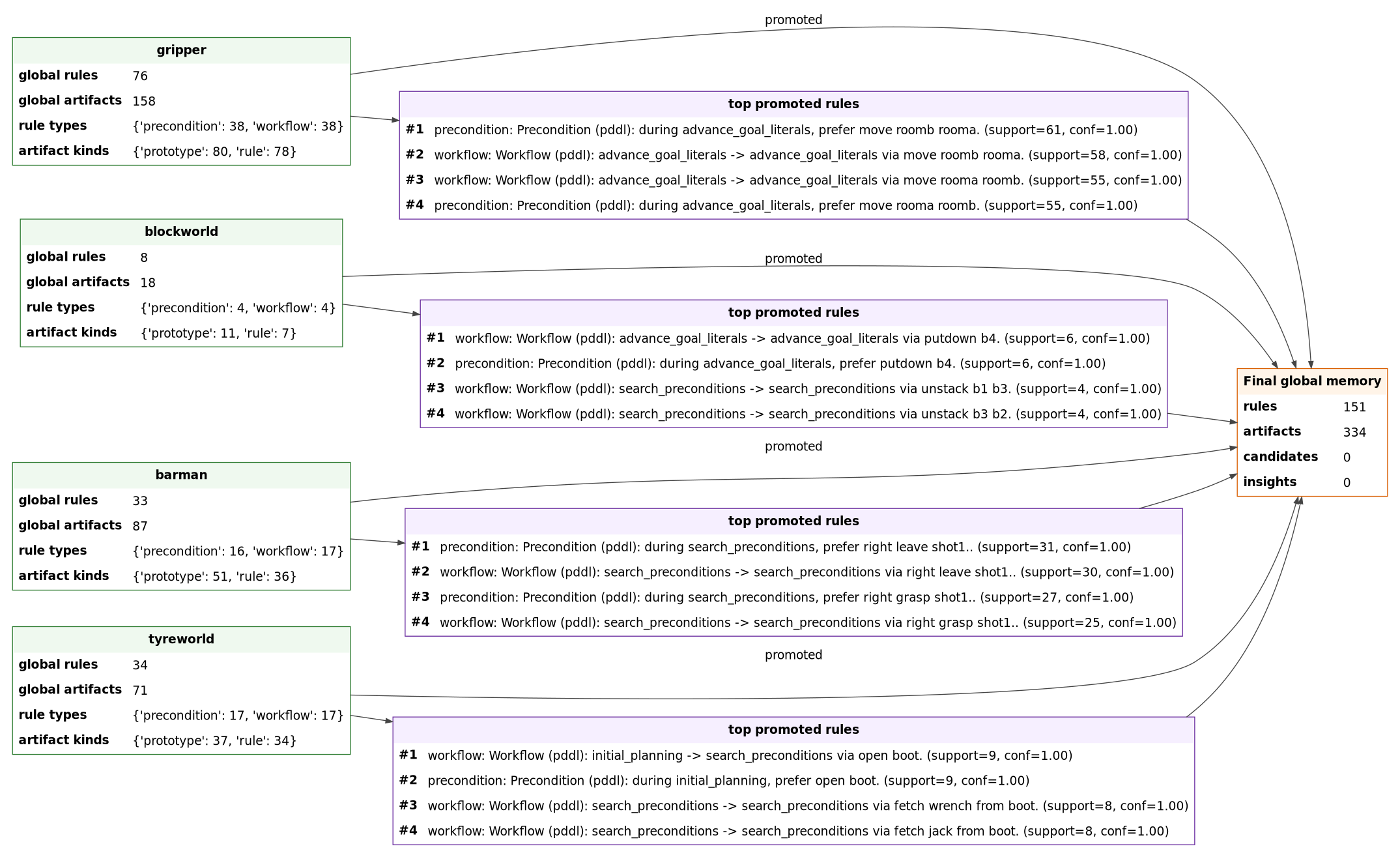}
\caption{\textbf{Visualization of the final PDDL global memory structure.}
The centralized server aggregates top-promoted rules and preconditions across
heterogeneous planning domains into a structured global bank.}
\label{fig:pddl_global}
\end{minipage}%
}
\end{figure*}
\clearpage
\subsection{Local Memory Graphs}
\label{app:memco-local-visualization}
Figures~\ref{fig:pddl_local_tyre} through \ref{fig:pddl_local_block}
display local-memory snapshots across the four PDDL domains, including state
and interaction structure, outcome-annotated records, and sample artifacts.
The connected experience and concrete cases remain local; selected patterns
and actionable relations may be shared through the separate sharing interface.

Figure~\ref{fig:pddl_local_block} provides examples of the stored local
relations, including goal-advancing conditions, failure-triggered repairs, and
failure-avoidance guidance. These entries illustrate the complementary roles
of structured patterns, actionable knowledge, and concrete local context. The
visualization labels reflect the stored instances; the paper-level memory
components remain the $G$--$\mathcal P$--$\mathcal E$--$\mathcal R$
decomposition in Eq.~\ref{eq:local-memory}.

\begin{figure*}[p]
\centering
\rotatebox{90}{%
\begin{minipage}{0.90\textheight}
\centering
\includegraphics[width=\linewidth,keepaspectratio]{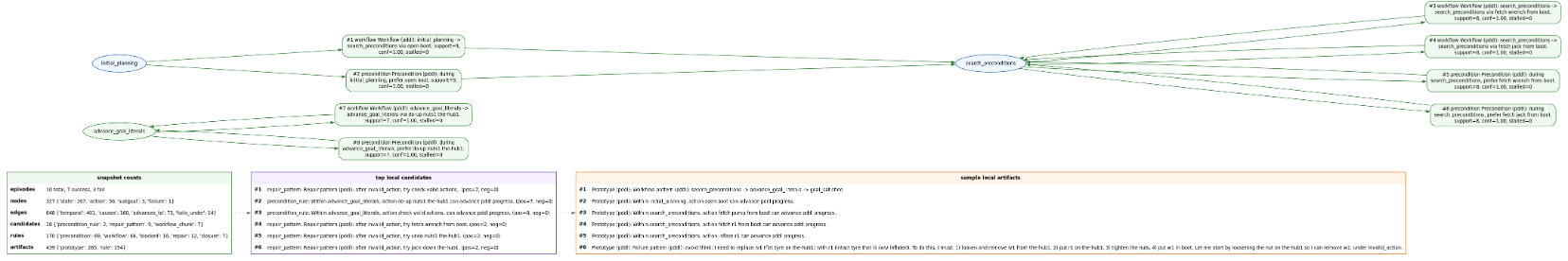}
\caption{\textbf{Local memory structure for the tyreworld domain.} The graph captures real-time planning states and local records for vehicle repair configurations.}
\label{fig:pddl_local_tyre}
\end{minipage}%
}
\end{figure*}
\clearpage

\begin{figure*}[p]
\centering
\rotatebox{90}{%
\begin{minipage}{0.90\textheight}
\centering
\includegraphics[width=\linewidth,keepaspectratio]{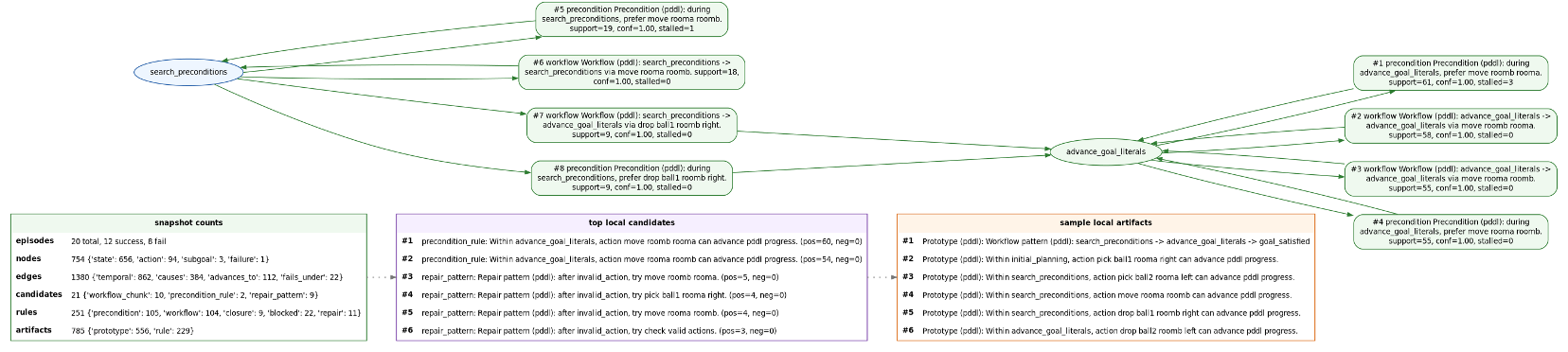}
\caption{\textbf{Local memory structure for the gripper domain.} The graph visualizes execution trajectories and goal-advancing action links for robotic manipulation tasks.}
\label{fig:pddl_local_grip}
\end{minipage}%
}
\end{figure*}
\clearpage
\begin{figure*}[p]
\centering
\rotatebox{90}{%
\begin{minipage}{0.90\textheight}
\centering
\includegraphics[width=\linewidth,keepaspectratio]{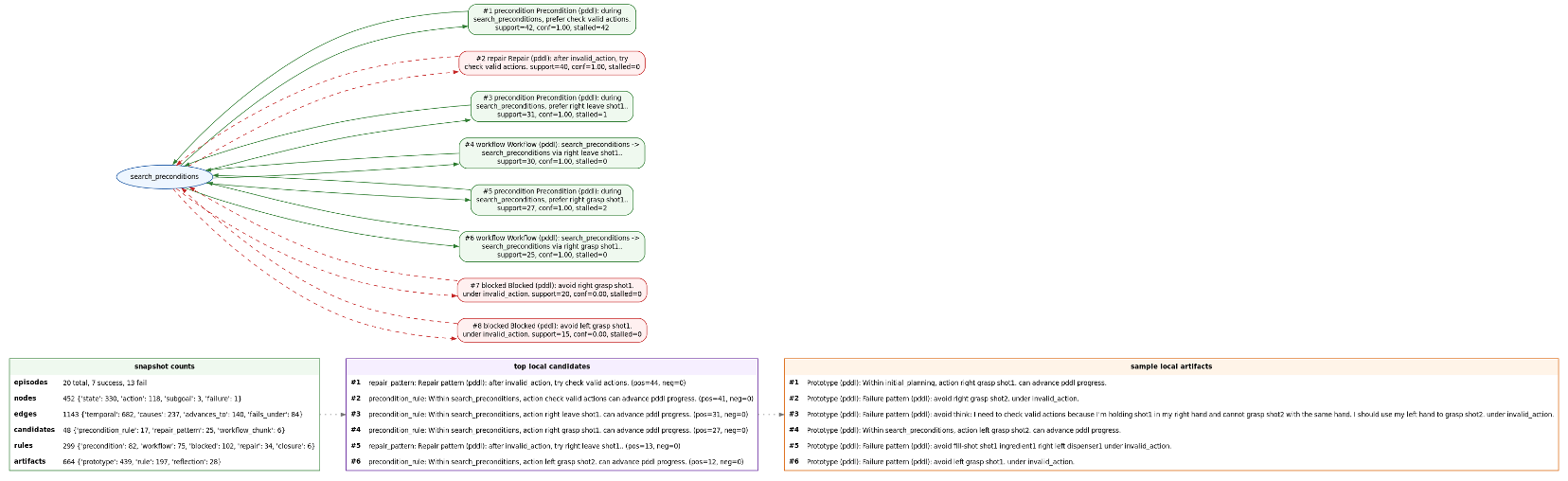}
\caption{\textbf{Local memory structure for the barman domain.} The snapshot logs successful action combinations and filters out invalid interaction paths in a dense action space.}
\label{fig:pddl_local_barman}
\end{minipage}%
}
\end{figure*}
\clearpage

\begin{figure*}[p]
\centering
\rotatebox{90}{%
\begin{minipage}{0.90\textheight}
\centering
\includegraphics[width=\linewidth,keepaspectratio]{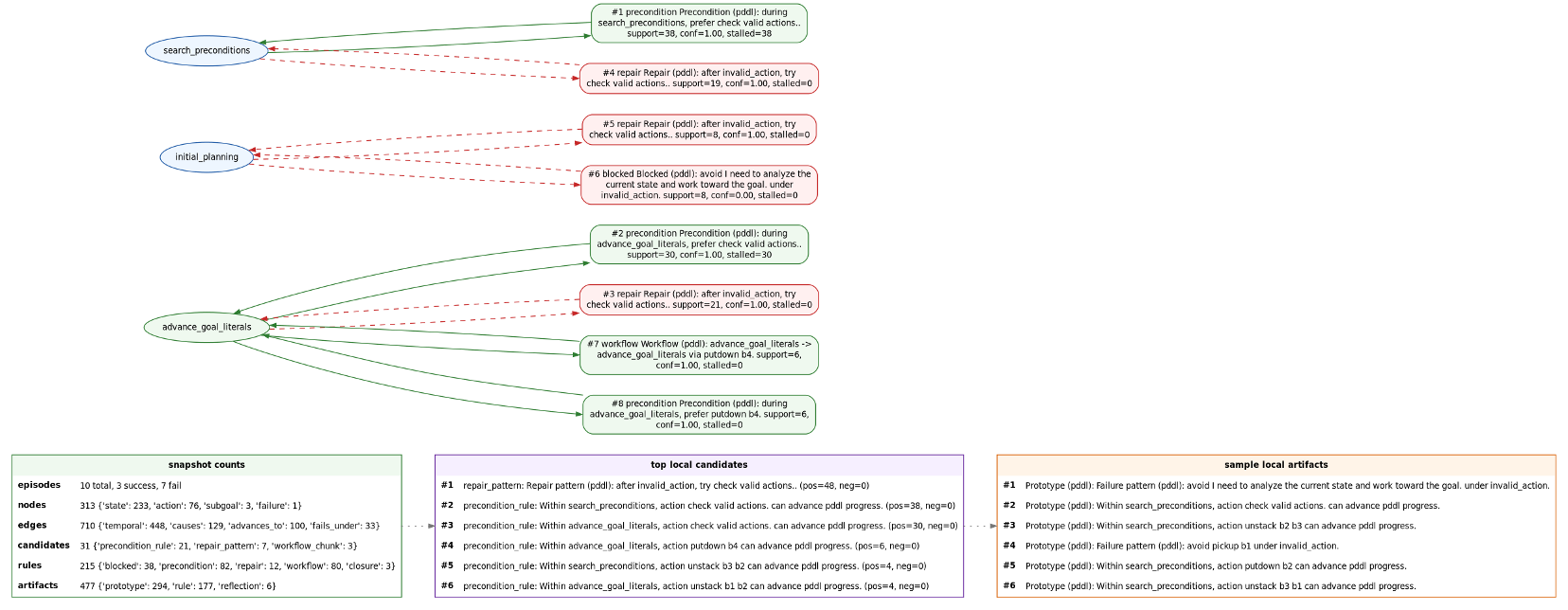}
\caption{\textbf{Local memory structure for the blockworld domain.} The network represents temporal sequence steps and block rearrangement heuristics.}
\label{fig:pddl_local_block}
\end{minipage}%
}
\end{figure*}
\clearpage

\FloatBarrier
\stopcontents[memcoappendix]

\end{document}